\definecolor{darkred}{rgb}{0.55, 0, 0}
\definecolor{cb_red}{RGB}{213,94,0}
\definecolor{cb_blue}{RGB}{0,114,178}
\definecolor{cb_yellow}{RGB}{240,228,66}
\definecolor{cb_gray}{RGB}{204,204,204}
\definecolor{cb_orange}{RGB}{230,159,0}
\definecolor{cb_skyblue}{RGB}{86,180,233}
\definecolor{cb_green}{RGB}{0,158,115}
\definecolor{cb_purple}{RGB}{204,121,167}
\theoremstyle{plain}
\newtheorem{theorem}{Theorem}[section]
\newtheorem{lemma}[theorem]{Lemma}
\theoremstyle{definition}
\newtheorem{definition}[theorem]{Definition}
\newtheorem{assumption}[theorem]{Assumption}
\theoremstyle{remark}
\setlist[enumerate,1]{leftmargin=0.6cm}
\setlist[itemize,1]{leftmargin=0.4cm}
\renewcommand{\paragraph}{%
  \@startsection{paragraph}{4}%
  {\z@}{0ex}{-1em}%
  {\normalfont\normalsize\bfseries}%
}
\newcommand{\circled}[1]{\ifcase#1\relax\or\ding{172}\or\ding{173}\or\ding{174}\or\ding{175}\or\ding{176}\or\ding{177}\or\ding{178}\or\ding{179}\or\ding{180}\or\ding{181}\else\textbf{Error!}\fi
}
\newcommand{\Dl}{\mathrm{D}^-\!}
\newcommand{\Dr}{\mathrm{D}^+\!}
\newcommand{\E}{\mathbb{E}}
\newcommand{\PP}{\mathbb{P}}
\newcommand{\Sph}{\mathbb{S}}
\newcommand{\cX}{\mathcal{X}}
\newcommand{\cD}{\mathcal{D}}
\newcommand{\cO}{\mathcal{O}}
\newcommand{\cA}{\mathcal{A}}
\newcommand{\cH}{\mathcal{H}}
\newcommand{\cQ}{\mathcal{Q}}
\newcommand{\cL}{\mathcal{L}}
\newcommand{\cY}{\mathcal{Y}}
\newcommand{\cU}{\mathcal{U}}
\newcommand{\cC}{\mathcal{C}}
\newcommand{\cP}{\mathcal{P}}
\newcommand{\Htot}{H_{\mathrm{tot}}}
\newcommand{\vtheta}{{\bm{\theta}}}
\newcommand{\vq}{\bm{q}}
\newcommand{\vv}{\bm{v}}
\newcommand{\vc}{\bm{c}}
\newcommand{\vy}{\bm{y}}
\newcommand{\vX}{\bm{X}}
\newcommand{\onec}[1]{\mathbbm{1}_{\{#1\}}}
\newcommand{\vh}{\bm{h}}
\newcommand{\mQ}{\bm{Q}}
\newcommand{\mW}{\bm{W}}
\newcommand{\pthres}{P_{\mathrm{thres}}}
\newcommand{\fthres}{f_{\mathrm{thres}}}
\newcommand{\rthres}{r_{\mathrm{thres}}}
\newcommand{\alphatarget}{\alpha_{\mathrm{target}}}
\newcommand{\nfail}{N_{\mathrm{fail}}}
\newcommand{\ce}{\mathrm{CE}}
\newcommand{\norm}[1]{\|#1\|}
\newcommand{\abs}[1]{\lvert #1 \rvert}
\newcommand{\set}[1]{\{#1\}}
\newcommand{\normtwo}[1]{\norm{#1}_2}
\lstdefinelanguage{prompt}{
    basicstyle=\ttfamily,
    breaklines=true,
    columns=fullflexible,
    mathescape=false,
    escapeinside={(@}{@)},
    breakindent=0pt,
    postbreak=\mbox{\tiny\textcolor{darkred}{$\hookrightarrow$}\space},
}
\colorlet{punct}{red!60!black}
\definecolor{background}{HTML}{EEEEEE}
\definecolor{delim}{RGB}{20,105,176}
\colorlet{numb}{magenta!60!black}
\lstdefinelanguage{json}{
    basicstyle=\ttfamily,
    stepnumber=1,
    numbersep=8pt,
    showstringspaces=false,
    breaklines=true,
    postbreak=\mbox{\tiny\textcolor{darkred}{$\hookrightarrow$}\space},
    literate=
     *{0}{{{\color{numb}0}}}{1}
      {1}{{{\color{numb}1}}}{1}
      {2}{{{\color{numb}2}}}{1}
      {3}{{{\color{numb}3}}}{1}
      {4}{{{\color{numb}4}}}{1}
      {5}{{{\color{numb}5}}}{1}
      {6}{{{\color{numb}6}}}{1}
      {7}{{{\color{numb}7}}}{1}
      {8}{{{\color{numb}8}}}{1}
      {9}{{{\color{numb}9}}}{1}
      {:}{{{\color{punct}{:}}}}{1}
      {,}{{{\color{punct}{,}}}}{1}
      {\{}{{{\color{delim}{\{}}}}{1}
      {\}}{{{\color{delim}{\}}}}}{1}
      {[}{{{\color{delim}{[}}}}{1}
      {]}{{{\color{delim}{]}}}}{1}
	  {\{input\}}{{{\color{blue}{\{input\}}}}}{6}
}
\title{\huge Data Mixing Can Induce Phase Transitions in Knowledge Acquisition}
\date{}
\author{
Xinran Gu\textsuperscript{1,3}\thanks{Equal contribution}~~~~~Kaifeng Lyu\textsuperscript{1}\footnotemark[1]\hspace{0.3em}\thanks{Work done while at the Simons Institute for the Theory of Computing, UC Berkeley.}~~~~~Jiazheng Li\textsuperscript{2,3}~~~~~Jingzhao Zhang\textsuperscript{1,3}\thanks{Corresponding author}\\
\textsuperscript{1}Institute for Interdisciplinary Information Sciences, Tsinghua University\\
\textsuperscript{2}College of AI, Tsinghua University \quad \textsuperscript{3}Shanghai Qizhi Institute\\
\texttt{guxr24@mails.tsinghua.edu.cn, klyu@tsinghua.edu.cn}\\
\texttt{foreverlasting1202@outlook.com, jingzhaoz@tsinghua.edu.cn}\\
}
\begin{document}

\maketitle
\begin{abstract}
Large Language Models (LLMs) are typically trained on data mixtures: most data come from web scrapes, while a small portion is curated from high-quality sources with dense domain-specific knowledge.
In this paper, we show that when training LLMs on such data mixtures, knowledge acquisition from knowledge-dense datasets—unlike training exclusively on knowledge-dense data~\citep{allen2024physics}—\emph{does not} always follow a smooth scaling law but can exhibit phase transitions with respect to the mixing ratio and model size. Through controlled experiments on a synthetic biography dataset mixed with web-scraped data, we demonstrate that:
(1)~as we increase the model size to a critical value, the model suddenly transitions from memorizing very few to most of the biographies;
(2)~below a critical mixing ratio, the model memorizes almost nothing even with extensive training, but beyond this threshold, it rapidly memorizes more biographies. 
We attribute these phase transitions to a capacity allocation phenomenon: a model with bounded capacity must act like a knapsack problem solver to minimize the overall test loss, and the optimal allocation across datasets can change discontinuously as the model size or mixing ratio varies.
We formalize this intuition in an information-theoretic framework and reveal that these phase transitions are predictable, with the critical mixing ratio following a power-law relationship with the model size. Our findings highlight a concrete case where a good mixing recipe for large models may not be optimal for small models, and vice versa.


\end{abstract}

\section{Introduction}




The pre-training data of large language models (LLMs) can be categorized into two major types.
The first type consists of large-scale corpora scraped from the web~\citep{raffel2020c4,penedo2024fineweb,li2024datacomp}, often spanning billions to trillions of tokens across diverse topics and styles.
Due to the scale, it is inherently hard to ensure the information density of the dataset and its relevance to downstream tasks. Hence, a second type of data, smaller-scale datasets curated from high-quality sources, is incorporated. This type of data usually contains very dense knowledge on tasks or domains with significant practical value. For example, Wikipedia and Stack Exchange cover a wide range of world knowledge. OpenWebMath~\citep{paster2024openwebmath} and StarCoder~\citep{li2023starcoder,kocetkov2022stack} provide valuable data for improving model performance on mathematics and coding tasks.



The second type of data, which we refer to as knowledge-dense data, typically accounts for only a small fraction of the entire corpus. In the pre-training data of a recently released model family, OLMo~2~\citep{olmo20252olmo2furious}, over 95\% of the tokens are from web data, and only less than 5\% are from knowledge-dense data. The proportion of each individual knowledge-dense dataset is even smaller, {\it e.g.}, only less than 0.1\% of the tokens are from Wikipedia. This naturally raises a question: {\it How much knowledge can LLMs really acquire from this small amount of knowledge-dense data?}


If LLMs were \emph{exclusively} trained on knowledge-dense data without any data mixing,
the amount of knowledge acquired after sufficient training should scale linearly with \emph{model size}. Although quantifying knowledge in natural data is non-trivial, \citet{allen2024physics} sidestep this issue and provide strong empirical evidence for this linear scaling law through extensive pre-training experiments on synthetically generated biographies. 
In their setting, the amount of knowledge stored by a model is quantified by evaluating its memorization of the biographies using information-theoretic metrics.
Similar linear scaling laws are also observed in memorizing Wikidata fact triples by \citet{lu2024scaling}, and analyzed theoretically by \citet{nichani2025understanding}. Based on these results, one might naively expect a similar linear relationship between model size and acquired knowledge when knowledge-dense data is mixed with web data.

However, in this paper, we show that \emph{the linear scaling no longer holds under data mixing}. We consider the setup where a knowledge-dense dataset focused on a single domain constitutes a small fraction~$r$ of the pre-training corpus—referred to as the {\it mixing ratio}—and the rest is large-scale web text (see~\Cref{sec:general-setup} for our implementation of data mixing).
We demonstrate via a quantitative study that knowledge acquisition from the knowledge-dense data exhibits a more intricate behavior with notable phase transitions with respect to the mixing ratio and model size.







More specifically, we study factual knowledge acquisition. We follow the approach of~\citet{allen2024physics} to
 curate a synthetic dataset of biographies, where each individual's information is embedded into natural text descriptions using diverse templates.
 Due to the uniform data format and content of this dataset, we can quantify how much knowledge the model has stored simply by counting the number of memorized biographies. We then mix this synthetic biography dataset with large-scale web corpus FineWeb-Edu~\citep{penedo2024fineweb} or the Pile~\citep{gao2020pile} to create the pre-training mixture. We pre-train or continually pre-train Pythia models~\citep{biderman2023pythia} ranging from 14M to 6.9B parameters on these mixtures.


While setting $r$ closer to $1$ will make the model learn more from the knowledge-dense data, in practice, $r$ is typically set to a small value either because the knowledge-dense data has limited amount or increasing $r$ may hurt the model's capabilities acquired from other domains.
Therefore, the essence of our study is to understand whether models can still memorize a decent number of biographies for relatively small $r$. Our experiments reveal two interesting findings (\Cref{sec:phase_tran}):


\begin{figure}[t]
\vspace{-0.2in}
\begin{center}
    \begin{minipage}{0.38\textwidth}
        \centering
        \includegraphics[width=0.9\textwidth]{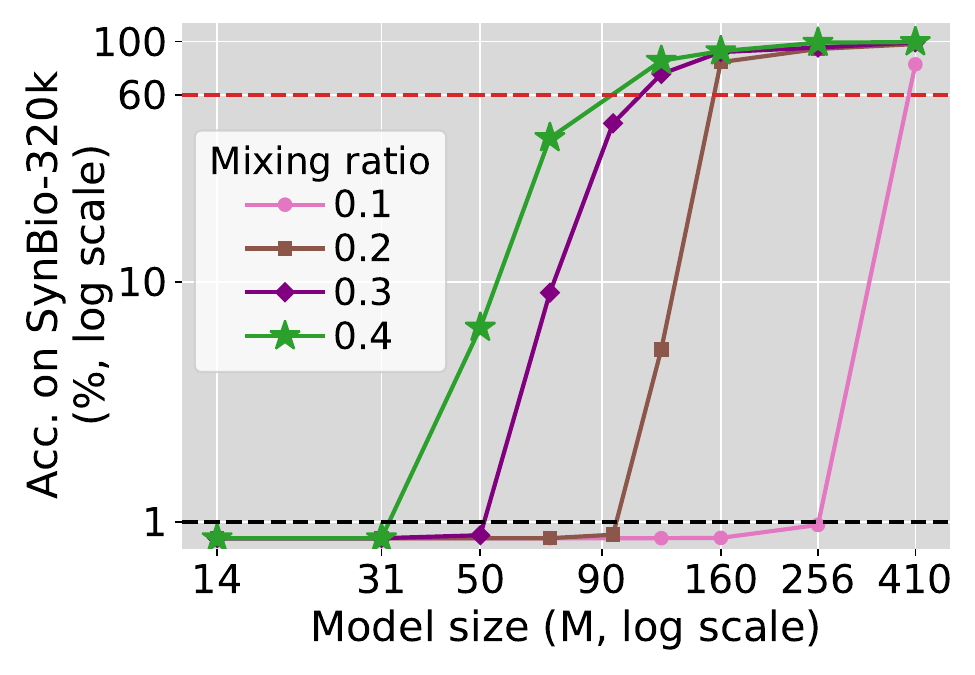}
        \vspace{-0.05in}
        \caption{\small Phase transition in model size. For each mixing ratio, as model size increases, accuracy initially remains zero. Once model size surpasses some threshold, accuracy rapidly grows to over 60\%.}
        \label{fig:model-phase-tran}
    \end{minipage}
\hspace{0.02\textwidth}
    \begin{minipage}{0.55\textwidth}
        \centering
        \subfigure[\small 70M models.]{
            \includegraphics[width=0.45\textwidth]{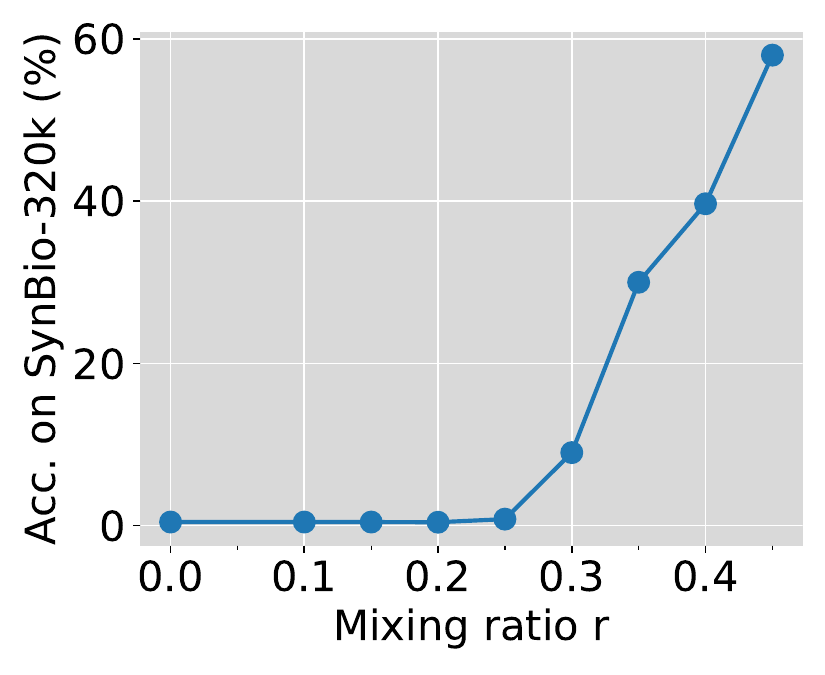}
            \label{fig:70m-r-trans}
        }
        \subfigure[\small 410M models.]{
    \includegraphics[width=0.45\textwidth]{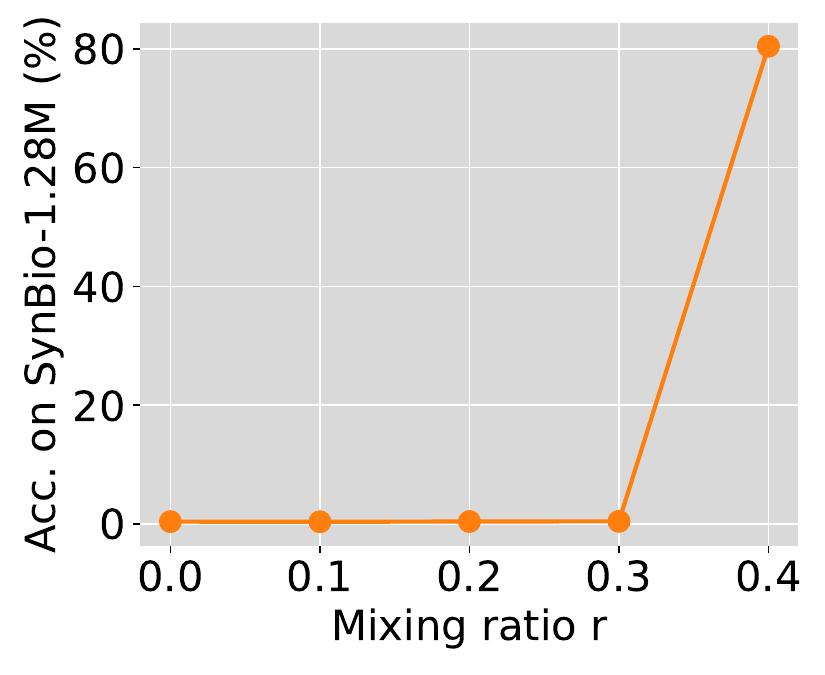}
            \label{fig:410m-r-trans}
        }
        \vspace{-0.05in}
        \caption{\small{Phase transition in mixing ratio. For each model size, as mixing ratio $r$ increases, accuracy initially remains zero. Only when $r$ exceeds some threshold does accuracy quickly improve.}}
        \label{fig:phase_tran_in_r}
    \end{minipage}
\end{center}
\vspace{-0.2in}
\end{figure}


\paragraph{Finding 1: Phase Transition in Model Size (\Cref{fig:model-phase-tran}).} Fixing the mixing ratio $r$ and varying the model size $M$, we observe that when $M$ is smaller than a critical model size $M_{\mathrm{thres}}$, the number of memorized biographies can be nearly zero. Only when $M>M_{\mathrm{thres}}$, the model {\it suddenly} memorizes most biographies. Moreover, the threshold $M_{\mathrm{thres}}$ is higher for smaller $r$.

\paragraph{Finding 2: Phase Transition in Mixing Ratio (\Cref{fig:phase_tran_in_r,fig:larger_model}).} When varying the mixing ratio $r$ while keeping the model size $M$ fixed, we find that below a critical mixing ratio $r_{\mathrm{thres}}$, the model memorizes almost nothing even after significantly longer training, during which each biography appears hundreds of times or more (\Cref{fig:410m-traj,fig:70m-law-traj}). But when $r>r_{\mathrm{thres}}$, the number of memorized biographies grows rapidly with $r$.
We further find that as we gradually decrease $r$, the number of steps needed to memorize a fixed number of biographies initially grows linearly with $1/r$ (\Cref{fig:70m-law}), but soon becomes exponential and even superexponential (\Cref{fig:70m-log}), making it impossible or practically infeasible for the model to memorize a non-trivial number of biographies.

In~\Cref{fig:phase-tran-add}, we further show that the observed phase transitions are not limited to discrete metrics like accuracy, but also persist in validation loss, a continuous metric.

\paragraph{Theoretical Analysis.} In~\Cref{sec:theory}, we attribute the observed phase transitions to a capacity allocation phenomenon: a model with bounded capacity must act like a knapsack problem solver to minimize the overall test loss, and the optimal allocation across datasets can change discontinuously as the model size or mixing ratio varies. To formalize this intuition, we model a sufficiently trained LLM as the best model that minimizes the test loss under a fixed capacity constraint $M$.
We develop an information-theoretic framework and show that, when trained on a mixture of knowledge-dense and web-scraped data, the model should allocate its capacity across the two datasets based on their respective ``marginal values''---that is, the reduction in test loss achieved by assigning one additional unit of capacity to that dataset. We rigorously prove that only when the mixing ratio $r$ or the model size $M$ is above a certain threshold does the knowledge-dense dataset become worth learning, thus leading to the observed phase transitions.
Assuming that the optimal test loss on web-scraped data follows a power law in model size,
we further show that these phase transitions are in fact predictable,
with the critical mixing ratio following a power-law relationship with the model size. Empirically, we validate this power-law relationship on both synthetic biographies and a set of real-world knowledge extracted from Wikipedia (\Cref{sec:implications}).

\paragraph{Strategies to Enhance Knowledge Acquisition Under Low Mixing Ratios (\Cref{sec:mitigation}).} Inspired by our theory, we propose two strategies to enhance knowledge acquisition at low mixing ratios: (1) randomly subsampling the knowledge-dense dataset; (2) rephrasing knowledge into more compact forms and augmenting the original dataset with the rephrased versions. The key idea is to increase the ``marginal value'' of the knowledge-dense dataset by increasing the exposure frequency of each single fact. We validate on both synthetic and real-world Wikipedia biographies that these strategies help models memorize significantly more biographies while preserving models' general capability.

\paragraph{Takeaways.} The key takeaways of our paper are as follows:
\begin{enumerate}
    \item The mixing ratio should be set with care for different model sizes: mixing in knowledge-dense datasets with small mixing ratios can offer no benefit at all, especially when training small LMs.
    \item Naively measuring the performance of small models on a small data domain may provide little to no predictive signal on how well larger models perform, revealing a potential limitation of using small proxy models for data curation, as also evidenced by~\citet{kang2024autoscale,jiang2024adaptive,ye2024datamixinglaws,magnusson2025datadecide,mizrahi2025languagebetr}.
    \item Slightly improving the ``marginal value'' of knowledge-dense data can offer a large gain in performance, as evidenced by our proposed strategies.
\end{enumerate}

\section{Experimental Setup}




\paragraph{The SynBio Dataset.} We follow ~\citet{allenzhu2024physicslanguagemodels32} to create a synthetic biography dataset, with each individual characterized by five attributes: birth date, birth city, university, major, and employer. For each individual, the value of each attribute is randomly and independently sampled from a predefined domain. These (name, attribute, value) triplets are then converted into natural text using sentence templates. For instance, (Gracie Tessa Howell, birth city, St. Louis, MO) is converted into ``Gracie Tessa Howell's birthplace is St. Louis, MO.'' Following~\citep{allenzhu2024physicslanguagemodels32}, every time the model encounters a biography, the five sentences are randomly shuffled, and a new sentence template is selected for each attribute from a set of five possible templates. We denote the dataset containing $N$ biographies as SynBio-$N$. See~\Cref{sec:synbio-detail} for full details.
\paragraph{Evaluation.} Denote a knowledge triplet (name, attribute, value) as $(\bm{n}, \bm{a}, \bm{v})$ and let $\abs{\vv}$ represent the number of tokens in $\vv$. For evaluation, the model is prompted with the sentence prefix containing $\bm{n}$ and $\bm{a}$ and is tasked to generate $\abs{\vv}$ tokens via greedy decoding. We then check whether the output exactly matches $\bm{v}$. For example, given the triplet (Gracie Tessa Howell, birth city, St. Louis, MO), the prompt ``Gracie Tessa Howell’s birthplace is'' is provided. We say the model has memorized the fact if it generates ``St. Louis, MO.'' We report the accuracy averaged over all individuals, attributes, and templates in the main text and defer the detailed results to~\Cref{sec:synbio-performance-detail}.

\paragraph{Training Setup.} Our experiments use the Pythia architecture~\citep{biderman2023pythia}, with model sizes ranging from 14M to 6.9B. The default setup involves pre-training from scratch on a mixture of FineWeb-Edu and SynBio. Since FineWeb-Edu is large ($>$1T tokens) and SynBio is small ($<$1B tokens), our typical training runs involve the model seeing SynBio for multiple epochs but FineWeb-Edu for less than one epoch. This approach reflects real-world practices to rephrase knowledge-dense data multiple times~\citep{hao2025reformulation,team2025kimi}, since models struggle to acquire knowledge within a single pass~\citep{huang2024demystifying}. For instance, in a 32B-token run with the mixing ratio for SynBio-320k set as 0.1, the model passes SynBio $\sim100$ times. We also study the continual pre-training setup in~\Cref{sec:mitigation,sec:larger-model}. Full experimental details are provided in~\Cref{sec:expdetail}.


\section{Phase Transitions of Knowledge Acquisition within Data Mixtures}\label{sec:phase_tran}


\subsection{Phase Transition in Model Size}

We first investigate how knowledge acquisition is affected by model size given the data mixture.  For each $r\in\set{0.1, 0.2, 0.3, 0.4}$, we train models with sizes from 14M to 410M on the mixture of FineWeb-Edu and SynBio-320k for a sufficiently long horizon of 32B tokens, which is approximately four times
the compute-optimal training tokens for 410M models according to \citet{hoffmann2022an}. As shown in~\Cref{fig:model-phase-tran}, as the model size increases, accuracy on SynBio initially remains near zero. Once the model size surpasses some threshold,  accuracy rapidly grows to above 60\%.  The transition is consistently sharp across different mixing ratios while larger $r$ leads to a smaller critical point.

\subsection{Phase Transition in Mixing Ratio}



\begin{figure}[t]
\vspace{-0.2in}
\begin{center}
\subfigure[\small Train until acc. 60\% or a total of 256B tokens are passed.]{
    \includegraphics[width=0.31 \textwidth]{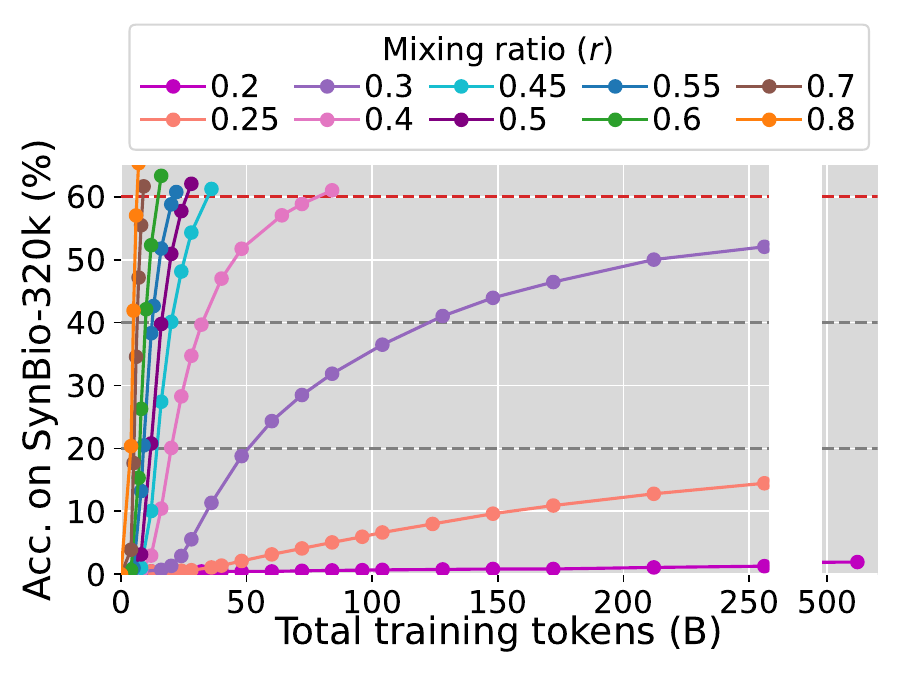}\label{fig:70m-law-traj}
    }
\hspace{0.01\textwidth}
\subfigure[\small{Required training steps to achieve target accuracy v.s $1/r$.} ]{
    \includegraphics[width=0.3 \textwidth]{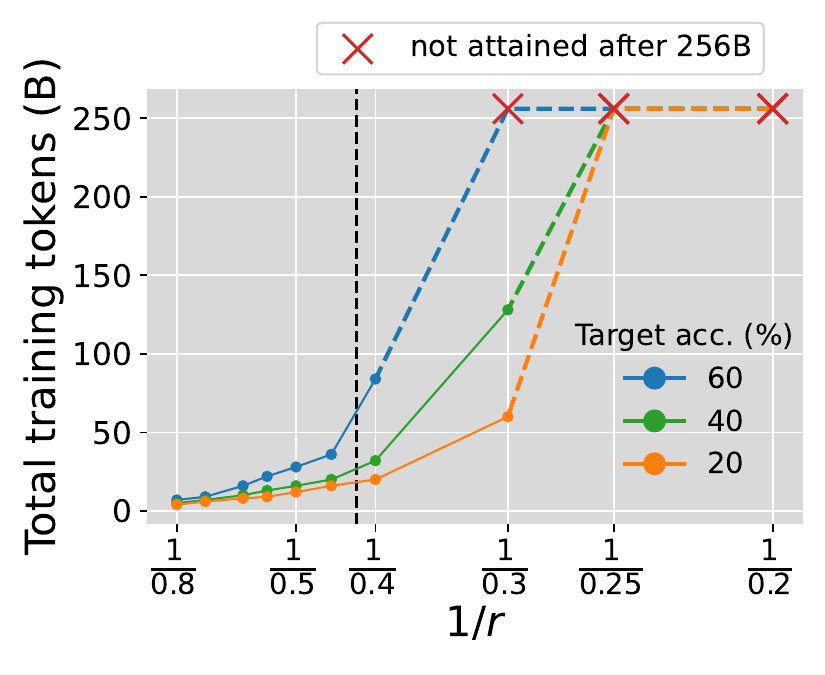}\label{fig:70m-law}
    }
\hspace{0.01\textwidth}
\subfigure[\small Fitting required training steps to attain 40\% accuracy against $1/r$.]{
    \includegraphics[width=0.3 \textwidth]{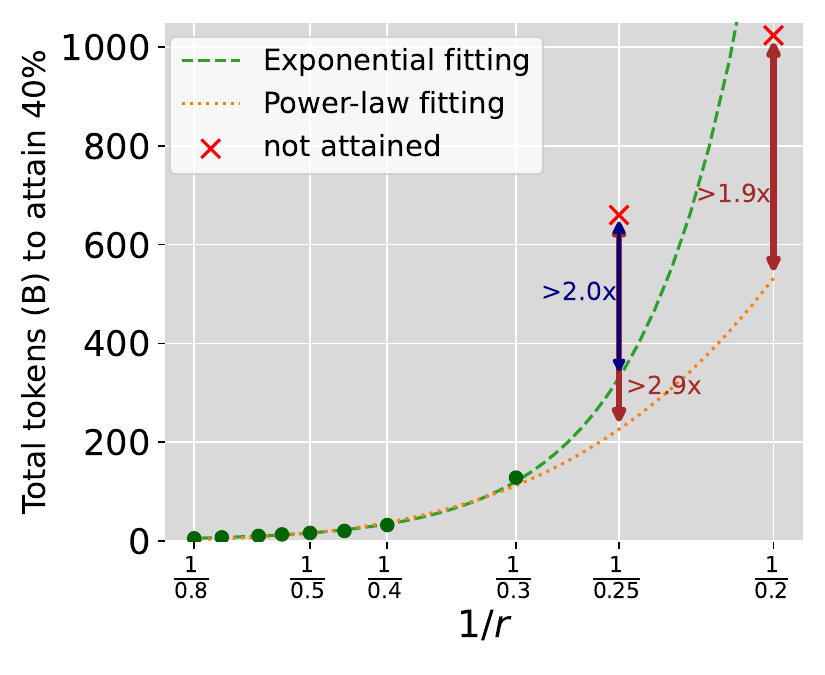}\label{fig:70m-log}
    }
\end{center}
\vspace{-0.1in}
 \caption{\small Training longer barely helps for low mixing ratios, with the required training steps to reach a target accuracy grow exponentially or even superexponentially with $1/r$. We train 70M models on the mixture of FineWeb-Edu and SynBio-320k with $r$ ranging from 0.2 to 0.8.}
\vspace{-0.15in}
\end{figure}


We now study how knowledge acquisition under data mixing scenario is affected by mixing ratios.

\paragraph{Performance on knowledge-dense data undergoes a phase transition as mixing ratio increases.} We begin by training models of the same size with different mixing ratios $r$. Specifically, we train 70M models on the mixture of FineWeb-Edu and SynBio-320K, varying $r$ from 0.1 to 0.45 (stepsize 0.05), and 410M models on the mixture of FineWeb-Edu and SynBio-1.28M, varying $r$ from 0.1 to 0.4 (stepsize 0.1). 
All models are trained for a total of 32B tokens. As shown in ~\Cref{fig:70m-r-trans}, for 70M models, as $r$ increases from 0.1 to 0.25, its accuracy on SynBio remains near zero. Only when $r>0.3$ does the accuracy begin to steadily improve. In~\Cref{fig:410m-r-trans}, the accuracy for 410M models exhibit similar trends where it remains near zero for $r\leq 0.3$ and suddenly attains 80\% when $r$ grows to 0.4. In~\Cref{fig:larger_model}, we replicate the experiments on Pythia 2.8B and 6.9B models to show that similar phase transition in mixing ratio persists for larger models. In~\Cref{table:random-seeds}, we report the mean and standard deviation of accuracy for experiments in~\Cref{fig:70m-r-trans}.

\paragraph{Training longer barely helps for low mixing ratios.} Given the observed phase transition, one may raise the following counter-argument: if models are trained for a sufficiently long horizon—such that even a small mixing ratio~$r$ would eventually result in each biography being encountered hundreds or even thousands of times—then the phase transition might no longer exist.
To test this counter-argument, we extend the training horizon for $r = 0.2$ to 512B tokens for the 70M and 410M models by 16 and 4 times respectively. Under this extended training, each biography appears $\sim3000$ times for the 70M model and $\sim200$ times for the 410M model. As shown in~\Cref{fig:70m-law-traj,fig:410m-traj}, the accuracy on SynBio remains near zero even after such extensions.

\paragraph{Required training steps increase exponentially or even superexponentially with $1/r$.} To further refute this counter-argument, we quantify how the required training steps to reach a target accuracy, denoted as $T$, scales with $1/r$. Specifically, we train 70M models with $r$ ranging from 0.2 to 0.8. For each mixing $r$, we evaluate 20 training horizons, approximately evenly spaced on a logarithmic scale with a factor of 1.2 ranging from 0 to 256B tokens. Training continues until the model reaches 60\% accuracy or exhausts 256B tokens. As shown in~\Cref{fig:70m-law-traj,fig:70m-law}, when $r$ decreases from 0.8, $T$ initially increase linearly with $1/r$ for $r>0.4$ and quickly deviates from the linear trend for $r<0.4$. 

We further fit a scaling law the required training steps to reach 40\% accuracy against $1/r$, modeling $T$ as a power-law or exponential function of $1/r$. Specifically, we fit $T$ against $1/r$ for $r\geq 0.3$ and examine whether the extrapolation can predict $T$ for smaller $r$. As shown in In~\Cref{fig:70m-log}, the actual $T$ is more than 2.9 times the power-law prediction for $r = 0.25$, and more than 1.9 times for $r = 0.2$. Moreover, the actual $T$ for $r=0.25$ is even more than twice the exponential prediction. These significant deviations suggest exponential or even superexponential growth of $T$ with respect to $1/r$. See~\Cref{sec:fitting detail} for the detailed fitting process.

We also conduct ablation studies on hyperparameters in~\Cref{sec:ablation}.

\vspace{-0.1in}
\subsection{Phase Transitions on Reasoning Tasks}\label{sec:reasoning}

\begin{figure}[t]
\vspace{-0.2in}
    \centering
\begin{minipage}{0.28\textwidth}
    \includegraphics[width=0.99\textwidth]{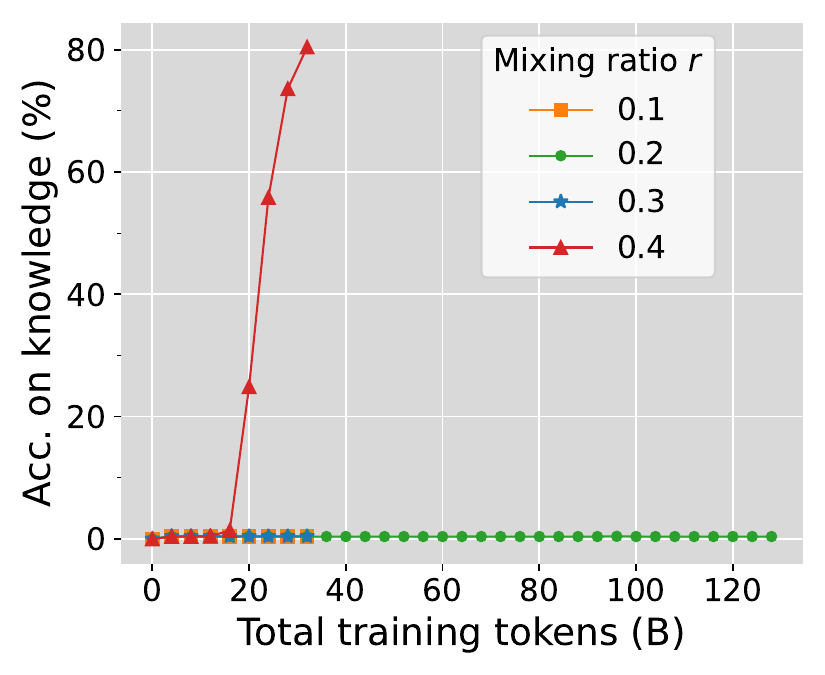}
    \caption{\small For 410M models trained on FineWeb-Edu + SynBio-1.28M, acc. for $r=0.2$ remains near zero even with 4x more training.}\label{fig:410m-traj}
\end{minipage}
\hspace{0.02\textwidth}
\begin{minipage}{0.65\textwidth}
\subfigure[\small Phase transition in model size.]{
    \includegraphics[width=0.45\textwidth]{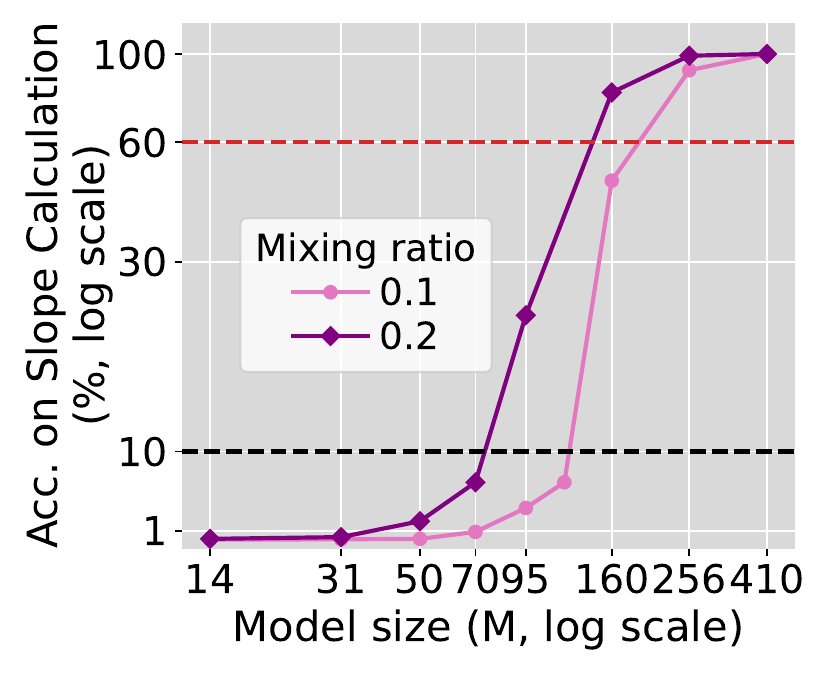}
    \label{fig:slope_model}
}
\subfigure[\small Phase transition in $r$.]{
    \includegraphics[width=0.45\textwidth]{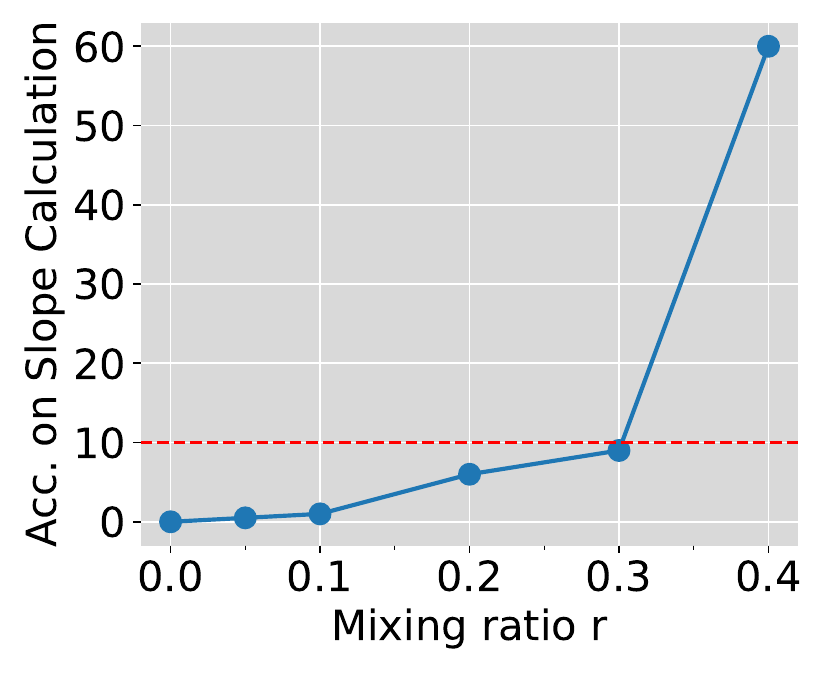}
    \label{fig:slope_r}
}
\vspace{-0.1in}
\caption{\small Similar phase transitions for the slope calculation subtask persist when we mix the modified OpenWebMath with FineWeb-Edu. The model size for (b) is 70M.}\label{fig:slope}
\end{minipage}
\vspace{-0.15in}
\end{figure}

\vspace{-0.05in}
In this subsection, we show that the phase transition phenomenon is not limited to factual knowledge, but also extends to datasets related to reasoning. Such datasets are often multi-task in practice. For example, OpenWebMath~\citep{paster2024openwebmath} covers diverse math topics. We show that phase transitions can occur for each single subtask within this dataset.
Inspired by~\citet{ruis2024procedural}, we consider the slope calculation task between two points $(x_1, y_1)$ and $(x_2, y_2)$. To explicitly control the frequency and format of the slope calculation examples, we replace all the documents containing the word ``slope'' in OpenWebMath with our clean and high-quality slope calculation demonstrations. We then mix the modified OpenWebMath with FineWeb-Edu and train Pythia models on this mixture from scratch. Similar to the setup of SynBio, every time the model sees a slope calculation example, we uniformly sample $x_1, y_1, x_2, y_2$ from $\set{0, 1, \cdots, 99}$ (ensuring $x_1\neq x_2$), and apply randomly chosen question-answer templates. For evaluation, we randomly generate 1k questions for slope calculation and check if the model produces the correct final answer. Results in~\Cref{fig:slope} show similar phase transitions as factual knowledge acquisition (see details in~\Cref{sec:slope-detail}). \Cref{sec:add-exp-reasoning} presents further discussions and experiments on another reasoning task with larger input space.

\section{Theoretical Analysis}\label{sec:theory}

\vspace{-0.05in}

In this section, we take an information-theoretic view to explain the observed phase transitions. The key challenge in developing a theory is that training LLMs can involve a lot of tricks, making it hard to identify the most important factors in inducing the phase transitions. In our paper, we consider an ideal case where the model is sufficiently trained, allowing us to focus on the key factor—model capacity—and abstract away all other complexities. 

\vspace{-0.1in}
\subsection{High-Level Intuition}

\vspace{-0.05in}
We model a sufficiently trained language model with capacity $M$ as an \emph{optimal bounded-capacity learner}, which minimizes test loss as much as possible under the capacity constraint $M$. The high-level intuition can be framed as a fractional knapsack problem (see~\Cref{fig:knapsack} for an illustration). 

When training solely on knowledge-dense data, where each fact appears with equal probability, the optimal learner seeks to store as much knowledge as possible within its capacity. As a result, the total amount of memorized knowledge scales proportionally with the model’s capacity $M$ (\Cref{sec:warmup}).

However, the situation changes when the knowledge-dense data is mixed with web-scraped data. In this case, the optimal learner should allocate its capacity across the two datasets based on their respective ``marginal values''—that is, the reduction in test loss resulting from assigning one additional unit of capacity to a dataset. Only when $r$ or $M$ exceeds a certain threshold does the knowledge-dense data become worth learning.

\begin{figure}[t]
\vspace{-0.15in}
    \centering
    \includegraphics[width=1.0\textwidth]{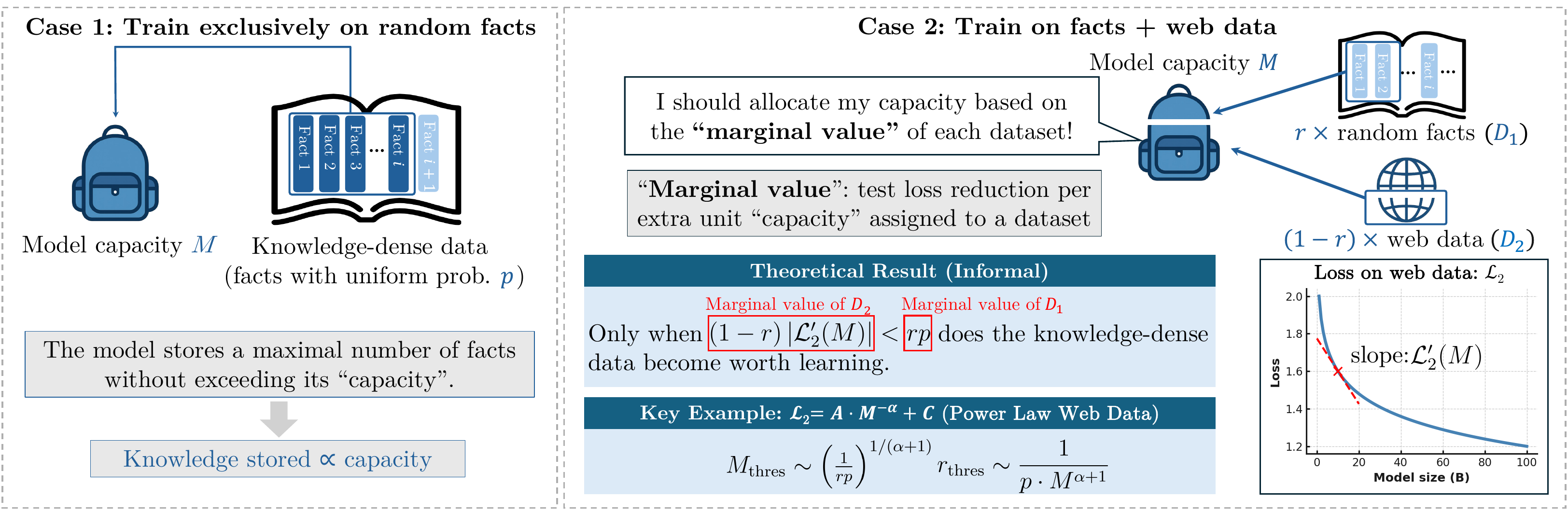}
    \caption{An illustration of the intuition behind our theory.}
    \label{fig:knapsack}
\vspace{-0.1in}
\end{figure}

\vspace{-0.05in}
\subsection{Problem Formulation}\label{sec:formulation}

\paragraph{Data distribution.} The essence of language modeling is to model the distribution of the next token $y$ for a given context $x$ containing all previous tokens. We take a Bayesian view, assuming a latent variable $\theta \in \Theta$ governing the distribution of $(x, y)$, denoted as $(x, y) \sim \cD_{\theta}$. Conceptually, $\theta$ encodes knowledge about the world.
For example, a person may be born in 1996 in one universe but 1999 in another.
Or, in a different universe, popular Python libraries may feature a different set of functions.
We assume the universe first draws $\theta$ from a prior $\cP$ before we observe the data distribution $\cD_{\theta}$.


\paragraph{Learning Algorithm.} A learning algorithm $\cA$ is a procedure that takes samples from a data distribution~$\cD$ of $(x, y)$ and outputs a predictor $h = \cA(\cD)$, which maps $x$ to a distribution over $y$. The performance of $h$ is measured by the expected cross-entropy loss $\cL(h; \cD) := \E_{(x, y) \sim \cD}[-\log p(y \mid h, x)]$, where $p(y \mid h, x)$ denotes the predicted distribution of $y$ given $x$ by the predictor $h$, and $\log$ is in base 2 for convenience.
We measure the performance of a learning algorithm $\cA$ by its expected loss over all data distributions $\cD_{\theta}$ with respect to the prior~$\cP$:
\begin{equation}
    \bar{\cL}_{\cP}(\cA) := \E_{\theta \sim \cP}[\cL(\cA(\cD_{\theta}); \cD_{\theta})].
\end{equation}
In practice, a predictor $h$ can be a transformer, and $\cA$ can be the pre-training algorithm.

\paragraph{Model Capacity and Mutual Information.} 
We measure a model's ``effective'' capacity—the amount of information a model, produced by some learning algorithm $\cA$, stores about the data distribution $\cD_{\theta}$—by the mutual information (MI) between the model and the data distribution $\cD_{\theta}$, i.e., $I(\cA(\cD_{\theta}); \cD_{\theta})$. For practical learning algorithms with bounded capacity, if $\cA$ always outputs a model $h$ with at most $N$ parameters each represented by a $b$-bit floating number, then $I(\cA(\cD_{\theta}); \cD_{\theta}) \leq bN$ by information theory. Empirically, \citet{allen2024physics} found that $I(\cA(\cD_{\theta}); \cD_{\theta}) \approx 2N$ holds across various training setups by controlled experiments.


We model a sufficiently trained LM with capacity $M$ as an optimal bounded-capacity learner, which minimizes the expected loss as much as possible under the capacity constraint $M$:
\begin{definition}[Optimal Bounded-Capacity Learner]\label{def:optimal}
    For a given prior $\cP$ and $M > 0$, the best achievable loss under the capacity constraint $M$ is defined as
    \begin{equation}
        F_{\cP}(M) := \inf_{\cA} \left\{
            \bar{\cL}_{\cP}(\cA) : I(\cA(\cD_{\theta}); \cD_{\theta}) \leq M
        \right\},
    \end{equation}
    where the infimum is taken over all learning algorithms.
    An optimal $M$-bounded-capacity learner is a learning algorithm $\cA$ such that
    $I(\cA(\cD_{\theta}); \cD_{\theta}) \le M$
    and $\bar{\cL}_{\cP}(\cA) = F_{\cP}(M)$.
\end{definition}


\subsection{Warmup: Training Exclusively on Mixture of Facts}\label{sec:warmup}

We start with a simple case where the data distribution $\cD_{\theta}$ contains $K$ random facts. Each fact is a pair $(X_i, y_i)$, where $X_i$ is a set of input contexts (e.g., paraphrases) and $y_i$ is the target token. For instance, the fact “Gracie Tessa Howell was born in 1946” can have contexts like “Gracie Tessa Howell's birth year is” or “Gracie Tessa Howell came to this world in the year,” all mapping to the target $y = \text{``1946''}$. We further assume that $X_1, \dots, X_K$ are disjoint.

Let $\cD_{\theta}(y \mid x)$ be the next-token distribution given context $x$. The universe samples $y_1, y_2, \dots, y_K$ independently from fixed distributions $\cY_1, \dots, \cY_K$ and sets $\theta = (y_1, \dots, y_K)$. The universe further sets $\cD_{\theta}(y \mid x_i)$ as a point mass at $y_i$, $\forall x_i \in X_i$.
Other inputs $x$ may occur in $\cD_{\theta}$, but their target tokens are independent of $\theta$.
Define the exposure frequency of the $i$-th fact as the total probability that any $x \in X_i$ appears in $\cD_{\theta}$: $\sum_{x' \in X_i} \PP_{\theta}(x = x')$. If all $K$ facts have equal exposure frequency $p$ (despite different entropies), a bounded-capacity learner reduces expected loss \emph{linearly} with capacity $M$, thus no phase transitions:
\begin{theorem}\label{thm:warmup}
    For all $M \ge 0$, if all the facts have the same exposure frequency $p$, then
    \begin{equation}
        F_{\cP}(M) = C + p \cdot \max\left\{
            \Htot - M, 0
        \right\},
    \end{equation}
    where $\Htot := \sum_{i=1}^{K} H(\cY_i)$ and $C := F_{\cP}(\infty)$.
\end{theorem}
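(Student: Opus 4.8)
The plan is to characterize the optimal bounded-capacity learner exactly in this warmup setting by reducing the problem to an information-theoretic allocation problem over the $K$ facts. Because the inputs $X_1, \dots, X_K$ are disjoint and the target tokens $y_1, \dots, y_K$ are drawn independently, the knowledge the model needs to store decomposes across facts: conditioned on a context $x \in X_i$, the optimal prediction of $y$ depends only on $y_i$, and the best a learner can do is to ``remember'' some subset of the $y_i$'s (or partial information about them). First I would establish the per-fact loss decomposition: $\bar{\cL}_{\cP}(\cA) = C_0 + \sum_{i=1}^K p_i \cdot \E_\theta[\ce \text{ gap on } X_i]$, where $p_i = p$ for all $i$, $C_0$ collects the $\theta$-independent contribution from contexts outside $\bigcup_i X_i$, and the per-fact gap is nonnegative and at most $H(\cY_i)$ (the gap is $0$ when the learner reproduces the point mass at $y_i$ and equals $H(\cY_i)$ when it falls back to the prior $\cY_i$, matching the structure of the commented-out Lemma for upper/lower bounds on $F_i$).

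Next I would lower-bound the loss for any algorithm satisfying $I(\cA(\cD_\theta); \cD_\theta) \le M$. The key inequality is that the total information the model carries about $\theta = (y_1, \dots, y_K)$ upper-bounds the sum of informations it carries about the individual coordinates: $\sum_i I(\cA(\cD_\theta); y_i) \le I(\cA(\cD_\theta); \theta) \le M$, using independence of the $y_i$ and the chain rule / data-processing. Then, for each fact, the expected cross-entropy gap on $X_i$ is at least $H(\cY_i) - I(\cA(\cD_\theta); y_i)$: intuitively, whatever mutual information $I_i := I(\cA(\cD_\theta); y_i)$ the model retains about $y_i$, it can reduce the conditional uncertainty (hence the loss gap) by at most $I_i$ bits — this is a standard consequence of the fact that cross-entropy minus entropy is a KL divergence and of Fano-type / rate-distortion reasoning. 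Combining, $\bar{\cL}_{\cP}(\cA) \ge C_0 + p \sum_i \max\{H(\cY_i) - I_i, 0\}$, and minimizing the right side subject to $\sum_i I_i \le M$, $I_i \ge 0$ is exactly the fractional knapsack / water-filling problem, whose optimum is $p \cdot \max\{\Htot - M, 0\}$ (since all ``values per bit'' are equal at $p$, any feasible allocation of $M$ bits reduces the total gap by exactly $\min\{M, \Htot\} \cdot$ contribution, independent of how it is split). This yields $F_{\cP}(M) \ge C + p\max\{\Htot - M, 0\}$ with $C = C_0 = F_{\cP}(\infty)$.

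For the matching upper bound I would exhibit an explicit learner achieving it. Order the facts arbitrarily; the learner stores the first several $y_i$'s verbatim until it exhausts its budget, storing a partial (prefix-code truncation) description of the last partially-covered fact — concretely, since $H(\cY_i)$ bits suffice in expectation to encode $y_i$ losslessly, allocating $M$ bits total lets the learner encode a set $S$ of facts with $\sum_{i \in S} H(\cY_i)$ roughly $M$ and predict the point mass for $i \in S$ (gap $0$) while predicting the prior $\cY_i$ for $i \notin S$ (gap $H(\cY_i)$); a fractional/randomized split handles non-integer budgets so the achieved gap is exactly $\max\{\Htot - M, 0\}$ in expectation. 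One must check this learner has $I(\cA(\cD_\theta); \cD_\theta) \le M$: it is a deterministic function of $(y_i)_{i \in S}$ plus a fixed amount for the prior, so its MI with $\cD_\theta$ is at most $\sum_{i \in S} H(\cY_i) \le M$. Multiplying the per-fact gap by $p$ and adding $C$ gives the claimed value, matching the lower bound.

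I expect the main obstacle to be making the lower-bound step fully rigorous — specifically, proving that retaining $I_i$ bits of mutual information about $y_i$ can reduce the expected cross-entropy gap on $X_i$ by at most $I_i$. This is the crux linking ``capacity'' (an MI quantity averaged over the prior) to ``loss reduction'' (a per-instance cross-entropy quantity also averaged over the prior), and it requires care: one writes the expected gap as $\E_\theta[\ce(\cP_{y \mid x, \theta}, \tilde\cP_{y \mid x, h})] = H(\cY_i) - I(\cA(\cD_\theta); y_i) + \E[\kl(\cdot \| \cdot)] \ge H(\cY_i) - I_i$, where the identity comes from decomposing cross-entropy and the nonnegativity of KL; \Cref{assumption:disjoint} (disjoint supports across $\theta$-values) is what guarantees the point-mass structure and makes this decomposition clean. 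Handling the contexts outside $\bigcup_i X_i$ (to confirm they contribute exactly the constant $C$ and no capacity is usefully spent there) and the aggregation $\sum_i I_i \le M$ via independence are routine once this per-fact inequality is in hand.
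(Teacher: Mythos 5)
Your proposal matches the paper's proof in essentially all respects: the lower bound via the per-fact decomposition, the inequality that the loss gap on fact $i$ is at least $H(\cY_i) - I(\cA(\cD_\theta); y_i)$, and the aggregation $\sum_i I(\cA(\cD_\theta); y_i) \le I(\cA(\cD_\theta); \cD_\theta) \le M$ using independence of the $y_i$'s; the upper bound likewise uses an explicit memorizing learner with randomization to hit non-integer budgets (the paper randomizes globally between ``memorize everything'' and ``output the prior'' with probability $q = M/\Htot$, rather than your subset-plus-fractional variant, but this is a cosmetic difference). The only stray detail is your appeal to \Cref{assumption:disjoint}, which belongs to an earlier formulation and is not needed beyond the disjointness of the $X_i$'s and the point-mass structure already built into the factual data universe definition.
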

\subsection{Data Mixing Induces Phase Transitions}\label{sec:theory-mixing}
{\it What if we mix the random facts with data from another domain, say web text?} Consider a data distribution $\cD_{\theta}$ composed of two domains: (1) a mixture of $K$ random facts (as in~\Cref{sec:warmup}) and (2) another domain with a much more complex structure. Let the latent variable $\theta = (\theta_1, \theta_2)$, where $\theta_1$ governs the distribution of $K$ random facts, $\cD^{(1)}_{\theta_1}$, and $\theta_2$ governs the data distribution of the second domain, $\cD^{(2)}_{\theta_2}$. Assume the universe draws $\theta_1$ and $\theta_2$ independently from priors $\cP_1$ and $\cP_2$, respectively.
The overall data distribution $\cD_{\theta}$ is $\cD_{\theta} = r \cD^{(1)}_{\theta_1} + (1-r) \cD^{(2)}_{\theta_2}$, with mixing ratio $r \in (0, 1)$. Let $p$ denote the exposure frequency of each fact in $\cD^{(1)}_{\theta_1}$, and $\Htot := \sum_{i=1}^{K} H(\cY_i)$ be the total entropy of the target tokens in the first domain (as in~\Cref{sec:warmup}). For simplicity, we assume the two domains contain non-overlapping information (see~\Cref{def:orthogonal universe}).

To measure models' performance on the first domain after training with algorithm $\cA$ on the data mixture, we define $\bar{\cL}_1(\cA) := \E_{\theta \sim \cP_1}[\cL(\cA(\cD_{\theta}); \cD_{\theta_1}^{(1)})]$ as the model's expected loss on the first domain.
If $\bar{\cL}_1(\cA) = F_{\cP_1}(0)$, then the model learns nothing (random guessing). If $\bar{\cL}_1(\cA) = F_{\cP_1}(\infty)$, the model perfectly learns the facts.

\Cref{thm:model-pt} shows that the learner sharply transitions between the two extremes as model size increases. This transition is characterized by two functions: $M^-_0(t):= \sup\{ M \ge 0 : -F'_{\cP_2}(M) > t \}$ and $M^+_0(t):= \inf\{ M \ge 0 : -F'_{\cP_2}(M) < t \}$.
By rate-distortion theorem, $F_{\cP_2}(M)$ is convex and hence $-F'_{\cP_2}(M)$ is non-increasing. Thus, $M^-_0(t)$ and $M^+_0(t)$ mark the last and first model sizes where $-F'{\cP_2}(M)$ exceeds or falls below $t$. 
If $F'_{\cP_2}(M)$ is strictly decreasing, then $M^-_0(t)=M^+_0(t)$.
\begin{theorem}[Phase Transition in Model Size]\label{thm:model-pt}
    For any optimal $M$-bounded-capacity learner $\cA$,
    \begin{enumerate}
        \item if $M \le M_0^{-}(\frac{r}{1-r} \cdot p)$, then $\bar{\cL}_1(\cA) = F_{\cP_1}(0)$;
        \item
        if $M \ge M_0^{+}(\frac{r}{1-r} \cdot p) + \Htot$,
        then $\bar{\cL}_1(\cA) = F_{\cP_1}(\infty)$.
    \end{enumerate}
\end{theorem}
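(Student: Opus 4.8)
The plan is to reduce the mixture problem to the structure of the single-domain results via a capacity-allocation argument, then invoke the warmup theorem for the first domain and the rate-distortion convexity for the second. First I would set up the decomposition of capacity: for any learning algorithm $\cA$ on the mixture $\cD_{\theta} = r\cD^{(1)}_{\theta_1} + (1-r)\cD^{(2)}_{\theta_2}$, the output $h = \cA(\cD_{\theta})$ stores some information $M_1 := I(h; \cD^{(1)}_{\theta_1})$ about the first domain and $M_2 := I(h; \cD^{(2)}_{\theta_2})$ about the second. Because the two domains contain non-overlapping information (by the orthogonal-universe assumption, $\theta_1 \independent \theta_2$ and the induced splittings are independent), I expect a subadditivity relation $M_1 + M_2 \le I(h; \cD_{\theta}) \le M$ to hold, so that the learner faces a genuine budget split. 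Then the overall loss decomposes as $\bar{\cL}_{\cP}(\cA) = r\,\bar{\cL}_1(\cA) + (1-r)\,\bar{\cL}_2(\cA)$, where each term is lower-bounded by the corresponding optimal bounded-capacity loss at budget $M_1$ (resp.\ $M_2$): $\bar{\cL}_1(\cA) \ge F_{\cP_1}(M_1)$ and $\bar{\cL}_2(\cA) \ge F_{\cP_2}(M_2)$.

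Next I would characterize the optimal split. Since the learner is optimal, it minimizes $r F_{\cP_1}(M_1) + (1-r) F_{\cP_2}(M_2)$ subject to $M_1 + M_2 \le M$, $M_1, M_2 \ge 0$. Both $F_{\cP_1}$ and $F_{\cP_2}$ are convex (rate-distortion / the explicit form in \Cref{thm:warmup}), so this is a convex allocation problem solved by equalizing marginal values $-r F'_{\cP_1}(M_1) = -(1-r) F'_{\cP_2}(M_2)$ when interior. The crucial feature is that $F_{\cP_1}$ from \Cref{thm:warmup} is \emph{piecewise linear}: $-F'_{\cP_1}(M_1) = p$ for $M_1 < \Htot$ and $-F'_{\cP_1}(M_1) = 0$ for $M_1 > \Htot$. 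Hence assigning \emph{any} capacity to the first domain below saturation yields constant marginal value $rp$, while the cost in marginal value foregone on the second domain is $(1-r)(-F'_{\cP_2}(M_2))$. The first domain is worth investing in only if $rp \ge (1-r)(-F'_{\cP_2}(M - M_1))$ for the relevant $M_1$; this is exactly where the thresholds $M^-_0$ and $M^+_0$ of $\frac{r}{1-r}p$ enter.

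For part~1: if $M \le M^-_0(\frac{r}{1-r}p)$, then for every feasible split, $M_2 \le M \le M^-_0(\frac{r}{1-r}p)$, so $-F'_{\cP_2}(M_2) \ge \frac{r}{1-r}p$ (by monotonicity and the definition of $M^-_0$, which is the last point where the derivative strictly exceeds the threshold — I need to handle the non-strict boundary carefully, possibly with a limiting/supremum argument). This means diverting even an infinitesimal amount of capacity from domain~2 to domain~1 cannot decrease the weighted loss, and since $F_{\cP_1}$ is flat in marginal value, no finite amount helps either; so the optimal learner sets $M_1 = 0$, giving $\bar{\cL}_1(\cA) = F_{\cP_1}(0)$. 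For part~2: if $M \ge M^+_0(\frac{r}{1-r}p) + \Htot$, put $M_1 = \Htot$ (enough to fully learn all facts by \Cref{thm:warmup}) and $M_2 = M - \Htot \ge M^+_0(\frac{r}{1-r}p)$, so $-F'_{\cP_2}(M_2) \le \frac{r}{1-r}p$; then the marginal value of the first domain's capacity (which stays $rp$ until saturation) dominates, so this allocation is optimal (or at least the optimal learner fully learns the facts), giving $\bar{\cL}_1(\cA) = F_{\cP_1}(\infty)$.

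The main obstacle I anticipate is rigorously justifying the information-theoretic decomposition $\bar{\cL}_{\cP}(\cA) = r\bar{\cL}_1(\cA) + (1-r)\bar{\cL}_2(\cA)$ together with the capacity subadditivity $I(h;\cD^{(1)}_{\theta_1}) + I(h;\cD^{(2)}_{\theta_2}) \le I(h;\cD_\theta)$, and then converting a statement about the \emph{optimal} learner into a statement about \emph{any} optimal learner (uniqueness of the optimal allocation in the relevant regime, versus ties at the threshold). A second delicate point is the boundary behavior at $M = M^-_0$ versus $M = M^+_0$: because these are defined via strict inequalities on $-F'_{\cP_2}$, I would need either left/right derivative conventions or an explicit $\epsilon$-perturbation argument to conclude $M_1 = 0$ (resp.\ $M_1 = \Htot$) exactly, rather than merely in a limit. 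I would also want to confirm that the warmup theorem applies verbatim to the sub-distribution $\cD^{(1)}_{\theta_1}$ even though it is embedded in a mixture — this should follow from the non-overlapping-information assumption, but it is the place where that assumption does real work.
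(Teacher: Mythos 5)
Your proposal follows essentially the same route as the paper's proof: split the learner's capacity across the two domains via the subadditivity $I(h;\cD_{\theta_1}^{(1)})+I(h;\cD_{\theta_2}^{(2)})\le I(h;\cD_{\theta})\le M$ (using independence of $\theta_1,\theta_2$), lower-bound each domain's loss by $F_{\cP_i}$ at its allocated budget, and compare marginal values using convexity of $F_{\cP_2}$ and the piecewise-linear form of $F_{\cP_1}$ from \Cref{thm:warmup}, with an exchange argument (constructing a better split learner) forcing $m_1=0$ or $m_1\ge\Htot$ for any optimal learner. The issues you flag—one-sided derivatives at the thresholds, the restricted predictor being a valid learner on $\cU_1$ thanks to the orthogonality assumption, and upgrading the allocation claim to the exact loss identities—are precisely what the paper's \Cref{lm:left-threshold,lm:right-threshold} are set up to handle.
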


\paragraph{Key Example: When Web Data Loss Follows a Power Law in Model Size.}
Consider the case where $F_{\cP_2}(M)$ is a power-law function of $M$, {\it i.e.}, $F_{\cP_2}(M) = C + A \cdot M^{-\alpha}$.  Here, $\alpha \in (0, 1)$ and $A$ is a large constant.
This is a reasonable assumption since LLM pre-training usually exhibits such power-law scaling behavior in model size ~\citep{kaplan2020scaling,hoffmann2022an}.
In this case, taking the derivative of $F_{\cP_2}(M)$ gives $-F'_{\cP_2}(M) = A \cdot \alpha \cdot M^{-\alpha-1}$.
Then, $M^-_0(t) = M^+_0(t) = (\frac{A \alpha}{t})^{1/(\alpha+1)}$.
Plugging this into~\Cref{thm:model-pt}, we have the critical value for model size:
\begin{equation}\label{eq:transition-M}
    M_{\mathrm{thres}} \sim \left(\frac{1}{rp}\right)^{1/(\alpha+1)}.
\end{equation}

This implies that a small~$r$ or~$p$ may cause the model to learn nothing from the knowledge-dense dataset, even if its capacity is sufficient to learn the entire dataset.


Arranging the terms in~\eqref{eq:transition-M}, we can also obtain the critical value in the mixing ratio $r$:
\begin{equation}\label{eq:transition-r}
    r_{\mathrm{thres}} \sim \frac{1}{p \cdot M^{\alpha+1}}.
\end{equation}

\paragraph{Threshold Frequency for a Single Fact.}
For each fact in the first domain, its overall probability of being sampled is $rp$ in the data mixture. Again, arranging the terms in~\eqref{eq:transition-r}, we obtain that for a single fact to be learned by the model, its frequency of appearing in the pre-training corpus should be larger than a threshold frequency $\fthres$, which scales with the model size following a power law:
\begin{equation}
    \fthres \sim \frac{1}{M^{\alpha+1}}.\label{eq:fthres}
\end{equation}

\section{Power-Law Relationship of Threshold Frequency and Model Size}\label{sec:implications}

\begin{figure}[t]
\vspace{-0.2in}
\begin{center}
\centering
\subfigure[{\small Threshold frequency of synthetic biographies across different model sizes.}]{
    \includegraphics[width=0.28 \textwidth]{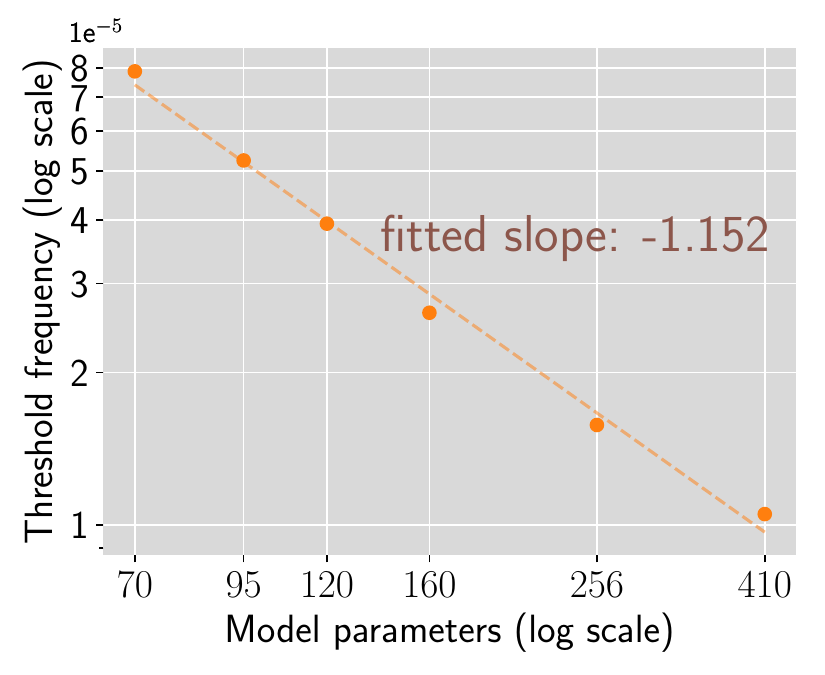}\label{fig:power_law}
    }
    \hspace{0.01\textwidth}
    \subfigure[\small The scaling law for the validation loss on FineWeb-Edu with respect to model size.]{
    \includegraphics[width=0.28\textwidth]{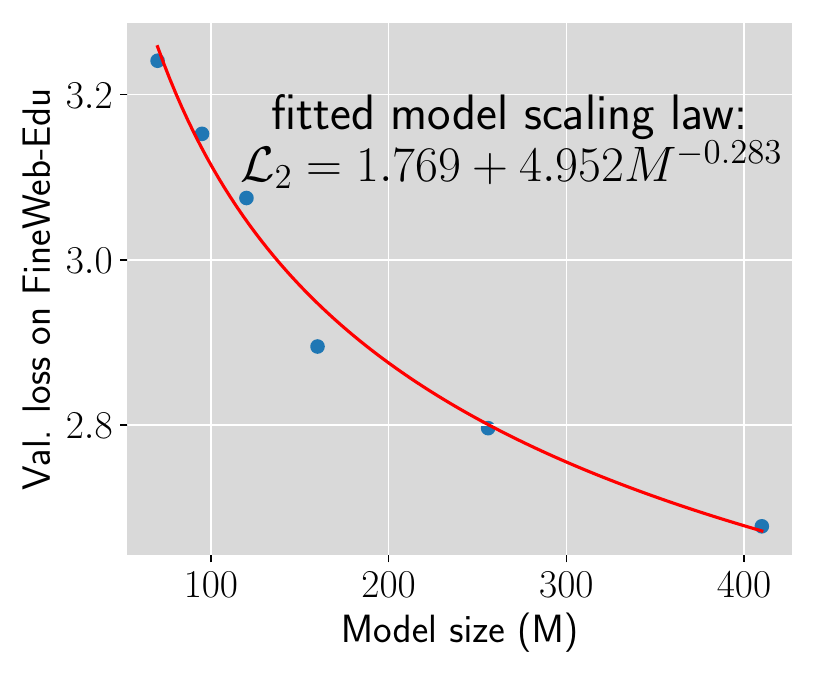}\label{fig:loss_vs_model}
    }
    \hspace{0.01\textwidth}
    \subfigure[\small The threshold popularity for knowledge tested in PopQA v.s. model size.]{
    \includegraphics[width=0.31\textwidth]{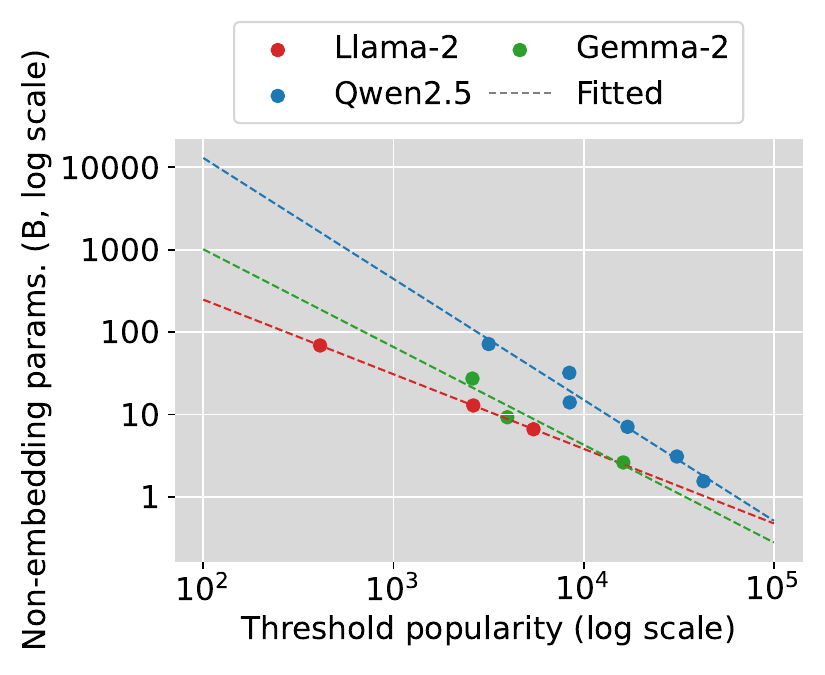}\label{fig:gpt-part}
    }
 \caption{\small Validating the power-law relationship of threshold Frequency and model size. (a) \& (b): Experiments on the mixture of SynBio-10k-power-law and FineWeb-Edu confirm that (1) the threshold frequency follows a power-law relationship with model size, and (2) the power-law exponent is approximately equal to the model scaling exponent plus one. (c): For the three open-source model families we examined, the threshold popularity for knowledge tested in PopQA also follows a power-law relationship with model size.}
        \label{fig:power law}
\end{center}
\vspace{-0.2in}
\end{figure}

\vspace{-0.05in}
In this section, we validate the predicted power-law relationship between model size and threshold frequency on both synthetic biographies and a set of knowledge extracted from Wikipedia.


\vspace{-0.1in}
\subsection{Experiments on Synthetic Biographies}

\vspace{-0.05in}
We construct SynBio-10k-power-law, where 10k biographies are divided into 100 subsets of 100 individuals, with subset sampling probability following a power-law distribution (exponent 1.5). Within each subset, all biographies have a uniform sampling probability. We then mix this dataset with FineWeb-Edu using 
$r=0.01$ and train models under this setup. 

To estimate the threshold frequency $\fthres$, we sort the subsets by sampling probability in descending order and identify the first group where model accuracy falls below a target value $\alphatarget$. The frequency of biographies in this subset is used to approximate $\fthres$. We use $\alphatarget = 80\%$.

As shown in~\Cref{fig:power_law}, $\log \fthres$ and $\log M$ exhibit a linear relationship, yielding a slope of $1.152$. This value is larger than $1$, as expected from our theory.
Further, we wonder if this slope is indeed close to $\alpha + 1$. Following the approach of~\citet{hoffmann2022an}, we fit a model scaling function for FineWeb-Edu validation loss in~\Cref{fig:loss_vs_model}, obtaining $\alpha \approx 0.283$. This leads to a predicted exponent of $1.283$, which is close to the observed value of $1.152$.


\vspace{-0.1in}
\subsection{Experiments on Knowledge Extracted from Wikipedia}

\vspace{-0.05in}
We further evaluate models on PopQA~\citep{mallen-etal-2023-trust}, which contains 14k QA pairs derived from Wikidata triplets, along with monthly page view for corresponding Wikipedia articles. Since knowledge tested in PopQA can be structured as triplets, we consider them as homogeneous and expect them to exhibit similar threshold frequencies for a given model size.


\paragraph{Estimating the Threshold Frequency.}
Counting the frequency of specific knowledge in the pre-training data is challenging due to the scale~\citep{kandpal2023large}. Following~\citet{mallen-etal-2023-trust}, we use Wikipedia page views as a proxy for popularity, which is assumed roughly proportional to the frequency of the knowledge in web data.  To estimate the threshold popularity \(\pthres\), we identify the smallest popularity \(P\) such that the model's accuracy on knowledge with popularity above \(P\) meets the target accuracy \(\alphatarget\) which is set to 60\% in our experiments. See~\Cref{sec:gpt-detail} for details.

\paragraph{Threshold frequency and model size follow a power law.} We examine base models from Llama-2~\citep{touvron2023llama}, Qwen-2.5~\citep{qwen2024qwen25technicalreport}, and Gemma-2~\citep{gemmateam2024gemma2improvingopen}, which are likely trained on similar data mixtures within each family. \Cref{fig:gpt-part} reveals that $\log \pthres$ generally decreases linearly as $\log$ model size increases, though the slope varies across families due to differences in architecture and training data. We examine more model families in \Cref{sec:gpt-add}.





\section{Strategies to Enhance Knowledge Acquisition Under Low Mixing Ratios}\label{sec:mitigation}

\begin{figure}[t]
\vspace{-0.2in}
\begin{center}
\subfigure[\small 410M, train from scratch on FineWeb-Edu\&SynBio-1.28M]{
    \includegraphics[width=0.3 \textwidth]{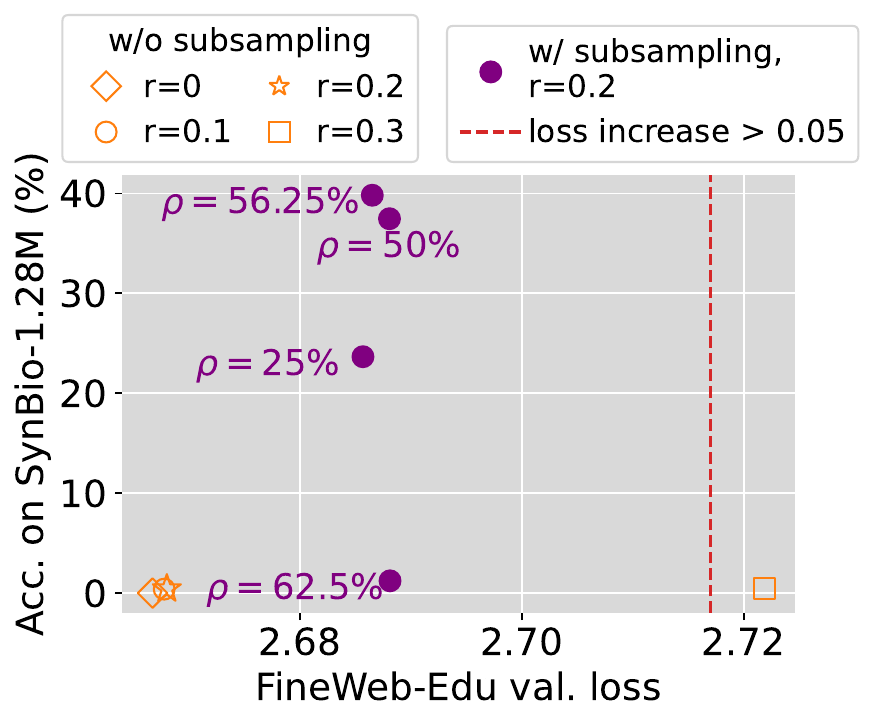}\label{fig:410-syn-sub-loss}
    }
\hspace{0.01\textwidth}
\subfigure[\small 410M, continual pre-train on the Pile\&WikiBio.]{
    \includegraphics[width=0.3 \textwidth]{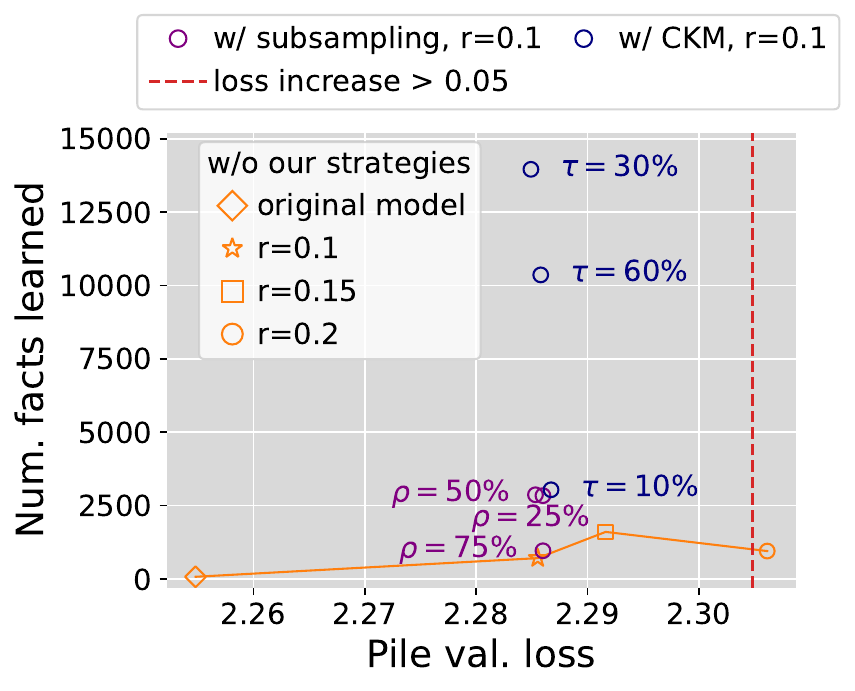}\label{fig:410-wiki-sub}
    }
\hspace{0.01\textwidth}
\subfigure[\small 1B, continual pre-train on the Pile\&SynBio-2.56M.]{
    \includegraphics[width=0.3 \textwidth]{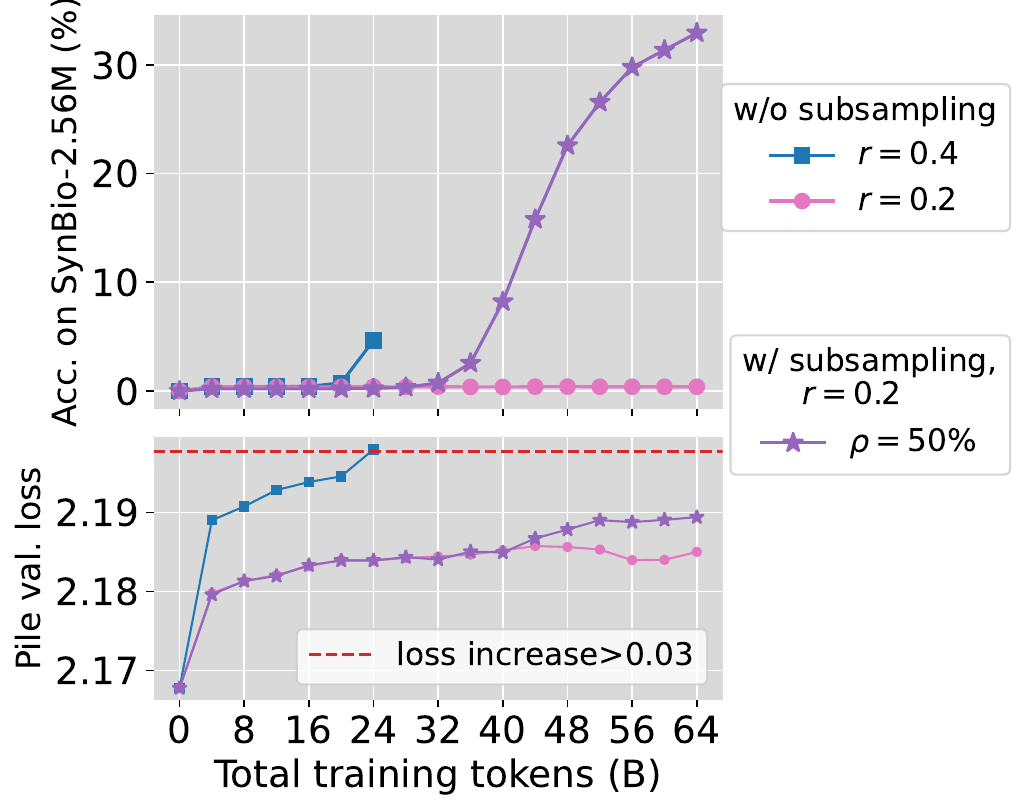}
    \label{fig:1b-syn-sub}}
\end{center}
\vspace{-0.1in}
 \caption{
Our proposed strategies significantly boost knowledge acquisition under low mixing ratios while preserving models' general capability.}
        \label{fig:truncation}
\vspace{-0.15in}
\end{figure}

\vspace{-0.05in}
Inspired by our theory, we propose two simple yet effective strategies to enhance knowledge acquisition under low mixing ratios. This setting is common in practice, as a large $r$ may harm general capabilities expected to be acquired from other data sources. The key idea is to raise the frequency of each fact, thereby increasing the ``marginal value'' of the knowledge-dense data.

\begin{itemize}
    \item \textbf{Strategy 1: Random Subsampling}: Randomly subsample the knowledge dataset.
    \item \textbf{Strategy 2: Compact Knowledge Mixing (CKM)}: Rephrase the knowledge into a compact form and add the rephrased version to the original dataset while keeping the overall mixing ratio fixed. See implementation details in~\Cref{sec:mitigation detail}.
\end{itemize}


We validate these strategies on SynBio and WikiBio, a curated dataset of Wikipedia biographies. For example, on WikiBio, random subsampling and CKM improve the number of learned facts by 4x and 20x, respectively. The effectiveness of random subsampling is especially surprising, as it yields higher accuracy despite discarding a significant proportion of the knowledge-dense data.

\vspace{-0.05in}
\subsection{Real-World Knowledge Data: WikiBio}

\vspace{-0.05in}
 To extend our study to a more real-world scenario, we curate WikiBio, a dataset containing Wikipedia biographies along with ten paraphrased versions for 275k individuals, totaling 453M tokens. This task is more challenging than SynBio as WikiBio features diverse texts without uniform formats, requiring the model to generalize to queries that rarely have exact matches in the training data. See~\Cref{sec:wiki data detail} for dataset construction details and~\Cref{sec:mitigation detail} for evaluation details.

\vspace{-0.05in}
\subsection{Strategy 1: Random Subsampling}\label{sec:subsample}

\vspace{-0.05in}
While random subsampling seems counterintuitive at first glance, it becomes reasonable if we consider how the threshold mixing ratio $\rthres$ relates to the exposure frequency of each fact within the knowledge-dense dataset, denoted as $p$. For a dataset containing only $S$ facts with uniform probability, $p\propto 1/S$. We can derive from \eqref{eq:transition-r} that the threshold mixing ratio $\rthres \sim \frac{ S}{ M^{\alpha+1}}$. Subsampling reduces $S$ and thus lowers the threshold mixing ratio, allowing the model to achieve much higher accuracy on the subsampled dataset. We use $\rho$ to represent the subsampling ratio below.


\paragraph{Experimental Setup.} We study both pre-training from scratch and continual pre-training setups. To evaluate the model's general capabilities, we use its validation loss on the web data (the Pile or FineWeb-Edu) and its zero-shot performance on five downstream tasks (see details in~\Cref{sec:detail downstream}). We compare the validation loss and average downstream performance to the model trained with $r=0$ in the pre-training-from-scratch setup or to the original Pythia model in the continual pre-training setup. Downstream performance drop of more than 2\% is considered unacceptable.

\paragraph{Subsampling enables faster knowledge acquisition while maintaining general capability.}  
We train 410M models from scratch FineWeb-Edu mixed with SynBio-1.28M using $r \in \{0, 0.1, 0.2, 0.3\}$ for a total of 32B tokens. As shown in~\Cref{fig:410-syn-sub-loss,fig:410m-syn-down}, increasing $r$ degrades FineWeb-Edu validation loss and downstream accuracy, with performance becoming unacceptable at $r=0.3$ (-2.09\% accuracy, +0.05 loss), while SynBio accuracy remains near zero. In contrast, subsampling SynBio-1.28M to 25\%, 50\%, and 56.25\% boosts SynBio accuracy to 23.53\%, 37.46\%, and 39.81\%, respectively, while maintaining downstream performance within the acceptable range. Note that further increasing $\rho$ to 62.5\% makes the frequency of each biography too low, resulting in SynBio accuracy dropping back to near zero. See more details in ~\Cref{sec:mitigation detail}, ~\Cref{table:syn-410m,table:syn-410m-downstream}.

\paragraph{Consistent Results for Continual Pre-training.} We continually pre-train the 410M or 1B Pythia models from their 100k-step checkpoints on the mixture of the Pile and WikiBio or SynBio-2.56M.  The 410M models are trained for 32B tokens and 1B models for 64B. When $r$ is large, the Pile validation loss may increase with training due to catastrophic forgetting~\citep{ibrahim2024simple}. To preserve models' general capabilities, we apply early stopping when Pile validation loss increases by 0.05 (410M model) or 0.03 (1B model), each corresponding to $\sim2\%$ drop in downstream performance. As shown in ~\Cref{fig:410-wiki-sub,fig:410m-wiki_traj}, without subsampling, $r = 0.1$ or $0.15$ results in slow learning of WikiBio, while $r=0.2$ triggers early stopping after 20B tokens, resulting in poor WikiBio performance. By contrast, subsampling WikiBio to 25\% or 50\% significantly accelerates knowledge acquisition and keeps Pile validation loss acceptable. For example, for $r=0.1$, setting $\rho$ to 50\% improves the number of learned facts by 4 times. 
Similar trends hold for 1B models: subsampling SynBio to 50\% at $r=0.2$ outperforms both $r=0.2$ and early-stopped $r=0.4$ without subsampling by $\sim30\%$. See more details in \Cref{sec:mitigation detail}, \Cref{table:410m-wiki-downstream,table:syn-1b-downstream,table:syn-1b}.



\vspace{-0.1in}
\subsection{Strategy 2: Compact Knowledge Mixing (CKM)}\label{sec:ckm}

\vspace{-0.05in}
The second strategy rephrases knowledge into compact forms (e.g., tuples) and adds them to the original dataset. Given that the frequency of each fact $f$ is inversely proportional to its average representation token count, this augmentation reduces the average token count, thereby increasing $f$'s effective frequency and potentially pushing it above the threshold $\fthres$. CKM is in the same spirit as the data synthesis technique in~\citet{su2024nemotron}, which rephrases high-quality data into condense forms such as QA pairs and knowledge lists.

For WikiBio, we compress the key information—name, birth date, and occupation—into the tuple format ``Bio: N \texttt{\{name\}} B \texttt{\{birth date\}} O \texttt{\{occupation\}}''. We add these tuples until their token count equals a proportion $\tau$ (which we call the CKM ratio) of the original dataset's token count.
\paragraph{Experimental Setup.} We apply CKM to WikiBio with the same continual pre-training setup as in ~\Cref{sec:subsample}. We apply early stopping when Pile validation loss increases by 0.05.

\paragraph{CKM significantly improves knowledge acquisition efficiency while preserving general capability.} We fix $r=0.1$ and explore CKM ratios $\tau \in \set{0.1, 0.3, 0.6}$, which correspond to roughly 2x, 3x, and 4x increases in fact frequency, respectively. As shown in~\Cref{fig:410-wiki-sub,fig:410m-wiki_tuple}, CKM preserves the general capability and consistently boosts knowledge acquisition. Notably, performance on WikiBio improves by 4x when the short-form augmentation makes up only 10\% tokens of the original WikiBio dataset. Increasing $\tau$ to 30\% further boosts the number of learned facts by 20x. See downstream performance in~\Cref{table:410m-ckm-downstream}.





\section{Discussions and Future Directions}

\paragraph{Extensions to reasoning tasks.} 
Although our experiments mainly investigate factual knowledge, we also identify phase transitions in simple reasoning tasks. This suggests a commonality: memorization is foundational to reasoning, not just to fact-recall. Without basic knowledge, models cannot reason effectively, an observation shared by~\citet{ruis2024procedural,xie2024memorization}. For instance, solving math problems requires memorizing theorems, definitions, and techniques. We defer the exploration of more complex reasoning tasks to future work.

\paragraph{Connection to real-world data.} Following~\citet{allen2024physics}, we use synthetic biographies as a proxy for knowledge-dense data for controlled and quantitative experiments. In contrast, real-world datasets are more heterogeneous—for example, Wikipedia includes both simple facts (e.g., biographies) and more complex content (e.g., scientific theories). These types of knowledge vary in learning difficulty and may exhibit different threshold frequencies. As a result, phase transitions may not be as apparent when mixing a heterogeneous dataset with web text. Nevertheless, our findings still apply at the level of individual knowledge pieces. That is, a specific fact or reasoning procedure may not be acquired at all if its frequency or the model size falls below a threshold, as evidenced by the theoretical result in~\eqref{eq:fthres} and empirical evidence in~\Cref{sec:implications} and~\Cref{sec:reasoning}.





\newpage
\section*{Acknowledgements and Disclosure of Funding}
J.Z. acknowledges support by the National Key R\&D Program of China 2024YFA1015800 and
Shanghai Qi Zhi Institute Innovation Program.
\bibliography{reference}
\bibliographystyle{plainnat}
\newpage
\tableofcontents

\newpage
\appendix

\section{Limitations}\label{sec:limitations}
The high computational costs to conduct all these experiments impede us from replicate all the experiments with different random seeds.
These costs include the number of GPU hours. For example, a typical run of training a 410M model for 32B tokens requires 256 A100 GPU hours. Despite these difficulties, we managed to conduct experiments on models up to 6.9B and conduct ablation studies on hyperparameters in~\Cref{sec:ablation}.
\section{Broader Impact}\label{sec:broader-impact}
This paper identifies two phase transitions in knowledge
acquisition within data mixtures and provides theoretical
understanding of these phenomena. Building on our theory,
we propose two strategies to enhance the efficiency of knowledge acquisition. Our findings offer deeper
insights into LLM behavior and can be applied to improve
the factual accuracy of LLMs.

\section{Related Works}
\paragraph{Knowledge Capacity Scaling Law.} LLMs are typically trained on a vast amount of data that are rich in knowledge, and extensive studies have investigated how much knowledge LLMs can acquire from the training data. Pioneering studies~\citep{petroni2019languagebase,roberts2020muchpack,da2021analyzing} demonstrate that LLMs can capture a substantial amount of knowledge, suggesting their potential as knowledge bases. To quantify the relationship  between model size and knowledge storage, \citet{allen2024physics} and \citet{lu2024scaling} discover a linear relationship between models' knowledge capacity and their parameter count by training LLMs on data only containing fixed-format knowledge for sufficiently long horizons. Later, \citet{nichani2025understanding} formally proved this linear relationship. In contrast, this paper examines the data mixing scenario and demonstrates that this linear scaling can be disrupted when the knowledge-dense dataset is mixed with vast amounts of web-scraped data. Another important factor is the frequency of occurrence for knowledge. 

\paragraph{Impact of Frequency on Knowledge Acquisition.} This paper identifies phase transitions in knowledge acquisition within data mixtures with respect to model size and mixing ratio. Some relevant observations can be found in previous papers, but we takes a more direct and systematic approach. \citet{kandpal2023large,mallen-etal-2023-trust,sun2024head} find that LLMs can perform poorly on low-frequency knowledge. \citet{pmlr-v235-ghosal24a-understanding} show that frequency of knowledge in the pre-training data determines how well the model encodes the knowledge, which influences its extractability after QA fine-tuning. Taking a more microscopic view, \citet{chang2024how} insert a few pieces of new knowledge during training and track their loss. By fitting a forgetting curve, they conjecture that the model may fail to learn the knowledge if its frequency is lower than some threshold. 

\paragraph{Memorization and Forgetting.} Our findings also relate to prior observations on the memorization and forgetting behaviors of LLMs, but we explicitly characterize phase transitions in the context of data mixing. \citet{carlini2023quantifying} show that memorization of training data follows a log-linear relationship with model size, the number of repetitions, and prompt length. \citet{biderman2024emergent} take a data point-level perspective and demonstrate that it is difficult to predict whether a given data point will be memorized using a smaller or partially trained model. By injecting a few new sequences into the training data, \citet{huang2024demystifying} find that a sequence must be repeated a non-trivial number of times to be memorized. By examining training dynamics, \citet{tirumala2022memorizationwooverfitting} observe that memorization can occur before overfitting and that larger models memorize faster while forgetting more slowly.~\citet{zucchet2025language} study the training dynamics governing factual knowledge acquisition of LLMs and find that the performance can undergo a plateau before the model acquires precise knowledge, during which the attention-based circuits form. From a theoretical perspective, \citet{feldman2020tale} prove that memorization of training labels is necessary to achieve near-optimal generalization error for long-tailed data distributions. 
\paragraph{Scaling laws for Data Mixing.} LLM performance is significantly influenced by the mixing proportions of the training data from different domains. Our paper is related to a line of studies that optimize the mixing proportions by modeling LLM performance as a function of the mixing proportions~\citep{liu2024regmix,kang2024autoscale,ye2024datamixinglaws,ge2024bimix}. However, their datasets can be highly heterogeneous even within a single domain (e.g., OpenWebText, Pile-CC) while we focus on mixing a uniform, knowledge-dense dataset into web-scraped data.

\section{Additional Experimental Results}

\subsection{Additional Results for Phase Transitions}
\label{sec:larger-model}
\begin{figure}[H]
\begin{center}
\subfigure[\small Pythia-2.8B.]{
    \includegraphics[width=0.33 \textwidth]{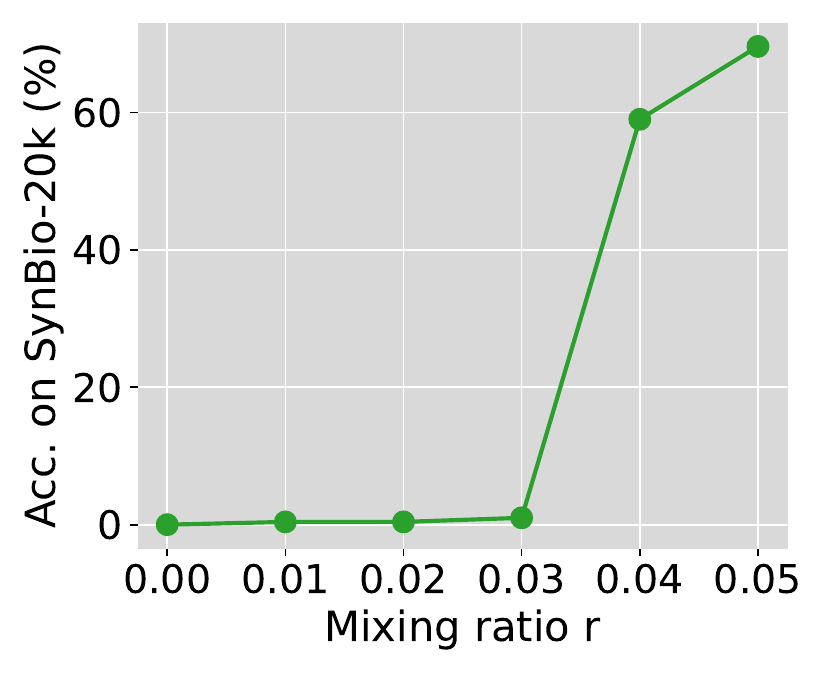}\label{fig:3b}
    }
\hspace{0.02\textwidth}
\subfigure[\small{Pythia-6.9B.} ]{
    \includegraphics[width=0.33 \textwidth]{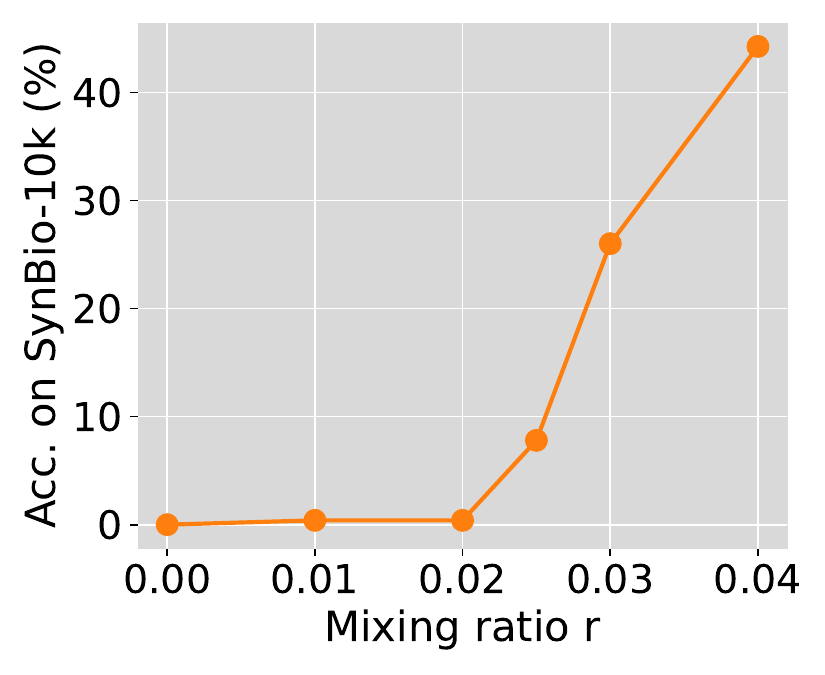}\label{fig:7b}
    }
\end{center}
 \caption{Phase transition in mixing ratio persists for larger models. We train Pythia-2.8B and Pythia-6.9B with 2B and 1B total training tokens, respectively. To ensure sufficient exposure to SynBio within these training horizons, we use smaller SynBio datasets—SynBio-20k for the 2.8B model and SynBio-10k for the 6.9B model—mixed with FineWeb-Edu.}\label{fig:larger_model}
\end{figure}

\begin{figure}[H]
    \begin{center}
    \subfigure[\small{Validation loss on SynBio-1.28M for 410M models.} ]{
        \includegraphics[width=0.33 \textwidth]{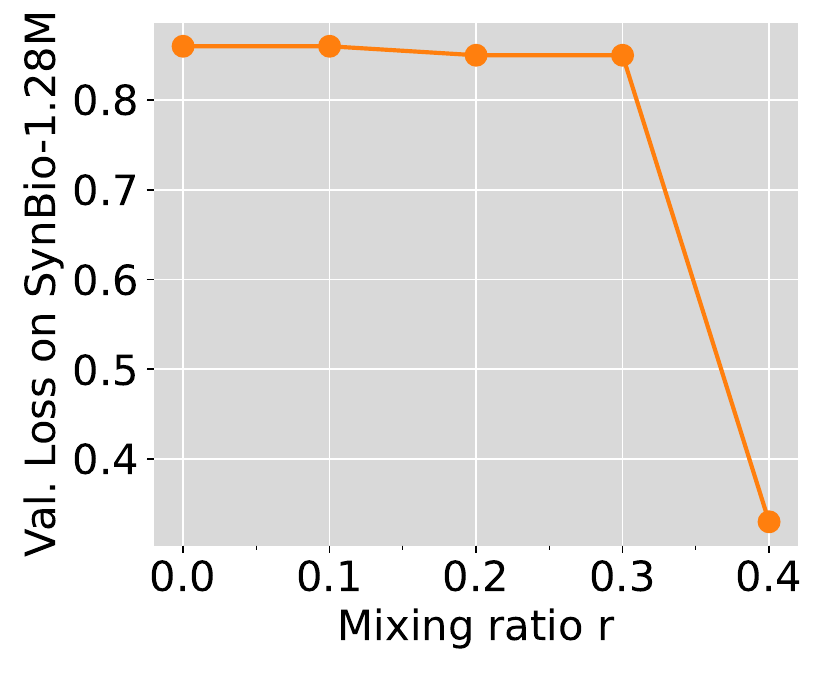}\label{fig:r_trans_410m_loss}
        }
    \hspace{0.03\textwidth}
    \subfigure[\small{Validation loss on SynBio-320k for $r=0.1$.} ]{
        \includegraphics[width=0.33 \textwidth]{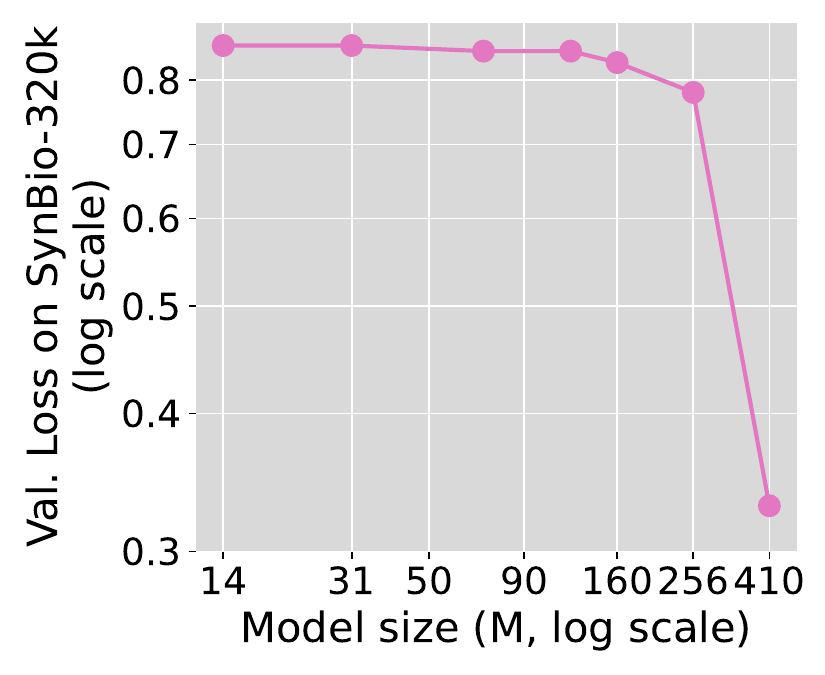}\label{fig:model_phase_tran_loss}
        }
    \end{center}
     \caption{In addition to discrete metrics like accuracy, we can also observe phase transitions in validation loss, a continuous metric.}\label{fig:phase-tran-add}
    \end{figure}

\begin{table}[H]
        \caption{We replicate the experiments in~\Cref{fig:70m-r-trans} with three different random seeds and report the mean and standard deviation below. While accuracy varies slightly with random seeds, the phase transition behavior remains consistent and clearly observable across runs.}\label{table:random-seeds}
            \centering
            \small
            \begin{tabular}{@{} c c c@{}}
                \toprule
                $r$ & Mean Acc. (\%) & Std. Dev. (\%)\\
                \midrule
                0.1 & 0.4 & 0.0\\
                0.15 & 0.4 & 0.0\\
                0.2 & 0.4 & 0.0\\
                0.25 & 0.8 & 0.0\\
                0.3 & 7.3 & 2.8\\
                0.35 & 27.5 & 2.8\\
                0.4 & 40.4 & 3.1\\
                0.45 & 58.1 & 3.1\\
                \bottomrule
            \end{tabular}
\end{table}

\subsection{Ablation Studies}\label{sec:ablation}
\begin{figure}[t]
\begin{center}
\subfigure[Vary the batch size.]{
    \includegraphics[width=0.315 \textwidth]{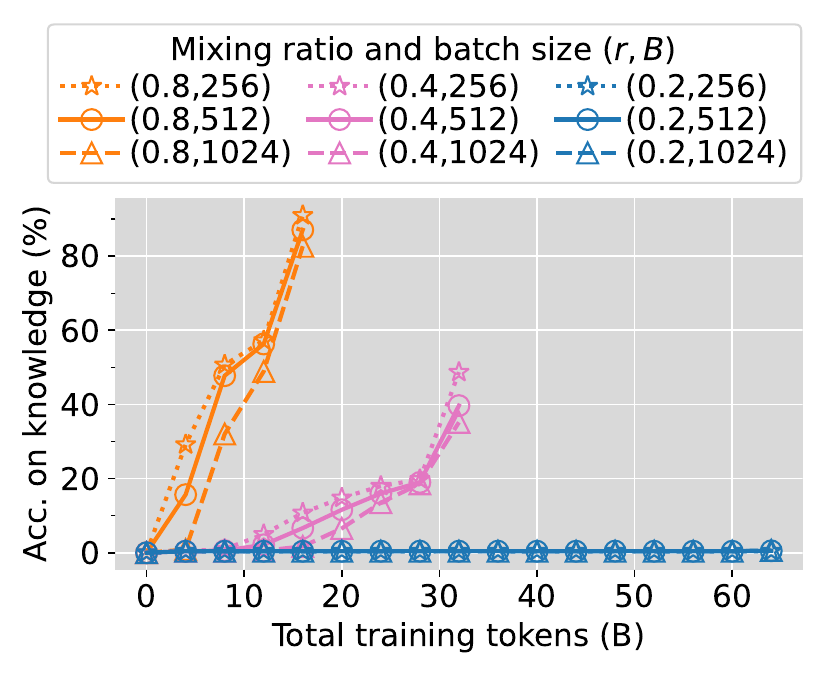}\label{fig:70m-bs}
    }
\subfigure[\small Vary the peak learning rate.]{
    \includegraphics[width=0.315\textwidth]{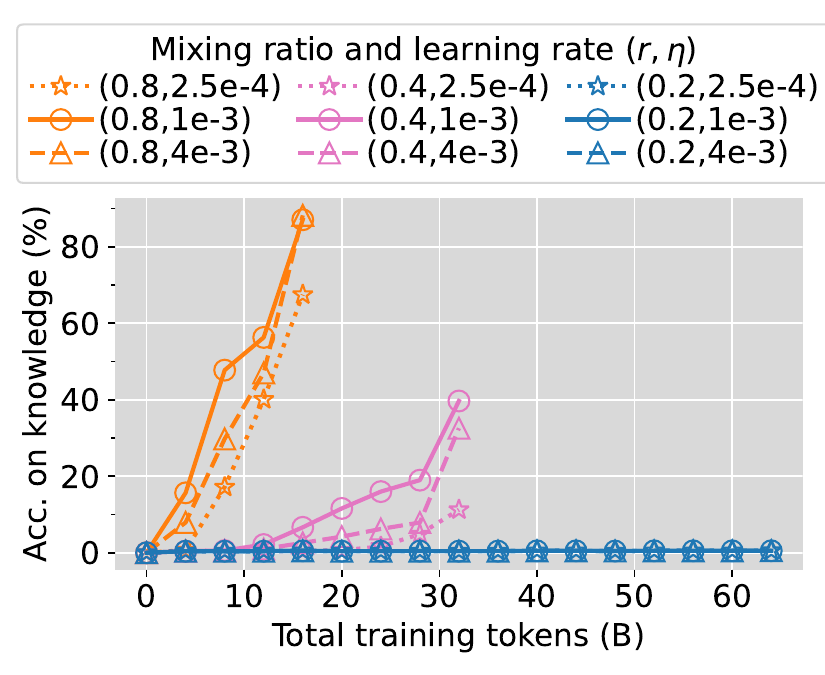}\label{fig:70m-lr}
    }
\subfigure[\small Vary the learning rate schedule. Both schedules use a peak learning rate of $10^{-3}$.]{
    \includegraphics[width=0.315\textwidth]{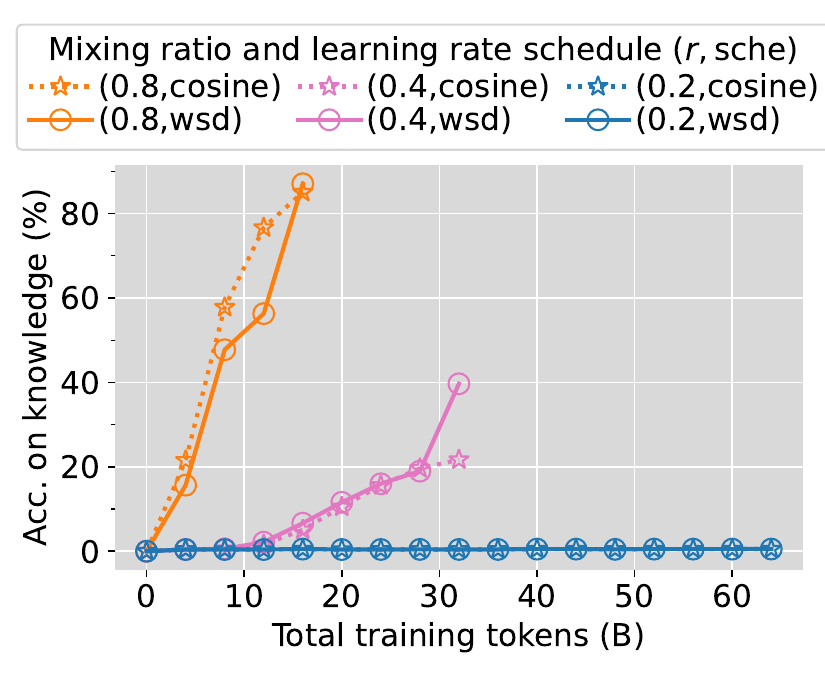}\label{fig:70m-sche}
    }
\end{center}
 \caption{Ablation studies on hyperparameters. The models exhibit consistent trends in knowledge acquisition across different batch sizes, learning rate values and schedules. All experiments are conducted by training 70M models on the mixture of FineWeb-Edu and SynBio-320k.}
        \label{fig:additional-ablation}
\end{figure}
We now conduct ablation studies to demonstrate the robustness of our findings with respect to hyperparameters. We explore \( r\in \{0.2, 0.4, 0.8\}\) and train 70M models for a total of 64B, 32B, and 16B tokens, respectively, ensuring each configuration passes SynBio the same number of times.
\paragraph{Consistent Trends Across Different Batch Sizes.}  
As shown in ~\Cref{fig:70m-bs}, we evaluate three batch sizes, $B \in \{256, 512, 1024\}$, for each $r$ and observe consistent general trends across all batch sizes. For \(r = 0.4\) and \(r = 0.8\), smaller batch sizes yield slightly higher accuracies, likely due to the increased number of update steps.  These experiments further distinguish between two types of frequency at which the model encounters the knowledge dataset: per-token frequency and per-step frequency. For a fixed mixing ratio, doubling the batch size doubles the occurrences of each biography per step, while the occurrences per token remain unchanged. The results demonstrate that per-token frequency, rather than per-step frequency, determines training efficiency in knowledge acquisition.

\paragraph{Consistent trends across learning rate values and schedules.} In~\Cref{fig:70m-lr}, we explore peak learning rates among \set{\(2.5 \times 10^{-4}\), \(10^{-3}\), \(4 \times 10^{-3}\)} using the WSD scheduler. We observe that the trends are consistent across these values, although the learning process slows down at the lowest value \(2.5 \times 10^{-4}\). In~\Cref{fig:70m-sche}, results for both cosine and WSD schedulers show similar trends.



\subsection{Additional Discussions and Experiments on Reasoning Tasks}\label{sec:add-exp-reasoning}

\begin{figure}[H]
    \begin{center}
    \subfigure[\small{Accuracy on Max-over-N for 70M models.} ]{
        \includegraphics[width=0.33 \textwidth]{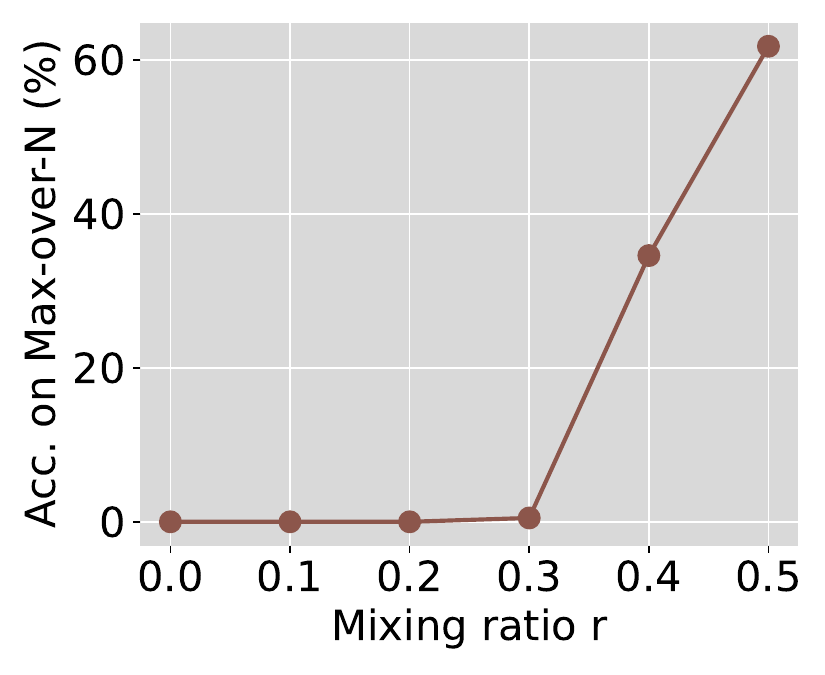}\label{fig:r_max_over_n_r}
        }
    \hspace{0.03\textwidth}
    \subfigure[\small{Accuracy on Max-over-N for $r=0.2$.} ]{
        \includegraphics[width=0.33 \textwidth]{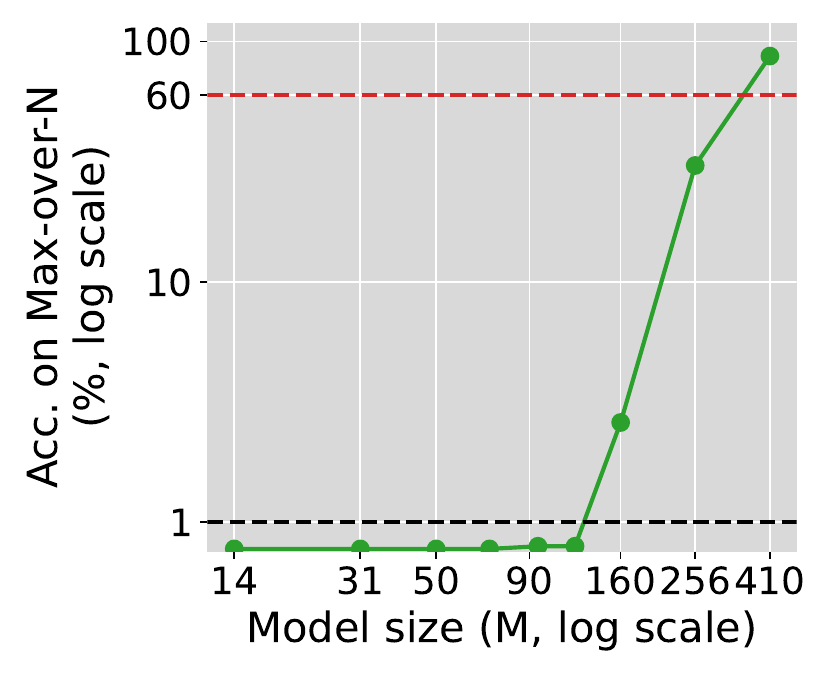}\label{fig:max_over_n_model_size}
        }
    \end{center}
     \caption{Phase transitions in the Max-over-N task. Here, we set $N=30$.}\label{fig:max_over_n}
    \end{figure}

\paragraph{Discussion on the slope calculation task: model is indeed learning the procedure rather than memorizing the training data.}
We show that the model cannot rely purely on memorization to solve the slope calculation tasks specified in~\Cref{sec:reasoning}. Specifically, the total number of possible slope calculation problems in our setup is $100 \times 100 \times 99 \times 100 = 9.9\times 10^{7}$. In contrast, a typical training run in~\Cref{fig:slope} with $r=0.4$ sees fewer than $3.5\times 10^6$ unique slope calculation examples, less than 5\% of the full space. Despite this limited coverage, the model still achieves 60\% accuracy at test time, where examples are uniformly sampled from the full distribution. This substantial generalization beyond the training data suggests that the model is indeed learning a generalizable computation procedure, rather than memorizing specific input-output pairs. 

\paragraph{Experiments on Another Reasoning Task with Larger Input Space: Max-over-N.} The slope calculation task suggests that the model learns a generalizable procedure. To test this hypothesis in a more challenging setting, we introduce another task named ``Max-over-N''. This task is explicitly designed with a vastly larger input space, rendering memorization computationally infeasible. Specifically, the model is asked to find the maximum number given a list of $N=30$ integers, each randomly sampled from $\set{0, 1, \cdots, 99}$. This creates an enormous input space of $10^{60}$. The format of the training samples is shown in~\Cref{tab:max-over-n-question-template}. 

Following the setup in~\Cref{sec:reasoning}, we add 3M tokens of such examples to the OpenWebMath dataset (accounting for less than 0.02\% of the original OpenWebMath token count) to create a modified version. We then train Pythia models on the mixture of FineWeb-Edu and the modified OpenWebMath dataset, with $r$ denoting the mixing ratio of the modified OpenWebMath.
For evaluation, we generate 1,000 test samples of ``Max-over-N'' and assess whether the model outputs the correct final answer.

As shown in~\Cref{fig:max_over_n}, we observe the same phase transition phenomena with respect to both model size and mixing ratio, consistent with our main findings. The key result is that the 70M model achieves 60\% test accuracy after training on fewer than 3,500 unique examples (at $r=0.5$). This training set is an infinitesimal fraction of the $10^{60}$ possible inputs. This clearly demonstrates generalization beyond memorization.

\newpage
\vspace{-0.1in}
\subsection{Additional Results for Validating the Power-Law Relationship 
of Threshold Frequency and Model Size}\label{sec:gpt-add}

\begin{wrapfigure}{r}{0.35\textwidth} 
  \centering
  \begin{minipage}{0.35\textwidth}
    \centering
    \includegraphics[width=1\textwidth]{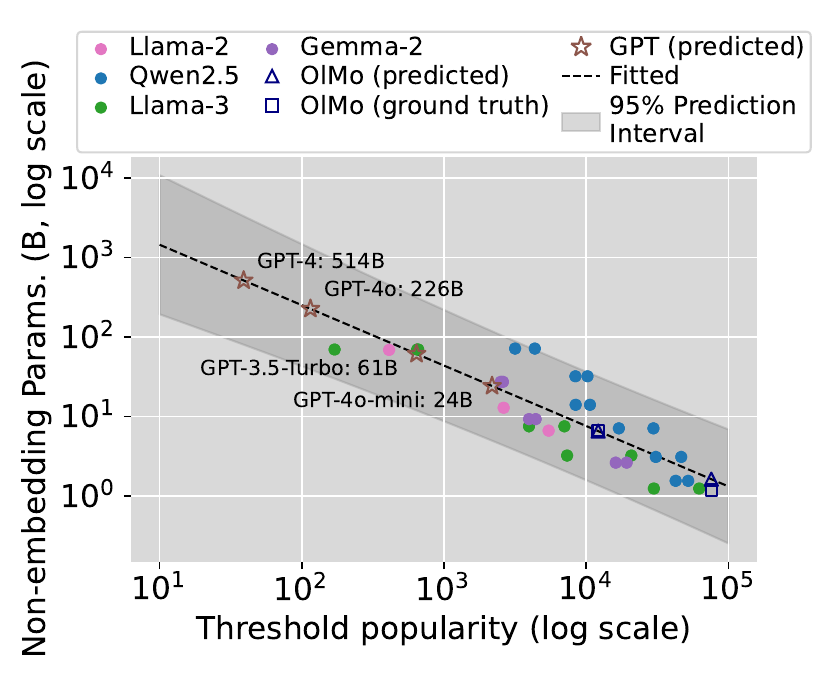}
    \caption{\small For 410M models trained on the mixture of FineWeb-Edu and SynBio-1.28M, accuracy for $r=0.2$ remains near zero even when we extend the training by 4 times.}
   \label{fig:gpt-all}
  \end{minipage}
\end{wrapfigure}

In~\Cref{fig:gpt-all}, we relax the constraint of training on the same data mixture and investigate the overall trend between  model size and $\pthres$. We add the Llama-3~\citep{dubey2024llama3} family, and evaluate both base and instruction-tuned models for all families, totaling 30 models. Interestingly, in~\Cref{fig:gpt-all}, $\log$ model size and $\log \pthres$ also exhibit a linear relationship, with most models falling within the 95\% confidence interval. We further use models from the OLMo~\citep{groeneveld-etal-2024-olmo} family as a validation set, where predictions of the fitted power law closely match the ground truth.

\paragraph{Potential Application: Inferring the Size of Proprietary Models.} The identified power-law relationship offers a potential method for estimating the size of proprietary models, such GPTs. As a preliminary attempt, we estimate the threshold popularity for GPT-3.5-Turbo, GPT-4, GPT-4o, and GPT-4o-mini. Applying the fitted power law yields size predictions of 61B, 514B, 226B, and 24B, respectively. The 95\% confidence intervals are 12--314B, 80--3315B, 39--1313B, and 5--118B, respectively.

\subsection{Additional Plots for Strategies to Enhance Knowledge Acquisition}
\begin{figure}[H]
\vspace{-0.1in}
\begin{center}
\subfigure[\small 410M, trained from scratch on the mixture of FineWeb-Edu and SynBio-1.28M.]{
    \includegraphics[width=0.3 \textwidth]{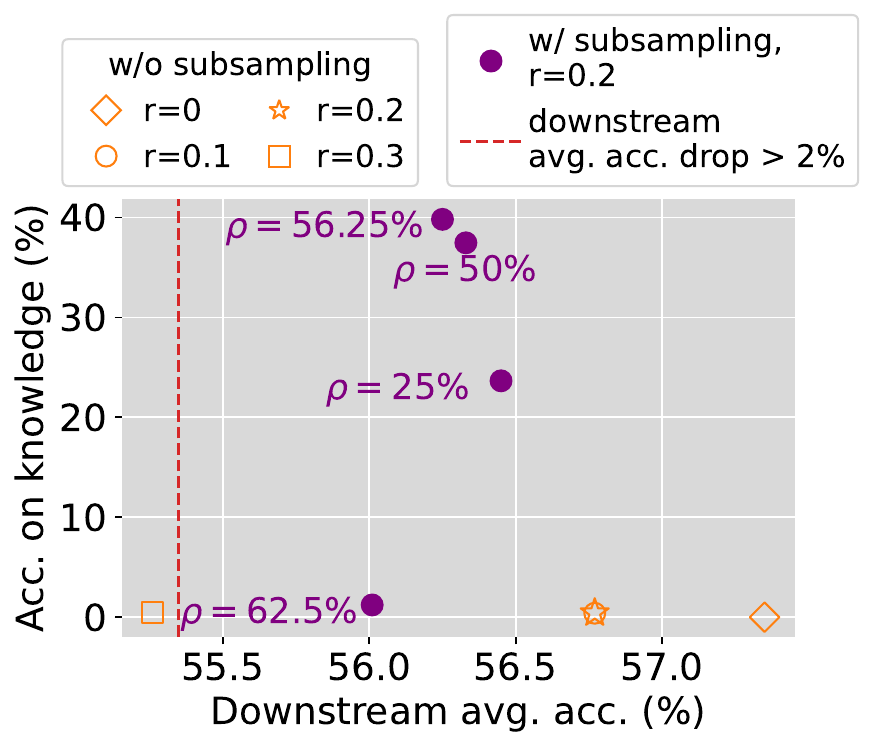}\label{fig:410m-syn-down}
    }
\hspace{0.01\textwidth}
\subfigure[\small Training trajectory for applying subsampling to WikiBio.]{
    \includegraphics[width=0.3 \textwidth]{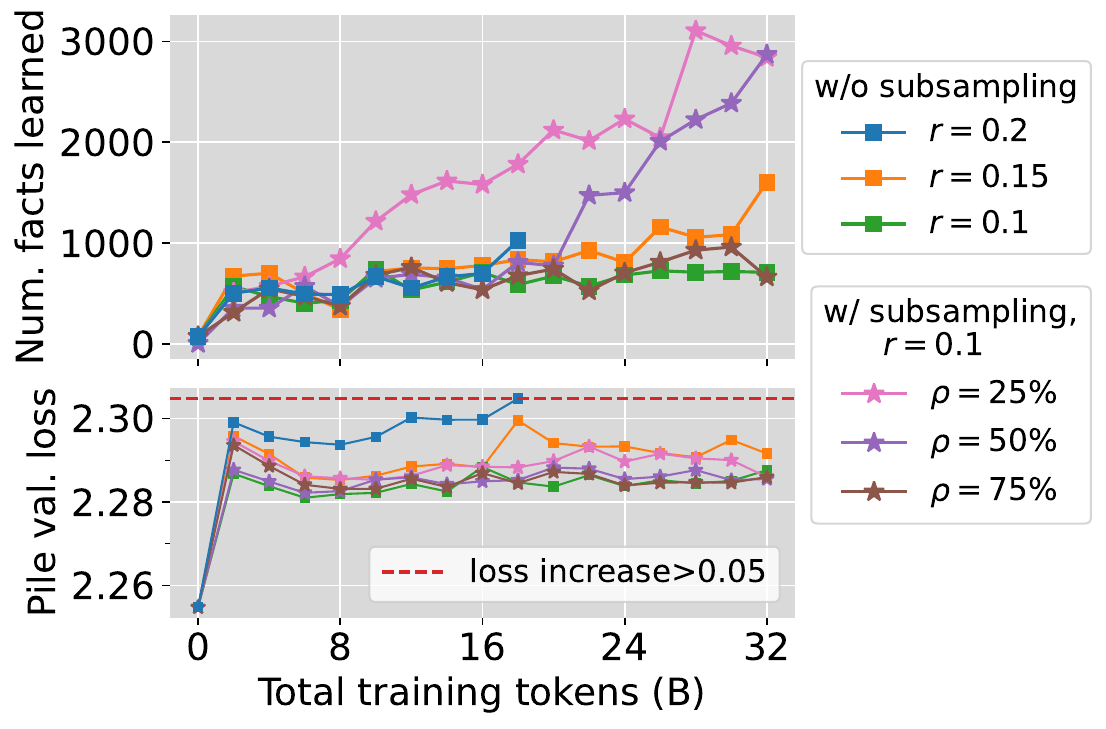}\label{fig:410m-wiki_traj}
    }
\hspace{0.01\textwidth}
\subfigure[\small Training trajectory for applying CKM to WikiBio.]{
    \includegraphics[width=0.3 \textwidth]{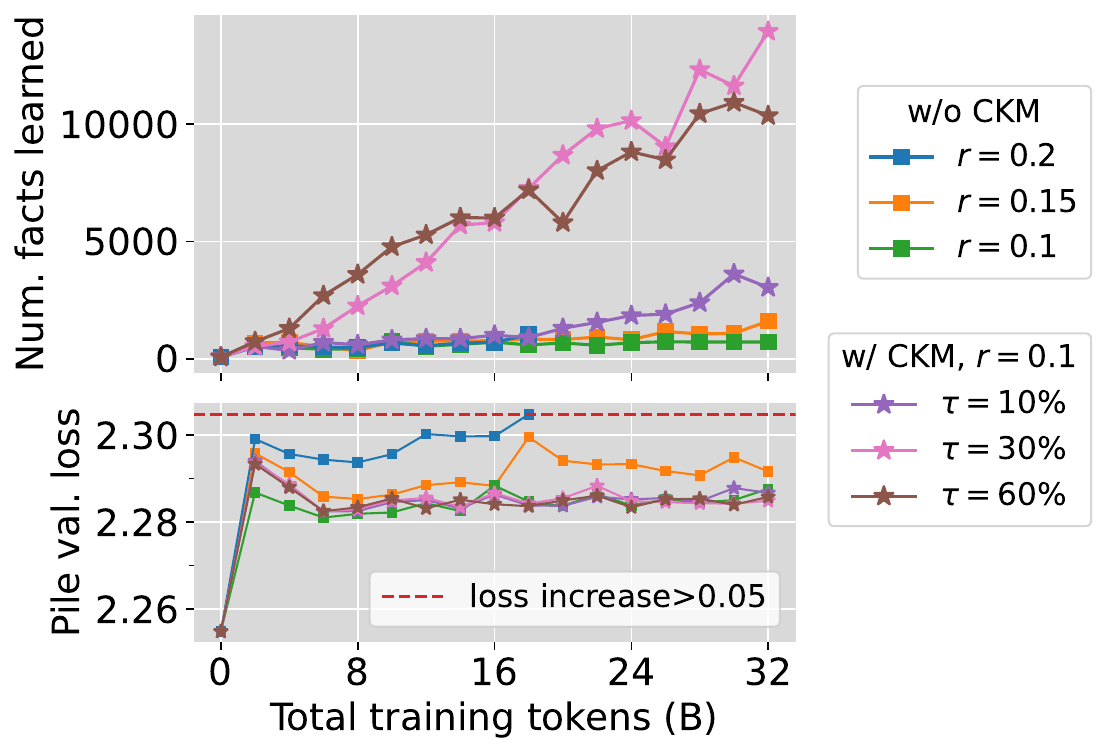}\label{fig:410m-wiki_tuple}
    }
\end{center}
\vspace{-0.1in}
 \caption{\small Additional plots for strategies to enhance knowledge acquisition.}
        \label{fig:additional-mitigation}
\end{figure}

\newpage
\subsection{Detailed Performance on SynBio}\label{sec:synbio-performance-detail}
In~\Cref{table:syn-70m}, we detail the accuracy of each attribute for 70M models trained on the mixture of FineWeb-Edu and SynBio-320k with $r\in \set{0.2, 0.4, 0.8}$, trained for 64B, 32B, and 16B tokens respectively. We notice that the accuracy for birth date is lower than other attributes. This can be attributed to the complexity of precisely recalling the combined elements of day, month, and year information, which together form a much larger domain than other attributes. To maintain clarity and conciseness, we omit the detailed performance in other 70M experiments, as this pattern persists across them.

Furthermore, we present the detailed performance of 410M models on SynBio-1.28M corresponding to~\Cref{fig:410-syn-sub-loss} in~\Cref{table:syn-410m}. We also provide the detailed performance of 1B models on SynBio-2.56M corresponding to~\Cref{fig:1b-syn-sub}  in ~\Cref{table:syn-1b}.

\begin{table}[H]
\caption{Detailed performance on SynBio. We report the accuracy (\%) for each attribute averaged over five templates.}\label{table:synbio-performance-detail}
    \centering
    \small
        \subfigure[70M model, pre-trained from scratch on the mixture of FineWeb-Edu and SynBio-320k.]{
    \begin{tabular}{@{} c c c c c c c@{}}
        \toprule
         $r$ & Birth date & Birth city  & University &  Major & Employer & Avg.\\
        \midrule
       Random guess &$0.00$ & $0.50$ & $0.33$& $1.00$ & $0.38$&  $0.44$\\
        \midrule
         $0.2$ & $0.00$ & $0.63$ & $0.43$ & $1.12$ & $0.38$ & $0.51$\\
         $0.4$ & $16.96$ & $45.67$ & $41.03$ & $50.78$ & $43.93$ & $39.68$\\
         $0.8$ & $79.76$ & $88.64$ & $88.55$ & $90.10$ & $88.30$ & $87.07$\\
        \bottomrule
    \end{tabular}\label{table:syn-70m}
    }
    \subfigure[410M model, pre-trained from scratch on the mixture of FineWeb-Edu and SynBio-1.28M.]{
    \begin{tabular}{@{} c c c c c c c c c@{}}
        \toprule
       $N$& $\rho$ (\%) &$r$ & Birth date & Birth city  & University &  Major & Employer & Avg.\\
        \midrule
        \multicolumn{3}{@{}c@{}}{Random guess} &$0.00$ & $0.50$ & $0.33$& $1.00$ & $0.38$&  $0.44$\\
        - & - & $0.00$ & $0.00$ & $0.00$ & $0.00$ & $0.00$ & $0.00$ & $0.00$\\
        \midrule
        $1.28$M & $100$ &$0.1$& $0.00$ & $0.42$ & $0.33$ & $1.01$ & $0.21$ & $0.39$\\
        $1.28$M & $100$ & $0.2$ & $0.00$ & $0.45$ & $0.34$ & $1.09$ & $0.22$ & $0.42$\\
        $1.28$M & $100$ &$0.3$ &$0.00$ & $0.49$ & $0.35$ & $1.14$ & $0.25$ & $0.45$\\
        \midrule[0.2pt]
        $320$k & $25$ & $0.2$&$22.34$ & $23.98$ & $23.64$ & $24.03$ & $23.65$ & $23.53$\\
        $640$k & $50$ &$0.2$& $27.97$ & $39.66$ & $38.51$ & $41.50$ & $39.68$ & $37.46$\\
        $720$k & $56.25$ &$0.2$ & $28.02$ & $42.94$ & $42.15$ &  $44.07$ & $41.88$ & $39.81$\\
        $800$k & $62.5$ &$0.2$ & $0.01$ & $1.16$ & $0.85$ & $3.19$ & $0.89$ & $1.22$\\
        \bottomrule
    \end{tabular}\label{table:syn-410m}
    }

    \subfigure[1B model, continually pre-trained on the mixture of the Pile and SynBio-2.56M. Note that $r=0.4$ is early stopped due to its Pile validation loss increasing beyond the acceptable range.]{
       \begin{tabular}{@{} c c c c c c c c c c@{}}
        \toprule
       $N$& $\rho$ (\%) & $r$ & \makecell{Training\\tokens (B)}&Birth date & Birth city  & University &  Major & Employer & Avg.\\
        \midrule
        \multicolumn{4}{@{}c@{}}{Random guess} &$0.00$ & $0.50$ & $0.33$& $100$ & $0.38$&  $0.44$\\
        \multicolumn{4}{@{}c@{}}{Pythia-1B-100k-ckpt} & $0.00$ &  $0.00$ &  $0.00$ &  $0.00$ & $0.00$ &  $0.00$ \\
        \midrule
        $2.56$M & $100$ &$0.2$&$64$ & $0.01$ & $0.46$ & $0.33$ & $0.98$ & $0.21$ & $0.39$\\
        $2.56$M & $100$ &$0.4$& $24$&$0.05$ & $10.95$ & $3.90$ & $4.74$ & $3.64$ & $4.66$\\
        \midrule[0.2pt]
        $1.28$M & $50$ & $0.2$& $64$ & $23.95$ & $34.55$ & $35.05$ & $35.96$ & $35.19$ & $32.94$ \\
        \bottomrule
    \end{tabular}\label{table:syn-1b}
    }
\end{table}

\subsection{Detailed Downstream Performance}\label{sec:detail downstream}
We employ the \texttt{lm-eval-harness}~\citep{eval-harness} codebase to evaluate the zero-shot performance on five downstream tasks, including LAMBADA~\citep{paperno2016lambada}, ARC-E~\citep{clark2018arc}, PIQA~\citep{bisk2020piqa}, SciQ~\citep{welbl2017sciq}, and HellaSwag~\citep{zellers2019hellaswag}, covering core capabilities such as text understanding, commonsense reasoning, and question answering. We compute the validation loss on about 50M tokens on a holdout set from the Pile or FineWeb-Edu. The detailed downstream performance and validation loss for applying the random subsampling strategy to SynBio and WikiBio are presented in~\Cref{table:detailed downstream,table:410m-wiki-downstream}, respectively. Additionally, we report the detailed downstream results for applying CKM to WikiBio in~\Cref{table:410m-ckm-downstream}.
\begin{table}[H]
\caption{Detailed downstream performance and validation loss for applying the random subsampling strategy to SynBio. We report the accuracy (\%) and standard deviation (\%) in the format $\mathrm{acc._{(std.\  dev.)}}$ for each downstream task.}\label{table:detailed downstream}
\centering
    \scriptsize
    \subfigure[410M model, train from scratch.]{
    \begin{tabular}{@{} c c c c c c c c c c@{}}
        \toprule
       $N$&  $\rho$ (\%) & $r$  & \makecell{LAMBADA} & ARC-E  & Sciq &  PIQA & HellaSwag & Avg. & \makecell{FineWeb-Edu\\val. loss  }\\
        \midrule
        - & - & $0$ & $38.25_{(0.68)}$ & $61.83_{(1.00)}$ & $83.60_{(1.17)}$ & $68.01_{(1.09)}$ & $35.04_{(0.48)}$ & $57.35$ &$2.667$\\
        \midrule
        $1.28$M & $100$ &$0.1$& $34.56_{(0.66)}$ & $62.33_{(0.99)}$ & $83.50_{(1.17)}$ & $68.34_{(1.09)}$ & $35.13_{(0.48)}$ & $56.77(\downarrow 0.58)$ & $2.668  (\uparrow 0.001)$\\
        $1.28$M & $100$ &$0.2$& $34.43_{(0.67)}$ & $62.13_{(0.99)}$ & $83.80_{(1.17)}$ & $68.12_{(1.09)}$ & $35.39_{(0.48)}$ & $56.77 (\downarrow 0.58)$ &$2.668  (\uparrow 0.001)$\\
        $1.28$M &$100$ &$0.3$& $33.94_{(0.66)}$ & $60.77_{(1.00)}$ & $80.80_{(1.25)}$ & $66.54_{(1.10)}$ & $34.23_{(0.47)}$ & $55.26(\downarrow 2.09)$&$2.722 (\uparrow 0.054)$\\
        \midrule[0.2pt]
        $320$k & $25$ & $0.2$&$36.70_{(0.67)}$ & $60.35_{(1.00)}$ & $(82.70_{1.20})$ & $67.74_{(1.09)}$ & $34.76_{(0.48)}$ & $56.45 (\downarrow 0.90)$&$2.686 (\uparrow 0.019)$\\
        $640$k & $50$ &$0.2$&$36.58_{(0.67)}$ & $60.61_{(1.00)}$ & $83.30_{(1.18)}$ & $66.65_{(1.10)}$ & $34.53_{(0.47)}$ & $56.33(\downarrow 1.02)$&$2.688 (\uparrow 0.021)$\\
        $720$k & $56.25$&$0.2$ &$35.61_{(0.67)}$ & $60.94_{(1.00)}$ & $83.00_{(1.19)}$ & $67.14_{(1.10)}$ & $34.54_{(0.47)}$ & $56.25(\downarrow 1.10)$&$2.687 (\uparrow 0.020)$\\
        $800$k & $62.5$ &$0.2$ &$35.20_{(0.67)}$ & $60.48_{(1.00)}$ & $83.40_{(1.20)}$ & $66.54_{(1.10)}$ & $34.45_{(0.47)}$ & $56.01(\downarrow 1.34)$&$2.688 (\uparrow 0.021)$\\
        \bottomrule
    \end{tabular}\label{table:syn-410m-downstream}
    }

       \subfigure[1B model, continually pre-trained. Note that $r=0.4$ is early stopped due to its Pile validation loss increasing beyond the acceptable range.]{
    \begin{tabular}{@{} c c c c c c c c c c c@{}}
        \toprule
       $N$&  $\rho$ (\%) & $r$ & \makecell{Training\\tokens (B)} & \makecell{LAMBADA} & ARC-E  & Sciq &  PIQA & HellaSwag & Avg. & \makecell{Pile\\val. loss  }\\
        \midrule
        \multicolumn{4}{@{}c@{}}{Pythia-1B-100k-ckpt} &  $55.66_{(0.69)}$ & $54.50_{(1.02)}$ & $83.00_{(1.19)}$ & $70.78_{(1.06)}$ & $36.97_{(0.48)}$ & $60.18$ &$2.168$\\
        \midrule
        $2.56$M & $100$ &$0.2$& $64$ & $53.68_{(0.69)}$ & $51.47_{(1.03)}$ & $81.00_{(1.24)}$ & $68.77_{(1.08)}$ & $35.91_{(0.48)}$ & $58.17 (\downarrow 2.01)$ &$2.184 (\uparrow 0.016)$ \\
        $2.56$M & $100$ &$0.4$& $24$ & $52.38_{(0.70)}$ & $51.47_{(1.03)}$ & $80.70_{(1.25)}$ & $68.17_{(1.09)}$ & $34.95_{(0.48)}$ & $57.53(\downarrow 2.65)$ &$2.198 (\uparrow 0.030 )$\\
        \midrule[0.2pt]
        $1.28$M & $50$ & $0.2$& $64$ &$54.71_{(0.69)}$ & $52.86_{(1.02)}$ & $81.30_{(1.23)}$ & $68.99_{(1.08)}$ & $35.48_{(0.48)}$ & $58.67 (\downarrow 1.51)$&$2.189 (\uparrow 0.022 )$\\
        \bottomrule
    \end{tabular}\label{table:syn-1b-downstream}
    }
\end{table}
\begin{table}[H]
\caption{Detailed downstream performance for applying the random subsampling strategy to WikiBio. We use $\rho$ to denote the subsampling ratio. We report the accuracy (\%) and standard deviation (\%) in the format $\mathrm{acc._{(std.\  dev.)}}$ for each downstream task. Note that $r=0.2$ is early stopped due to its Pile validation loss increasing beyond the acceptable range.}\label{table:410m-wiki-downstream}
    \centering
    \scriptsize
    \begin{tabular}{@{} c c  c c c c c c c c c@{}}
        \toprule
       $N$ & $\rho$ (\%) & $r$ & \makecell{Training\\tokens (B)} & \makecell{LAMBADA} & ARC-E  & Sciq &  PIQA & HellaSwag & Avg. & \makecell{Pile val. loss}\\
        \midrule
        \multicolumn{4}{@{}c@{}}{Pythia-1B-100k-ckpt} & $50.86_{(0.70)}$ & $52.10_{(1.03)}$ & $83.70_{(1.17)}$ & $67.14_{(1.10)}$ & $34.09_{(0.47)}$ & $57.58$ &$2.255$ \\
        \midrule
        $277$k & $100$ &$0.1$& $32$ &$50.77_{(0.70)}$ & $48.95_{(1.03)}$ & $80.80_{(1.25)}$ & $66.43_{(1.10)}$ & $33.16_{(0.47)}$ & $56.02 (\downarrow 1.56)$&$2.286 (\uparrow 0.031)$\\
        $277$k & $100$ &$0.15$ & $32$ &$49.12_{(0.70)}$ & $49.66_{(1.03)}$ & $81.80_{(1.22)}$ & $66.38_{(1.10)}$ & $32.84_{(0.47)}$ & $55.96(\downarrow 1.62)$ &$2.292 (\uparrow 0.037)$\\
        $277$k & $100$ &$0.2$ & $20$ &$49.37_{(0.70)}$ & $49.87_{(1.03)}$ & $79.70_{(1.27)}$ & $65.40_{(1.11)}$ & $33.08_{(0.47)}$ & $55.48(\downarrow 2.10)$ & $2.306 (\uparrow 0.051)$\\
        \midrule[0.2pt]
         $69$k & $25$ &$0.1$ & $32$ &$48.63_{(0.70)}$ & $50.59_{(1.03)}$ & $81.00_{(1.24)}$ & $66.49_{(1.10)}$ & $33.16_{(0.47)}$ & $55.97 (\downarrow 1.54)$ & $2.286(\uparrow 0.031)$\\
        $137$k & $50$ &$0.1$ & $32$&$50.30_{(0.70)}$ & $50.38_{(1.03)}$ & $78.80_{(1.29)}$ & $66.27_{(1.10)}$ & $33.16_{(0.47)}$ & $55.78 (\downarrow 1.80)$& $2.285(\uparrow 0.030)$\\
        $208$k & $75$ & $0.1$ & $32$&$50.34_{(0.70)}$ & $49.20_{(1.03)}$ & $80.10_{(1.26)}$ & $66.97_{(1.10)}$ & $33.19_{(0.47)}$ & $55.96 (\downarrow 1.62)$ & $2.286(\uparrow 0.031)$\\
        \bottomrule
    \end{tabular}
\end{table}

\begin{table}[H]
\caption{Detailed downstream performance for applying the compact knowledge mixing strategy on WikiBio. We use $\tau$ to denote the CKM ratio. We report the accuracy (\%) and standard deviation (\%) in the format $\mathrm{acc._{(std.\  dev.)}}$ for each downstream task. Note that $r=0.2$ is early stopped due to its Pile validation loss increasing beyond the acceptable range. }\label{table:410m-ckm-downstream}
    \centering
    \scriptsize
    \begin{tabular}{@{} c  c c c c c c c c c@{}}
        \toprule
       $r$ & $\tau$ (\%) &  \makecell{Training\\tokens (B)} & \makecell{LAMBADA} & ARC-E  & Sciq &  PIQA & HellaSwag & Avg. & \makecell{Pile val. loss}\\
        \midrule
        \multicolumn{3}{@{}c@{}}{Pythia-1B-100k-ckpt} & $50.86_{(0.70)}$ & $52.10_{(1.03)}$ & $83.70_{(1.17)}$ & $67.14_{(1.10)}$ & $34.09_{(0.47)}$ & $57.58$ &$2.255$ \\
        \midrule
      $0.1$&  $0$ & $32$ &$50.77_{(0.70)}$ & $48.95_{(1.03)}$ & $80.80_{(1.25)}$ & $66.43_{(1.10)}$ & $33.16_{(0.47)}$ & $56.02 (\downarrow 1.56)$&$2.286 (\uparrow 0.031)$\\
   $0.15$ & $0$ & $32$ &$49.12_{(0.70)}$ & $49.66_{(1.03)}$ & $81.80_{(1.22)}$ & $66.38_{(1.10)}$ & $32.84_{(0.47)}$ & $55.96(\downarrow 1.62)$ &$2.292 (\uparrow 0.037)$\\
  $0.2$ & $0$ &$20$ & $49.37_{(0.70)}$ & $49.87_{(1.03)}$ & $79.70_{(1.27)}$ & $65.40_{(1.11)}$ & $33.08_{(0.47)}$ & $55.48(\downarrow 2.10)$ & $2.306 (\uparrow 0.051)$\\
        \midrule[0.2pt]
$0.1$ & $10$ & $32$ &$49.70_{(0.70)}$ & $49.54_{(1.03)}$ & $80.40_{(1.26)}$ & $66.32_{(1.10)}$ & $33.11_{(0.47)}$ & $55.81(\downarrow 1.77)$& $2.287(\uparrow 0.032)$\\
$0.1$ &$30$  & $32$&$50.11_{(0.70)}$ & $49.12_{(1.03)}$ & $80.20_{(1.26)}$ & $66.54_{(1.10)}$ & $33.11_{(0.47)}$ & $55.82(\downarrow 1.76)$ & $2.285(\uparrow 0.030)$\\
 $0.1$ & $60$ & $32$ & $49.99_{(0.70)}$ & $49.41_{(1.03)}$ & $80.00_{(1.27)}$ & $65.78_{(1.11)}$ & $32.99_{(0.47)}$ & $55.63(\downarrow 1.76)$ & $2.286(\uparrow 0.031)$\\
        \bottomrule
    \end{tabular}
\end{table}

\newpage

\section{Experimental Details}\label{sec:expdetail}
\subsection{General Setup}\label{sec:general-setup}
\paragraph{Code Base and Hyperparameters.} Our experiments use the GPT-NeoX library~\citep{gpt-neox-library}. For all experiments, we set the batch size as 512 and the sequence length as 2048. For all the experiments in~\Cref{sec:phase_tran}, we use a Warmup-Stable-Decay (WSD) learning rate schedule with a peak learning rate of $10^{-3}$. We allocate 160 steps for warmup and the final 10\% steps for cooldown. We keep other hyperparameters consistent with those used in Pythia. 

\paragraph{Hardware.} We train models of sizes 70M and 160M using 8 NVIDIA RTX 6000 Ada GPUs, while models of sizes 410M and 1B are trained using either 16 NVIDIA RTX 6000 Ada GPUs or 8 NVIDIA A100 GPUs. The estimated runtime required to train each model size on 1B tokens is detailed in~\Cref{table:runtime}. Consequently, a typical run training a 410M model on 32B tokens takes approximately 32 hours, whereas the longest run, which trains a 1B model on 64B tokens, exceeds five days.
\begin{table}[H]
\caption{Estimated runtime required to train each model size on 1B tokens on our hardware.}\label{table:runtime}
    \centering
    \small
    \begin{tabular}{@{} c c c@{}}
        \toprule
       Model size&  Hardware & Runtime (h) per billion tokens.\\
        \midrule
       70M & \multirow{2}{*}{8xNVIDIA RTX 6000 Ada}& 0.25\\
       160M & & 0.70 \\
       \midrule
       410M & \multirow{2}{*}{\makecell{16xNVIDIA RTX 6000 Ada \\or 8xNVIDIA A100} }& 1.0\\
       1B & & 2.0\\
       \midrule
       2.8B &  \multirow{2}{*}{\makecell{8xNVIDIA A100}} & 5.8\\
       6.9B &   & 16.89\\
        \bottomrule 
    \end{tabular}
\end{table}
\paragraph{Implementation of Data Mixing.} Let $S$ denote the total number of training tokens. Then, the model sees $rS$ tokens from the knowledge-dense dataset and $(1-r)S$ tokens from the web data. Let $S_1$ and $S_2$ denote the total sizes (in tokens) of the knowledge-dense dataset and web data. Since $S_1$ is small ($<1$B tokens) and $S_2$ is large ($>1$T tokens), for the training horizons $S$ considered in our experiments, we typically have $rS> S_1$ and $(1-r)S < S_2$. In this case, we replicate the knowledge-dense dataset $rS/S_1$ times, sample a random $(1-r)S$-token subset from the web data, and then shuffle the combined data.

\paragraph{Licenses for the Public Assets.} The FineWeb-Edu and OpenWebMath datasets are under the ODC-BY License. The Pile dataset is under the MIT License. The Pythia model suite and the gpt-neox-library are under the Apache License 2.0. All of them are open for academic usage.
\newpage
\subsection{Details of Dataset Construction}\label{sec:dataset-construction}
\subsubsection{Constructing the SynBio Dataset}\label{sec:synbio-detail}
To generate names, we collect a list of 400 common first names, 400 common middle names, and 1000 common last names, resulting in $1.6\times 10^{8}$ unique names. To generate SynBio-$N$, we sample $N$ names from this set without replacement. For each individual, the value for each attribute is randomly assigned as follows: birth date (1–28 days × 12 months × 100 years spanning 1900–2099), birth city (from 200 U.S. cities), university (from 300 institutions), major (from 100 fields of study), and employer (from 263 companies). Each attribute is paired with five sentence templates, which are used to convert (name, attribute, value) triplets into natural text descriptions. A complete list of sentence templates is provided in ~\Cref{tab:bio templates}, and an example of a synthetic biography can be found in ~\Cref{tab:bio example}.

\begin{table}[H]
\centering
\small
\caption{Sentence templates to generate the SynBio Dataset.}
\begin{tabular}{@{}p{4cm}p{10cm}@{}} 
\toprule
Attribute & Template \\
\midrule

\multirow{5}{*}{Birth date} &
\texttt{\{name\}} was born on \texttt{\{birth date\}}. \\
& \texttt{\{name\}} came into this world on \texttt{\{birth date\}}. \\
& \texttt{\{name\}}'s birth date is \texttt{\{birth date\}}. \\
& \texttt{\{name\}}'s date of birth is \texttt{\{birth date\}}. \\
& \texttt{\{name\}} celebrates \texttt{\{possessive pronoun\}} birthday on \texttt{\{birth date\}}. \\
\midrule

\multirow{5}{*}{Birth city} &
\texttt{\{name\}} spent \texttt{\{possessive pronoun\}} early years in \texttt{\{birth city\}}. \\
& \texttt{\{name\}} was brought up in  \texttt{\{birth city\}}.\\
& \texttt{\{name\}}'s birthplace is \texttt{\{birth city\}}. \\
& \texttt{\{name\}} originates from \texttt{\{birth city\}}. \\
& \texttt{\{name\}} was born in \texttt{\{birth city\}}. \\
\midrule

\multirow{5}{*}{University} &
\texttt{\{name\}} received mentorship and guidance from faculty members at \texttt{\{university\}}. \\
& \texttt{\{name\}} graduated from \texttt{\{university\}}.\\
& \texttt{\{name\}} spent \texttt{\{possessive pronoun\}} college years at \texttt{\{university\}}. \\
& \texttt{\{name\}} completed \texttt{\{possessive pronoun\}} degree at \texttt{\{university\}}. \\
& \texttt{\{name\}} completed \texttt{\{possessive pronoun\}} academic journey at \texttt{\{university\}}. \\
\midrule
\multirow{5}{*}{Major} &
\texttt{\{name\}} completed \texttt{\{possessive pronoun\}} education with a focus on \texttt{\{major\}}. \\
& \texttt{\{name\}} devoted \texttt{\{possessive pronoun\}} academic focus to \texttt{\{major\}}.\\
& \texttt{\{name\}} has a degree in \texttt{\{major\}}. \\
& \texttt{\{name\}} focused \texttt{\{possessive pronoun\}} academic pursuits on \texttt{\{major\}}. \\
& \texttt{\{name\}} specialized in the field of \texttt{\{major\}}. \\
\midrule
\multirow{5}{*}{Employer} &
\texttt{\{name\}} is employed at \texttt{\{employer\}}. \\
& \texttt{\{name\}} a staff member at \texttt{\{employer\}}.\\
& \texttt{\{name\}} is associated with \texttt{\{employer\}}. \\
& \texttt{\{name\}} is engaged in work at \texttt{\{employer\}}. \\
& \texttt{\{name\}} is part of the team at \texttt{\{employer\}}. \\
\bottomrule
\end{tabular}\label{tab:bio templates}
\end{table}

\begin{table}[H]
\centering
\begin{minipage}{0.8\textwidth}
\caption{An example of a synthetic biography. The values that we expect the model to recall during evaluation are underlined.}
\begin{tabular}{@{}p{11cm}@{}} 
\toprule
Gracie Tessa Howell's birth date is \underline{August 09, 1992}. Gracie Tessa Howell's birthplace is \underline{St. Louis, MO}. Gracie Tessa Howell received mentorship and guidance from faculty members at \underline{Santa Clara University}. Gracie Tessa Howell has a degree in \underline{Robotics}. Gracie Tessa Howell is engaged in work at \underline{Truist Financial}.\\
\bottomrule
\end{tabular}\label{tab:bio example}
\end{minipage}
\end{table}



\subsubsection{Constructing the WikiBio Dataset}\label{sec:wiki data detail}
To create the WikiBio dataset, we first query Wikidata to gather names and birth dates of individuals from 16 common occupations. We then identify each person’s Wikipedia page by matching their name with the page title. We retain the first paragraph of each page, as it typically provides a short summary of the person's life and contains key biographical information. The detailed composition is listed in~\Cref{table:wiki data composition}. Finally, to align with the evaluation setup in~\Cref{sec:mitigation}, we filter the dataset to ensure both the person's occupation and birth date are explicitly mentioned.

Inspired by~\citet{allen2023physics}, we employ Llama-3.1-70B-Instruct to paraphrase each biography ten times, thereby simulating the real-world scenario where models encounter different variations of the same person's information during training. See~\Cref{tab:paraphrase} for the prompt for paraphrasing. 
An example of the original text and the paraphrased versions are presented below.
\begin{itemize}
\item \textbf{Original text}:
Rebecca Jo Budig (born June 26, 1973) is an American actress and television presenter. Her career began in 1993, and in 1995, she was cast in the role of Michelle Bauer on the CBS soap opera Guiding Light. In 1999, she was cast as Greenlee Smythe on the ABC soap opera All My Children; she held the role off-and-on until the network series finale in 2011. In 2015, she was cast in the role of Hayden Barnes on General Hospital. In 2019, Budig was cast on L.A.'s Finest, as Carlene Hart, the drug trafficking-soccer mom of two young school-age children, a boy and a girl.
\item \textbf{Paraphrased versions}:
\begin{enumerate}
    \item Born on June 26, 1973, Rebecca Jo Budig is an American television presenter and actress with a career spanning nearly three decades. Her professional journey began in 1993 and has included notable roles such as Michelle Bauer on Guiding Light and Greenlee Smythe on All My Children. After playing the latter role on-and-off until 2011, she went on to portray Hayden Barnes in General Hospital in 2015 and Carlene Hart in the 2019 television series L.A.'s Finest.
    \item With a diverse career in television, Rebecca Jo Budig, born June 26, 1973, has established herself as a talented actress and presenter. Her career milestones include her roles as Michelle Bauer in the CBS soap opera Guiding Light, and Greenlee Smythe in All My Children. Her portrayal of Greenlee spanned several years, concluding with the show's finale in 2011. Budig continued to expand her repertoire with roles in General Hospital and L.A.'s Finest.
    \item Rebecca Jo Budig is a versatile American actress and television host, born on June 26, 1973. Since her career began in 1993, she has landed prominent roles in several television series. One of her earliest notable roles was Michelle Bauer in Guiding Light, followed by her portrayal of Greenlee Smythe in All My Children, a character she played until the series' conclusion in 2011. In the years that followed, she appeared in General Hospital and L.A.'s Finest.
    \item Rebecca Jo Budig, an American actress and television presenter, was born on June 26, 1973. She began her career two decades later, securing the role of Michelle Bauer on Guiding Light. Budig's subsequent roles have included Greenlee Smythe on All My Children, a part she played intermittently until the series ended in 2011. Her later appearances include a role in General Hospital and as Carlene Hart in the series L.A.'s Finest.
    \item American actress Rebecca Jo Budig was born on June 26, 1973. Her television career, which began in 1993, encompasses multiple notable roles, such as Michelle Bauer on the soap opera Guiding Light and Greenlee Smythe on All My Children. She portrayed the latter character until the series finale in 2011. Budig later appeared as Hayden Barnes in General Hospital and took on the role of Carlene Hart in L.A.'s Finest.
    \item Since launching her career in 1993, Rebecca Jo Budig has established herself as a talented actress and television presenter in the United States. Born on June 26, 1973, she has appeared in a range of notable roles, including Michelle Bauer on Guiding Light and Greenlee Smythe on All My Children. The latter role spanned several years, concluding with the show's finale in 2011. Her subsequent appearances include General Hospital and L.A.'s Finest.
    \item Rebecca Jo Budig, born on June 26, 1973, has enjoyed a successful career in American television as an actress and presenter. Her breakout role came in 1995 when she was cast as Michelle Bauer on Guiding Light. Later, she played the character Greenlee Smythe on All My Children, a part she held intermittently until the show's finale in 2011. Her more recent roles include appearances in General Hospital and as Carlene Hart in L.A.'s Finest.
    \item Born on June 26, 1973, Rebecca Jo Budig is a talented American actress and television presenter. Since her career began in 1993, she has appeared in various television series. Notable roles include her portrayal of Michelle Bauer on the soap opera Guiding Light, as well as Greenlee Smythe on All My Children. Budig continued to expand her acting repertoire with roles in General Hospital and L.A.'s Finest, including her portrayal of Carlene Hart.
    \item As an American actress and television host, Rebecca Jo Budig has had a diverse career spanning nearly three decades. Born on June 26, 1973, she began her professional journey in 1993. Her notable roles include Michelle Bauer on Guiding Light and Greenlee Smythe on All My Children, a character she played until the series finale in 2011. Her subsequent appearances include General Hospital and the series L.A.'s Finest, where she portrayed Carlene Hart.
    \item With a career in television that began in 1993, Rebecca Jo Budig, born June 26, 1973, has established herself as a versatile actress and presenter. Her early roles include Michelle Bauer on Guiding Light, while her breakout role came as Greenlee Smythe on All My Children. She continued to portray Greenlee intermittently until the show's finale in 2011. Her later roles include appearances in General Hospital and L.A.'s Finest, where she took on the role of Carlene Hart.
\end{enumerate}
\end{itemize}
\begin{table}[H]
\caption{Detailed Composition of WikiBio.}\label{table:wiki data composition}
    \centering
    \small
    \begin{tabular}{@{} c c @{}}
        \toprule
       Occupation & Num. Wikipedia biographies \\
        \midrule
      Singer & 18,482\\
      Actor & 31,846 \\
      Politician &  38,653\\
      Businessperson & 8,068 \\
      Mathematician & 5,093\\
      Physicist &  4,296\\
      Writer & 26,746 \\
      Football player & 56,547 \\
      Basketball player &  16,956\\
      Sport shooter & 3,156\\
      Tennis plater & 7,602\\
      Swimmer &  9,108 \\
      Painter & 12,927 \\
      Volleyball player & 3,556 \\
      Composer &  13,719 \\
      Athlete & 18,013\\
       \midrule
       Total & 274,768\\
        \bottomrule 
    \end{tabular}
\end{table}

\begin{table}[H]
\centering
\begin{minipage}{0.7\textwidth}
\caption{The prompt for paraphrasing the first paragraph of Wikipedia documents.}
\begin{tabular}{@{}p{10cm}@{}} 
\toprule
 \begin{lstlisting}[language=prompt,belowskip=-6pt]
I am creating the training data for an LLM. I would like to teach it to flexibly extract knowledge from a Wikipedia paragraph. Therefore, I want to diversify the Wikipedia paragraphs as much as possible so that the model can learn the actual relationships between entities, rather than just memorizing the text. Please assist with the paraphrasing task. Paraphrase the following Wikipedia paragraph about (@\intrprmpt{Wikipedia document title}@) 10 times. Aim to make the paraphrased versions as varied as possible. Ensure all essential information is retained, particularly the information about the birthday and the occupation.
\end{lstlisting} \\
\bottomrule
\end{tabular}\label{tab:paraphrase}
\end{minipage}
\end{table}

\newpage
\subsection{Constructing the SlopeQA Dataset}\label{sec:slope-detail}
Every time the model sees a slope calculation example, we first uniformly sample $x_1, y_1, x_2, y_2$ from $\set{0, 1, \cdots, 99}$ (ensuring $x_1\neq x_2$), and then apply randomly chose question and step-by-step answer templates. We prompt GPT-4o to generate diverse question and answer templates, as shown in~\Cref{tab:slope-question-template,tab:slope-answer-template}. The final answer is expressed as the simplified fraction.
\begin{table}[H]
    \centering
    \caption{Question templates for the slope calculation subtask.}
    \begin{tabular}{@{}p{15cm}@{}} 
    \toprule
     \begin{lstlisting}[language=prompt,belowskip=-6pt]
        Q: Find the slope of the line passing through ((@\intrprmpt{x1}@),(@\intrprmpt{y1}@)) and ((@\intrprmpt{x2}@),(@\intrprmpt{y2}@)).

        Think step-by-step.
    \end{lstlisting} \\
    \midrule
    \begin{lstlisting}[language=prompt,belowskip=-6pt]
        Q: What is the slope of the line passing through ((@\intrprmpt{x1}@),(@\intrprmpt{y1}@)) and ((@\intrprmpt{x2}@),(@\intrprmpt{y2}@))?

        Think step-by-step.
    \end{lstlisting} \\
    \midrule
    \begin{lstlisting}[language=prompt,belowskip=-6pt]
        Q: Compute the slope of the line going through ((@\intrprmpt{x1}@),(@\intrprmpt{y1}@)) and ((@\intrprmpt{x2}@),(@\intrprmpt{y2}@)).

        Think step-by-step.
    \end{lstlisting} \\
    \midrule
    \begin{lstlisting}[language=prompt,belowskip=-6pt]
        Q: Determine the slope of the line connecting ((@\intrprmpt{x1}@),(@\intrprmpt{y1}@)) and ((@\intrprmpt{x2}@),(@\intrprmpt{y2}@)).

        Think step-by-step.
    \end{lstlisting} \\
    \midrule
    \begin{lstlisting}[language=prompt,belowskip=-6pt]
        Q: If a line goes through ((@\intrprmpt{x1}@),(@\intrprmpt{y1}@)) and ((@\intrprmpt{x2}@),(@\intrprmpt{y2}@)), what is its slope?

        Think step-by-step.
    \end{lstlisting} \\
    \bottomrule
    \end{tabular}\label{tab:slope-question-template}
\end{table}

\begin{longtable}[H]{@{}p{15cm}@{}}
    \caption{Answer templates for the slope calculation subtask.}\label{tab:slope-answer-template}\\
     \toprule
     \begin{lstlisting}[language=prompt,belowskip=-6pt]
        A: Recall the slope formula: 
            k=(y2-y1)/(x2-x1)

        1. Let (x1,y1)=((@\intrprmpt{x1}@),(@\intrprmpt{y1}@)) and (x2,y2)=((@\intrprmpt{x2}@),(@\intrprmpt{y2}@)).

        2. Compute the difference:
            x2-x1=(@\intrprmpt{x2}@)-(@\intrprmpt{x1}@)=(@\intrprmpt{x2-x1}@)
            y2-y1=(@\intrprmpt{y2}@)-(@\intrprmpt{y1}@)=(@\intrprmpt{y2-y1}@)

        3. Plug into the formula:
            k=(@\intrprmpt{x2-x1}@)/(@\intrprmpt{y2-y1}@)

        ### Final Answer:
            k=(@\intrprmpt{k}@)
    \end{lstlisting} \\
    \midrule
    \begin{lstlisting}[language=prompt,belowskip=-6pt]
        A:
        1. Identify the coordinates:
            First point: (x1,y1)=((@\intrprmpt{x1}@),(@\intrprmpt{y1}@))
            Second point: (x2,y2)=((@\intrprmpt{x2}@),(@\intrprmpt{y2}@))

        2. Compute the difference in y-values:
            (@\intrprmpt{y2}@)-(@\intrprmpt{y1}@)=(@\intrprmpt{y2-y1}@)

        3. Compute the difference in x-values:
            (@\intrprmpt{x2}@)-(@\intrprmpt{x1}@)=(@\intrprmpt{x2-x1}@)

        4. Divide the differences:
            k=(@\intrprmpt{y2-y1}@)/(@\intrprmpt{x2-x1}@)

        ### Final Answer:
            k=(@\intrprmpt{k}@)
    \end{lstlisting} \\
    \midrule
    \begin{lstlisting}[language=prompt,belowskip=-6pt]
        A: We use the slope formula:
            k=(y2-y1)/(x2-x1)

            1. Plug in the coordinates:
            k=((@\intrprmpt{y2}@)-(@\intrprmpt{y1}@))/((@\intrprmpt{x2}@)-(@\intrprmpt{x1}@))

            2. Simplify the numerator and denominator:
            k=(@\intrprmpt{y2-y1}@)/(@\intrprmpt{x2-x1}@)

            3. Simplify the fraction:
            k=(@\intrprmpt{k}@)

        ### Final Answer:
            k=(@\intrprmpt{k}@)
    \end{lstlisting} \\
    \midrule
    \begin{lstlisting}[language=prompt,belowskip=-6pt]
        A: To find the slope, we use the slope formula:
            k=(y2-y1)/(x2-x1)
     
            1. We are given two points:((@\intrprmpt{x1}@),(@\intrprmpt{y1}@)),((@\intrprmpt{x2}@),(@\intrprmpt{y2}@))
     
            2. Plug the values into the formula:
                k=((@\intrprmpt{y2}@)-(@\intrprmpt{y1}@))/((@\intrprmpt{x2}@)-(@\intrprmpt{x1}@))=(@\intrprmpt{y2-y1}@)/(@\intrprmpt{x2-x1}@)
     
        ### Final Answer:
            k=(@\intrprmpt{k}@)
    \end{lstlisting} \\
    \midrule\\
    \begin{lstlisting}[language=prompt,belowskip=-6pt]
        A: The slope k between (x1,x2) and (y1,y2) is given by:
        k=(y2-y1)/(x2-x1)
     
        1. From the problem: (x1,y1)=((@\intrprmpt{x1}@),(@\intrprmpt{y1}@)),(x2,y2)=((@\intrprmpt{x2}@),(@\intrprmpt{y2}@))
     
        2. Substituting into the formula:
            k=((@\intrprmpt{y2}@)-(@\intrprmpt{y1}@))/((@\intrprmpt{x2}@)-(@\intrprmpt{x1}@))=(@\intrprmpt{y2-y1}@)/(@\intrprmpt{x2-x1}@)
     
        3. Simplify:
            k=(@\intrprmpt{y2-y1}@)/(@\intrprmpt{x2-x1}@)
     
        ### Final Answer:
            k=(@\intrprmpt{k}@)
    \end{lstlisting} \\
    \bottomrule
\end{longtable}

\subsection{Constructing the Max-over-N Dataset}\label{sec:max-over-n-detail}
We design a new task named ``Max-over-N'', where the model is tasked with outputting the maximum number of a list of $N$ integers, each randomly sampled from $\set{0, 1, \cdots, 99}$. In our experiments, we set $N=30$.

\begin{table}[H]
    \centering
    \caption{An example of the Max-over-N subtask.}
    \begin{tabular}{@{}p{15cm}@{}} 
    \toprule
     \begin{lstlisting}[language=prompt,belowskip=-6pt]
        Q: Find the maximum value of the following list:
        [47, 83, 38, ...]
        Think step-by-step.

        A: 
        Compare 47 and 83. Keep 83.
        Compare 83 and 38. Keep 83.
    \end{lstlisting} \\
    \bottomrule
    \end{tabular}\label{tab:max-over-n-question-template}
\end{table}


\subsection{Details of the Fitting Process}\label{sec:fitting detail}
We use $T$ to denote the required training steps to reach 40\% accuracy and $r$ to denote the mixing ratio.
\paragraph{Fitting the exponential function.} We fit $T$ with respect to $r$ for all $r\geq 0.3$ using the function $T(r)=A\exp(B/r)$, where $A$ and $B$ are coefficients to be fitted. Taking logarithmic on both sides, we obtain a linear function $\log T = \log A + B/r$. By fitting $\log T$ against $1/r$ with linear regression, we obtain $\log A\approx -0.25512, B\approx 1.5137$ with goodness-of-fit $R^2=0.9980$. 

\paragraph{Fitting the power-law function.} We fit $T$ with respect to $r$ for all $r\in\set{0.3, 0.4, 0.45, 0.5, 0.55}$ using the function $T(r) = C r^{-D}$, where $C$ and $D$ are coefficients to be fitted. Taking logarithmic on both sides, we obtain a linear function $\log T = \log C - D \log r$.  By fitting $\log T$ against $\log r$ with linear regression, we obtain $C\approx 0.098158, D\approx 3.83878$ with goodness-of-fit $R^2=0.9853$.
\subsection{Details of Estimating the Threshold Popularity}\label{sec:gpt-detail}

Following~\citet{mallen-etal-2023-trust}, we evaluate models using 15-shot prompting. We use the prompt presented in~\Cref{tab:popqa} for evaluatioin and allow models to generate up to 128 tokens with greedy decoding. To assess answer correctness, we employ Llama-3.1-8B-Instruct as a judge. Specifically, we instruct the Llama-3.1-8B-Instruct model to evaluate the semantic similarity between the model-generated answer and the reference answer provided in PopQA. The prompt used for the Llama judge is detailed in~\Cref{tab:syn}. 

After judging the correctness of each answer, we use~\Cref{alg:popularity_calculation} to estimate the popularity threshold. In our experiments, we set the target accuracy $\alphatarget=60\%$ and the fault tolerance level $\nfail=5$.




\begin{table}[H]
\centering
\caption{The prompt for evaluating models on PopQA.}\label{tab:popqa}
\begin{tabular}{@{}p{14cm}@{}} 
\toprule
 \begin{lstlisting}[language=prompt,belowskip=-6pt]
You are a helpful assistant. I want to test your knowledge level. Here are a few examples.

(@\intrprmpt{few shot examples text with templates}@)

Now, I have a question for you. Please respond in just a few words, following the style of the examples provided above.
\end{lstlisting} \\
\bottomrule
\end{tabular}
\end{table}


\begin{table}[H]
\centering
\caption{The prompt for testing synonym.}\label{tab:syn}
\begin{tabular}{@{}p{12cm}@{}} 
\toprule
 \begin{lstlisting}[language=prompt,belowskip=-6pt]
<|begin_of_text|><|start_header_id|>system<|end_header_id|>
Cutting Knowledge Date: December 2023
Today Date: 19 Dec 2024

You are a linguistic expert specializing in synonyms. Your task is to determine whether two given English words are synonyms or not. A synonym is a word that has a very similar meaning to another word and can often replace it in sentences without significantly changing the meaning.

For each pair of words provided:
1. Analyze their meanings and typical usage.
2. Decide whether they are synonyms (Yes/No).
3. Provide a brief explanation for your decision.

Here are some examples to guide you:

Words: "happy" and "joyful"  
Yes  
Explanation: Both words describe a state of being pleased or content and are often interchangeable in most contexts.

Words: "run" and "jog"  
No  
Explanation: While both refer to forms of movement, "run" typically implies a faster pace than "jog."

Words: "angry" and "frustrated"  
No  
Explanation: Although both express negative emotions, "angry" implies strong displeasure or rage, while "frustrated" conveys annoyance due to obstacles or failure.

<|eot_id|><|start_header_id|>user<|end_header_id|>

Words: {} and {}
<|eot_id|><|start_header_id|>assistant<|end_header_id|>
\end{lstlisting} \\
\bottomrule
\end{tabular}
\end{table}

\begin{algorithm}[H]
    \caption{Estimate Threshold Popularity}
    \label{alg:popularity_calculation}
    \begin{algorithmic}[1]
        \STATE {\bfseries Input:} 
        \STATE \quad - $x$: A list of popularity values for each data point, where $x_i$ represents the popularity of the $i$-th data point.
        \STATE \quad - $y$: A list of binary values indicating the correctness of the model's response, where $y_i = 1$ if the model answers the $i$-the question correctly, and $y_i = 0$ otherwise.
        \STATE \quad - $\alphatarget$: The target accuracy.
        \STATE \quad - $\nfail$: The maximum number of failures before termination, denoting the fault tolerance level.
        \STATE {\bfseries Output:} 
        \STATE \quad - $\pthres$: The threshold popularity.
        \STATE Initialize correct count: $\text{sum\_correct} \gets 0$
        \STATE Initialize error count: $e \gets 0$
        \STATE Sort $(x, y)$ by $x$ in ascending order and store the indices in a list $I$.
        \STATE Initialize loop variable $j \gets \text{len}(x) - 1$
        \STATE Initialize flag counter $\text{flag} \gets 0$

        \WHILE{$j \geq 0$}
            \STATE $k \gets j$
            \WHILE{$k \geq 0$ \textbf{and} $x_{I_k} = x_{I_j}$}
                \STATE $k \gets k - 1$
            \ENDWHILE
            \FOR{$l = k + 1$ \textbf{to} $j$}
                \STATE $i \gets I_l$
                \STATE $\text{sum\_correct} \gets \text{sum\_correct} + y_i$
            \ENDFOR
            \IF{$\frac{\text{sum\_correct}}{\text{len}(x) - k - 1} < \text{set\_threshold}$}
                \STATE $e \gets e + 1$
            \ENDIF
            \IF{$e = \nfail$}
                \STATE \textbf{Return:} $x_{I_j}$ \COMMENT{Return the threshold popularity}
            \ENDIF
            \STATE $j \gets k$
        \ENDWHILE
        \STATE \textbf{Return:} $x_{\text{I}_0}$ \COMMENT{If no such point is found, return the smallest popularity value}
    \end{algorithmic}
\end{algorithm}

\subsection{Experimental Details for Strategies to Enhance Knowledge Acquisition}\label{sec:mitigation detail}
This subsection presents the experimental details for~\Cref{sec:mitigation}.

\paragraph{Evaluation Details for WikiBio.} For simplicity, we focus on how well the model memorizes one specific type of fact: the birth date of a person, which is ensured to be mentioned in WikiBio. Specifically, for each fact, which can be represented as a (name, occupation, birth date) triplet, we prompt the model with ``The \{\texttt{occupation}\} \{\texttt{name}\} was born on'' and consider the response correct if it contains the correct birth year and month. The occupation is included in the evaluation prompt not only to avoid prompts that are exactly identical to the training data but also to provide additional context for disambiguation.
\paragraph{Implementation Details of CKM.} When we apply CKM to WikiBio, we augment the dataset by adding compact tuple representations. To maintain the same token budget for WikiBio after augmentation (as $r$ and the total training tokens are fixed), we proportionally reduce the number of epochs. Although the model completes fewer epochs over the dataset, each fact's frequency per epoch is increased, boosting its total exposure during training. For example, in~\Cref{fig:410-wiki-sub}, setting $\tau=0.1, 0.3$ and $0.6$ correspond to roughly 2x, 3x, and 4x increases in fact frequency, respectively. Each time models encounter the tuple-form data point, the order of birth date and occupation is randomly flipped.

\paragraph{Experimental Details of Random Subsampling.} In~\Cref{fig:410-syn-sub-loss}, we train all models from scratch on the mixture of FineWeb-Edu and SynBio-1.28M using the cosine learning rate schedule with a peak value of $10^{-3}$. In~\Cref{fig:410-wiki-sub,fig:1b-syn-sub}, following~\citet{zhu2024deepseekcoder}, we continually pre-train intermediate checkpoints of Pythia models. This strategy allows us to use a larger learning rate without experiencing extreme loss spikes. Specifically, we continually pre-train 410M and 1B Pythia models from their respective 100k-step checkpoint with a constant learning rate of $8.7\times 10^{-5}$, which corresponds to the original learning rate used at step 100k in the Pythia model training. 
\section{Proofs of Theoretical Results}\label{sec:proofs}

We follow the notations in~\Cref{sec:theory}. We use $H(\,\cdot\,)$ to denote the entropy and $I(\,\cdot\,; \,\cdot\,)$ to denote the mutual information.

We define a data distribution $\cD$ as a distribution over $(x, y)$, where $x$ is an input and $y$ is a token.
A data universe $\cU = (\cP, \cD_{\theta})$ is defined by a prior $\cP$ over a latent variable $\theta$ and a family of data distributions $\cD_{\theta}$ indexed by $\theta$.

A predictor $h$ is a function that maps $x$ to a distribution over $y$.
A learning algorithm $\cA$ is a procedure that takes samples from a data distribution~$\cD$ of $(x, y)$ and outputs a predictor $h \sim \cA(\cD)$ in the end.
For a given predictor $h$, we measure its performance by the expected cross-entropy loss
\begin{equation}
    \cL(h; \cD) := \E_{(x, y) \sim \cD}[-\log p(y \mid h, x)],
\end{equation}
where $p(y \mid h, x)$ denotes the predicted distribution of $y$ given $x$ by the predictor $h$, and $\log$ is in base 2 for convenience.
For a data universe $\cU = (\cP, \cD_{\theta})$, we measure the performance of a learning algorithm $\cA$ by its expected loss over all data distributions $\cD_{\theta}$ with respect to the prior~$\cP$:
\begin{equation}
    \bar{\cL}_{\cP}(\cA) := \E_{\theta \sim \cP} \E_{h \sim \cA(\cD_{\theta})}[\cL(h; \cD_{\theta})].
\end{equation}
We use the mutual information $I(\cA(\cD_{\theta}); \cD_{\theta})$ as a measure of the {\it effective model capacity} for the predictor picked by $\cA$ on $\cD_{\theta}$, where $\theta$ is sampled from $\cQ$.

Same as~\Cref{def:optimal}, for a data universe $\cU = (\cP, \cD_{\theta})$ and $M > 0$, we define the best achievable loss under the capacity constraint $M$ as
\begin{equation}
    F_{\cP}(M) := \inf_{\cA} \left\{
        \bar{\cL}_{\cP}(\cA) : I(\cA(\cD_{\theta}); \cD_{\theta}) \leq M
    \right\},
\end{equation}
where the infimum is taken over all learning algorithms.
An optimal $M$-bounded-capacity learner is a learning algorithm $\cA$ such that
$I(\cA(\cD_{\theta}); \cD_{\theta}) \le M$
and $\bar{\cL}_{\cP}(\cA) = F_{\cP}(M)$.

\subsection{Convexity of the Best Achievable Loss}\label{sec:convexity}

It is easy to see that $F_{\cP}(M)$ is non-negative and non-increasing in $M$.
A classic result in rate-distortion theory is that the rate-distortion function is convex. This further implies that $F_{\cP}(M)$ is convex in $M$.
Here we present it as a lemma for completeness.
\begin{lemma}\label{lm:convex}
    For any data universe $\cU = (\cP, \cD_{\theta})$, $F_{\cP}(M)$ is convex in $M$.
\end{lemma}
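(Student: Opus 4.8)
## Proof Plan for Lemma (Convexity of $F_{\cP}(M)$)

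The plan is to mimic the classical proof that the rate--distortion function is convex, via a time-sharing (randomized) learning algorithm. Fix $M_1, M_2 \ge 0$ and $\lambda \in [0,1]$, and write $M_\lambda := \lambda M_1 + (1-\lambda) M_2$. For any $\epsilon > 0$, use the definition of $F_{\cP}$ as an infimum to pick learning algorithms $\cA_1, \cA_2$ with $I(\cA_j(\cD_\theta); \cD_\theta) \le M_j$ and $\bar{\cL}_{\cP}(\cA_j) \le F_{\cP}(M_j) + \epsilon$ for $j \in \{1,2\}$; such algorithms exist because the trivial predictor (ignoring the data) already satisfies the capacity constraint, so the feasible set is nonempty. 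Define $\cA_\lambda$ to be the algorithm that first draws an independent coin $U \sim \mathrm{Bernoulli}(\lambda)$ and then runs $\cA_1$ on its input samples if $U = 1$ and $\cA_2$ otherwise. This is a valid learning algorithm.

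First I would check that the loss mixes linearly: conditioning on $U$ and using that $U$ is independent of $\theta$ and of the sampling noise, $\bar{\cL}_{\cP}(\cA_\lambda) = \lambda \bar{\cL}_{\cP}(\cA_1) + (1-\lambda) \bar{\cL}_{\cP}(\cA_2)$. Next, and this is the one genuinely information-theoretic step, I would bound the effective capacity of $\cA_\lambda$. Writing $H := \cA_\lambda(\cD_\theta)$ and applying the chain rule of mutual information,
\[
I(H; \cD_\theta) \;\le\; I(H, U; \cD_\theta) \;=\; I(U; \cD_\theta) + I(H; \cD_\theta \mid U).
\]
Since $U$ is independent of $\theta$, hence of $\cD_\theta$, we have $I(U; \cD_\theta) = 0$; and since conditioned on $U = 1$ the pair $(H, \cD_\theta)$ has the same joint law as $(\cA_1(\cD_\theta), \cD_\theta)$, and similarly for $U = 0$, we get $I(H; \cD_\theta \mid U) = \lambda\, I(\cA_1(\cD_\theta); \cD_\theta) + (1-\lambda)\, I(\cA_2(\cD_\theta); \cD_\theta) \le M_\lambda$. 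Thus $\cA_\lambda$ is feasible for capacity $M_\lambda$, so
\[
F_{\cP}(M_\lambda) \;\le\; \bar{\cL}_{\cP}(\cA_\lambda) \;=\; \lambda \bar{\cL}_{\cP}(\cA_1) + (1-\lambda) \bar{\cL}_{\cP}(\cA_2) \;\le\; \lambda F_{\cP}(M_1) + (1-\lambda) F_{\cP}(M_2) + \epsilon.
\]
Letting $\epsilon \to 0$ yields $F_{\cP}(M_\lambda) \le \lambda F_{\cP}(M_1) + (1-\lambda) F_{\cP}(M_2)$, which is convexity.

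The main obstacle -- though it is a mild one -- is making the mutual-information step fully rigorous: one must set up $\cD_\theta$, the coin $U$, and the internal randomness of $\cA_1$ and $\cA_2$ on a common probability space with $U$ independent of $(\theta, \text{sampling noise})$, so that the conditional mutual information decomposition above and the identification of the $U = 1$ and $U = 0$ conditional laws are legitimate. Everything else -- linearity of the loss under mixing, nonemptiness of the feasible set, and the limit in $\epsilon$ -- is routine, and monotonicity and nonnegativity of $F_{\cP}$ follow immediately from the fact that enlarging $M$ only relaxes the constraint.
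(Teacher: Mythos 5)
Your proposal is correct and follows essentially the same route as the paper's proof: time-sharing between two near-optimal algorithms $\cA_1,\cA_2$ via an independent coin, linearity of the expected loss, and bounding the capacity of the mixed algorithm by the convex combination of capacities. If anything, your mutual-information step is slightly more careful than the paper's, which asserts $I(\cA(\cD_\theta);\cD_\theta)=(1-p)I(\cA_1(\cD_\theta);\cD_\theta)+pI(\cA_2(\cD_\theta);\cD_\theta)$ outright, whereas your chain-rule argument $I(H;\cD_\theta)\le I(H,U;\cD_\theta)=I(H;\cD_\theta\mid U)$ yields the needed inequality rigorously.
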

\begin{proof}
    Let $\epsilon > 0$ be any positive number.
    Let $\cA_1$ be a learning algorithm that achieves a loss $\le F_{\cP}(M_1) + \epsilon$ with  mutual information $I_1(\cA(\cD_{\theta}); \cD_{\theta})\le M_1$ and $\cA_2$ be a learning algorithm that achieves a loss $F_{\cP}(M_2) + \epsilon$ with mutual information $I_2(\cA(\cD_{\theta}); \cD_{\theta}) \le M_2$.

    Let $\cA$ be a new learning algorithm that outputs the same as $\cA_1$ with probability $1-p$ and the same as $\cA_2$ with probability $p$.
    Then the mutual information between $\cA(\cD_{\theta})$ and $\cD_{\theta}$ is
    \begin{align*}
        I(\cA(\cD_{\theta}); \cD_{\theta}) &= (1-p)I(\cA_1(\cD_{\theta}); \cD_{\theta}) + pI(\cA_2(\cD_{\theta}); \cD_{\theta}) \\
        &\le (1-p)M_1 + pM_2.
    \end{align*}
    By linearity of expectation, the expected loss of $\cA$ can be bounded as
    \begin{align*}
      \E_{\theta \sim \cP(\theta)}[\cL(\cA(\cD_{\vtheta}); \cD_{\vtheta})]
      &= (1 - p)\E_{\theta \sim \cP(\theta)}[\cL(\cA_1(\cD_{\vtheta}); \cD_{\vtheta})]
      + p\E_{\theta \sim \cP(\theta)}[\cL(\cA_2(\cD_{\vtheta}); \cD_{\vtheta})] \\
      &\le (1-p)F_{\cP}(M_1) + pF_{\cP}(M_2) + 2\epsilon.
    \end{align*}
    Therefore, we have
    \begin{align*}
        F_{\cP}((1-p) M_1 + pM_2) \le \E_{\theta \sim \cP(\theta)}[\cL(\cA(\cD_{\vtheta}); \cD_{\vtheta})] \le (1-p)F_{\cP}(M_1) + pF_{\cP}(M_2) + 2\epsilon,
    \end{align*}
    taking $\epsilon \to 0$ finishes the proof.
\end{proof}

\subsection{Proofs for the Warmup Case}

\begin{definition}[Factual Data Universe]
    We define a fact as a pair $(X, y)$, where $X$ is a set of inputs and $y$ is a target token. A factual data universe is a data universe $\cU = (\cP, \cD_{\theta})$ containing $K$ random facts $(X_1, y_1), \dots, (X_K, y_K)$ in the following way:
    \begin{enumerate}
        \item $X_1, \dots, X_K$ are $K$ disjoint sets of inputs, and $y_1, \dots, y_K$ are random tokens;
        \item $\theta$ is structured as $(y_1, \dots, y_K)$. Given $\theta = (y_1, \dots, y_K)$, the data distribution $\cD_{\theta}$ satisfies that for all $x \in X_i$, $\cD_{\theta}(y \mid x_i)$ is a point mass at $y_i$;
        \item For all $\theta$, the input distribution $\cD_{\theta}(x)$ is the same;
        \item For all $\theta$, the target distribution $\cD_{\theta}(y \mid x)$ is the same for all $x \notin \bigcup_{i=1}^{K} X_i$;
        \item The prior distribution $\cP$ over $\theta$ is given by the product distribution $\cP(y_1, y_2, \dots, y_K) = \prod_{k=1}^K \cY_k(y_k)$, where $\cY_k$ is a fixed prior distribution over $y_k$.
    \end{enumerate}
    The exposure frequency of each random fact is defined as the total probability that an input $x \in X_i$ occurs in $\cD_{\theta}$.
\end{definition}

\begin{theorem}[\Cref{thm:warmup}, restated]\label{thm:data-universe-F}
    For a factual data universe $\cU = (\cP, \cD_{\theta})$ with $K$ random facts, if all the facts have the same exposure frequency $p$, then
    \begin{equation}
        F_{\cP}(M) = C + p \cdot \max\left\{
            \Htot - M, 0
        \right\},
    \end{equation}
    where $\Htot := \sum_{i=1}^{K} H(\cY_i)$ and $C := F_{\cP}(\infty)$.
\end{theorem}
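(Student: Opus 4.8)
The plan is to split the expected loss into the contribution of the ``fact'' inputs $\bigcup_{i} X_i$ and the contribution of all other inputs, bound the former by matching upper and lower bounds in terms of the capacity $M$, and identify the latter with the constant $C$. For the \emph{decomposition}, fix a predictor $h$ and $\theta=(y_1,\dots,y_K)$: since $\cD_{\theta}(y\mid x)$ is a point mass at $y_i$ for $x\in X_i$, and since (by the definition of a factual data universe) both $\cD_{\theta}(x)$ and $\cD_{\theta}(y\mid x)$ are independent of $\theta$ for $x\notin\bigcup_i X_i$, we can write $\cL(h;\cD_{\theta})=\sum_{i=1}^{K}\sum_{x\in X_i}\cD(x)(-\log p(y_i\mid h,x))+\cL_{\mathrm{rest}}(h)$, where $\cD(x)$ is the $\theta$-independent input probability and $\cL_{\mathrm{rest}}(h)$ depends on $h$ only. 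Setting $G(\cA):=\E_{\theta}\E_{h\sim\cA(\cD_{\theta})}\sum_{i}\sum_{x\in X_i}\cD(x)(-\log p(y_i\mid h,x))$ gives $\bar{\cL}_{\cP}(\cA)=G(\cA)+\E_{\theta}\E_{h}[\cL_{\mathrm{rest}}(h)]$. Since $\cL_{\mathrm{rest}}$ is pointwise minimized by the Bayes predictor on non-fact inputs, with value $C_{\mathrm{rest}}$, and since $G(\cA)\ge 0$ with $G=0$ achieved (as a channel, or in the large-sample limit, as in the proof of \Cref{lm:convex}) by an algorithm that reads off the $y_i$ from $\cD_{\theta}$, we obtain $C=F_{\cP}(\infty)=C_{\mathrm{rest}}$ and $\bar{\cL}_{\cP}(\cA)\ge G(\cA)+C$ for every $\cA$.

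For the \emph{lower bound on $G$}, fix $h$ and $i$. The weights $\cD(x)/p$ over $x\in X_i$ sum to $1$, so applying Jensen's inequality to the convex function $-\log$ yields $\sum_{x\in X_i}\cD(x)(-\log p(y_i\mid h,x))\ge p\,(-\log \tilde p_i(y_i\mid h))$, where $\tilde p_i(\,\cdot\mid h):=\sum_{x\in X_i}\frac{\cD(x)}{p}\,p(\,\cdot\mid h,x)$ is a bona fide distribution over tokens depending on $h$ only. Taking expectations over $(\theta,h)$ and using Gibbs' inequality, $\E[-\log \tilde p_i(y_i\mid h)]\ge H(y_i\mid h)=H(\cY_i)-I(y_i;h)$. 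Because $y_1,\dots,y_K$ are mutually independent, subadditivity of conditional entropy gives $\sum_i I(y_i;h)\le I((y_1,\dots,y_K);h)=I(\theta;h)=I(\cD_{\theta};h)\le M$, the last equalities using that $\theta$ and $\cD_{\theta}$ determine each other. Summing over $i$ and multiplying by $p$ gives $G(\cA)\ge p(\Htot-M)$, and combined with $G(\cA)\ge 0$ we get $F_{\cP}(M)\ge C+p\max\{\Htot-M,0\}$.

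For the \emph{upper bound}, we exhibit an algorithm. Draw a random subset $B\subseteq\{1,\dots,K\}$ independently of $\theta$ with $\PP[i\in B]=\min(M,\Htot)/\Htot$ for each $i$; the algorithm recovers $y_i$ for $i\in B$ from $\cD_{\theta}$ and outputs the predictor that puts all mass on $y_i$ at inputs $x\in X_i$ with $i\in B$, uses the prior $\cY_i$ at inputs $x\in X_i$ with $i\notin B$, and is Bayes-optimal on all other inputs. A chain-rule computation ($B\perp\theta$, and the $y_i$ are independent point functions of $\theta$) gives $I(\cA(\cD_{\theta});\cD_{\theta})=\sum_i\PP[i\in B]\,H(\cY_i)=\min(M,\Htot)\le M$, while $\bar{\cL}_{\cP}(\cA)=C+p\sum_i\PP[i\notin B]\,H(\cY_i)=C+p(\Htot-\min(M,\Htot))=C+p\max\{\Htot-M,0\}$, matching the lower bound. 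The edge cases $\Htot=0$ and $M\ge\Htot$ are absorbed uniformly by the $\min(M,\Htot)$ bookkeeping.

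I expect the \emph{main obstacle} to be the lower bound, and within it the step showing $\sum_i I(y_i;h)\le M$: one must rule out a predictor ``cheating'' either by storing fractional information about many facts, or by answering correctly on only a few of the paraphrases $x\in X_i$; the Jensen/averaging reduction handles the latter, and the independence of the facts together with subadditivity of conditional entropy forces the $K$ facts to share the single capacity budget $M$, which is exactly what makes the bound tight.
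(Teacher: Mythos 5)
Your proof is correct and follows essentially the same route as the paper's: the lower bound via Gibbs' inequality together with independence/subadditivity to force $\sum_i I(y_i;h)\le I(h;\cD_{\theta})\le M$ (the paper compresses your Jensen-over-paraphrases step into a single inequality), and the upper bound via a randomized memorization algorithm whose mutual information is computed by linearity. The only cosmetic difference is in the upper-bound construction: you memorize each fact independently with probability $\min(M,\Htot)/\Htot$, while the paper time-shares between the full-memorization learner $\cA_1$ and the prior predictor with probability $q=M/\Htot$; both achieve mutual information exactly $\min(M,\Htot)$ and the same expected loss.
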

\begin{proof}
    First, we prove a lower bound for $F_{\cP}(M)$. For any learning algorithm $\cA$ with $I(\cA(\cD_{\theta}); \cD_{\theta}) \le M$,
    \begin{align*}
        \bar{\cL}_{\cP}(\cA(\cD_{\theta}))
        &= \E_{\theta \sim \cP} \E_{(x, y) \sim \cD_{\theta}} \E_{h \sim \cA(\cD_{\theta})} [-\log p(y \mid h, x)] \\
        &= \E_{x} \E_{\theta \sim \cP} \E_{y \sim \cD_{\theta}(\cdot \mid x)} \E_{h \sim \cA(\cD_{\theta})}\E [-\log p(y \mid h, x)] \\
        &\ge \underbrace{\E_{x}\!
        \left[ \onec{x \in \bigcup_{i=1}^{K} X_i} H_{\theta \sim \cP}(\cD_{\theta}(\cdot \mid x)) \right]}_{=: C_0}
        + p \left[ \sum_{i=1}^{K} \left(H(\cY_i) - I(\cA(\cD_{\theta}); y_i )\right) \right]_+ \\
        &\ge C_0 + p \left[ \Htot - I(\cA(\cD_{\theta}); \cD_{\theta}) \right]_+ \\
        &\ge C_0 + p \left[ \Htot - M \right]_+.
    \end{align*}

    For upper bounds, we first show that $F_{\cP}(M) \le C_0$ for all $M \ge \Htot$.
    Let $\cA_1$ be the learning algorithm that inputs $\cD_{\theta}$ and outputs the predictor $h$ that always outputs the token $y_i$ for the input $x \in X_i$. For all the other inputs $x$, the predictor just outputs $h(y \mid h, x) = \E_{\theta \sim \cP}[\cD_{\theta}(y \mid x)]$.
    Both $\cA_1(\cD_{\theta})$ and $\cD_{\theta}$ can be transformed from $\theta$ with a reversible function, so
    \begin{align*}
        I(\cA_1(\cD_{\theta}); \cD_{\theta})
        = H(\theta) = \sum_{i=1}^{K} H(\cY_i) = \Htot.
    \end{align*}
    It is easy to see that $\bar{\cL}_{\cP}(\cA_1) = C_0$.
    This implies that $F_{\cP}(M) \le C_0$ for all $M \ge \Htot$.

    Now, if $M <\Htot$, we construct a learning algorithm $\cA_q$ that outputs the same as $\cA_1$ with probability $q$ and outputs $h(y \mid h, x) = \E_{\theta \sim \cP}[\cD_{\theta}(y \mid x)]$ with probability $1-q$.
    Setting $q = \frac{M}{\Htot}$, we have
    \begin{align*}
        I(\cA_q(\cD_{\theta}); \cD_{\theta}) = q \cdot \Htot = M.
    \end{align*}
    By linearity of expectation, we also have $\bar{\cL}_{\cP}(\cA_q(\cD_{\theta})) = \bar{\cL}_{\cP}(\cA_1) + (1-q) \cdot p \sum_{i=1}^{K} H(\cY_i)$.
    This implies that $F_{\cP}(M) \le \bar{\cL}_{\cP}(\cA_1) + p \cdot \max\{ \Htot - M, 0 \}$ for all $M < \Htot$.

    Putting all the pieces together finishes the proof.
\end{proof}

\subsection{Proofs for the Data Mixing Case}

\begin{definition}[Mixture of Data Universes]
    Let $\cU_1 = (\cP_1, \cD_{\theta_1}^{(1)})$ and $\cU_2 = (\cP_2, \cD_{\theta_2}^{(2)})$ be two data universes.
    We mix them together to form a new data universe $\cU = (\cP, \cD_{\theta})$:
    \begin{enumerate}
        \item $\theta$ is structured as $(\theta_1, \theta_2)$. Given $\theta = (\theta_1, \theta_2)$, the data distribution $\cD_{\theta}$ is formed as $\cD_{\theta} = r\cD_{\theta_1}^{(1)} + (1-r)\cD_{\theta_2}^{(2)}$, where $r$ is called {\it the mixing ratio};
        \item The prior distribution $\cP$ over $\theta$ is a joint distribution of $\cP_1$ and $\cP_2$.
    \end{enumerate}
\end{definition}

In reality, mixing two datasets can be seen as mixing two data universes first and then sampling a data distribution from the mixed data universe. Here we consider the simplified case where the two data universes are so different from each other that they convey orthogonal information.
\begin{definition}[Orthogonal Mixture of Data Universes]\label{def:orthogonal universe}
    We say that $\cU$ is an orthogonal mixture of $\cU_1$ and $\cU_2$ if
    \begin{enumerate}
        \item For any $x$ that is in both supports of $\cD_{\theta_1}^{(1)}$ and $\cD_{\theta_2}^{(2)}$, we have $\cD_{\theta_1}^{(1)}(y \mid x) = \cD_{\theta_2}^{(2)}(y \mid x)$ for all $\theta_1$ and $\theta_2$. In other words, the conditional distribution of the next token $y$ given the context $x$ remains consistent across both domains and is unaffected by variations in values of $\theta_1$ and $\theta_2$.
        \item $\cP(\theta_1, \theta_2) = \cP_1(\theta_1) \cdot \cP_2(\theta_2)$, i.e., $\theta_1$ and $\theta_2$ are independent.
    \end{enumerate}
\end{definition}

Below, we first establish two lemmas that provide conditions for when the loss on the first domain will be very low or very high for an optimal $M$-bounded-capacity learner given an orthogonal mixture of two data universes.
Then, we use these lemmas to prove~\Cref{thm:model-pt}.

We use $\Dl F(t)$ and $\Dr F(t)$ to denote the left and right derivatives of a function $F$ at a point $t$, respectively.
\begin{lemma}\label{lm:left-threshold}
    Let $\cU = (\cP, \cD_{\theta})$ be an orthogonal mixture of $\cU_1 = (\cP_1, \cD_{\theta_1}^{(1)})$ and $\cU_2 = (\cP_2, \cD_{\theta_2}^{(2)})$ with mixing ratio $r$.
    For all $r \in (0, 1)$ and $M \ge 0$, if the following inequality holds,
    \begin{equation}\label{eq:left-threshold-ineq}
        \frac{r}{1-r} < \frac{\Dl F_{\cP_2}(M)}{\Dr F_{\cP_1}(0)},
    \end{equation}
    then for any optimal $M$-bounded-capacity learner $\cA$ on $\cU$,
    $\E_{\theta \sim \cP} \E_{h \sim \cA(\cD_{\theta})}[\cL(h; \cD_{\theta_1}^{(1)})] = F_{\cP_1}(0)$.
\end{lemma}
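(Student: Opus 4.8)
The plan is to reduce the statement to a one‑dimensional convex capacity‑allocation problem. First, since $\cD_{\theta} = r\cD^{(1)}_{\theta_1} + (1-r)\cD^{(2)}_{\theta_2}$ is a mixture, the cross‑entropy loss of any predictor $h$ splits linearly: $\cL(h;\cD_{\theta}) = r\,\cL(h;\cD^{(1)}_{\theta_1}) + (1-r)\,\cL(h;\cD^{(2)}_{\theta_2})$, and averaging over $\theta \sim \cP$ gives $\bar{\cL}_{\cP}(\cA) = r\,\bar{\cL}_1(\cA) + (1-r)\,\bar{\cL}_2(\cA)$, where $\bar{\cL}_i(\cA) := \E_{\theta\sim\cP}\E_{h\sim\cA(\cD_{\theta})}[\cL(h;\cD^{(i)}_{\theta_i})]$. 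So everything hinges on how the capacity budget $M$ is split between the two domains.

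Next I would establish a capacity‑splitting lower bound. Fix any $\cA$ with $I(\cA(\cD_{\theta});\cD_{\theta}) \le M$ and set $M_i := I(\cA(\cD_{\theta});\cD^{(i)}_{\theta_i})$. By \Cref{def:orthogonal universe}, $\theta_1 \perp \theta_2$, hence $\cD^{(1)}_{\theta_1} \perp \cD^{(2)}_{\theta_2}$; superadditivity of mutual information for independent variables gives $M_1 + M_2 \le I(\cA(\cD_{\theta});(\cD^{(1)}_{\theta_1},\cD^{(2)}_{\theta_2}))$, and since $\cA(\cD_{\theta})$ depends on $(\cD^{(1)}_{\theta_1},\cD^{(2)}_{\theta_2})$ only through $\cD_{\theta}$, the data‑processing inequality yields $M_1 + M_2 \le I(\cA(\cD_{\theta});\cD_{\theta}) \le M$. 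Then I extract from $\cA$ a learner $\cA_i$ for $\cU_i$: given samples from $\cD^{(i)}_{\theta_i}$, it draws a fresh $\theta'_{3-i}\sim\cP_{3-i}$, simulates samples from $\cD_{\theta}$ by routing (with probabilities $r$, $1-r$) through $\cD^{(i)}_{\theta_i}$ or $\cD^{(3-i)}_{\theta'_{3-i}}$, runs $\cA$, and returns its predictor; because $\theta'_{3-i}$ has the correct prior and is independent of $\theta_i$, the pair $(\cA_i(\cD^{(i)}_{\theta_i}),\cD^{(i)}_{\theta_i})$ has the same law as $(\cA(\cD_{\theta}),\cD^{(i)}_{\theta_i})$, so $\cA_i$ has capacity $M_i$ and $\bar{\cL}_{\cP_i}(\cA_i)=\bar{\cL}_i(\cA)$, whence $\bar{\cL}_i(\cA)\ge F_{\cP_i}(M_i)$. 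Combining, $\bar{\cL}_{\cP}(\cA) \ge rF_{\cP_1}(M_1)+(1-r)F_{\cP_2}(M_2) \ge g(M_1)$, where $g(t):=rF_{\cP_1}(t)+(1-r)F_{\cP_2}(M-t)$ on $[0,M]$ (using $M_2 \le M-M_1$ and that $F_{\cP_2}$ is non‑increasing). A matching construction—output the Bayes prior predictor of $\cU_1$ on domain‑1 inputs (zero capacity, loss $F_{\cP_1}(0)$) while running a near‑optimal $\cU_2$‑learner on the remaining samples—gives $F_{\cP}(M) \le rF_{\cP_1}(0)+(1-r)F_{\cP_2}(M) = g(0)$.

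Now I analyze the scalar program. By \Cref{lm:convex} both $F_{\cP_1},F_{\cP_2}$ are convex, so $g$ is convex on $[0,M]$ (sum of a convex function and a convex function precomposed with an affine map), with right derivative $\Dr g(0) = r\,\Dr F_{\cP_1}(0) - (1-r)\,\Dl F_{\cP_2}(M)$. The hypothesis \eqref{eq:left-threshold-ineq} states $\tfrac{r}{1-r} < \tfrac{\Dl F_{\cP_2}(M)}{\Dr F_{\cP_1}(0)}$; since $F_{\cP_1}$ is non‑increasing, $\Dr F_{\cP_1}(0)\le 0$, and multiplying through by this nonpositive quantity (reversing the inequality; the degenerate case $\Dr F_{\cP_1}(0)=0$, where domain~$1$ carries no marginal value, is handled directly) gives precisely $\Dr g(0) > 0$. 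By convexity $g$ is then strictly increasing on $[0,M]$, uniquely minimized at $t=0$, so $F_{\cP}(M)=g(0)$ by the previous paragraph. For an optimal learner $\cA$, writing $M_1 = I(\cA(\cD_{\theta});\cD^{(1)}_{\theta_1})$, we get $g(0) = \bar{\cL}_{\cP}(\cA) \ge g(M_1) \ge g(0)$, so $g(M_1)=g(0)$ and hence $M_1=0$; then $\bar{\cL}_1(\cA)\ge F_{\cP_1}(0)$ and $\bar{\cL}_2(\cA)\ge F_{\cP_2}(M_2)\ge F_{\cP_2}(M)$, and substituting into $g(0) = r\bar{\cL}_1(\cA)+(1-r)\bar{\cL}_2(\cA)\ge rF_{\cP_1}(0)+(1-r)F_{\cP_2}(M)=g(0)$ forces equality throughout, i.e. $\bar{\cL}_1(\cA) = F_{\cP_1}(0)$, as claimed.

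The main obstacle is the capacity‑splitting step: making rigorous that $M_1 + M_2 \le M$ (via independence of $\theta_1,\theta_2$, superadditivity of mutual information, and data processing through $\cD_{\theta}$) and that the extracted learners $\cA_i$ have exactly the claimed capacity and loss, together with the matching upper bound on $F_{\cP}(M)$—for which one must handle possibly overlapping input supports by invoking orthogonality to argue that samples landing in the overlap, where the conditional of $y$ given $x$ is $\theta$‑independent, can be attributed to either domain without changing either loss. The convexity bookkeeping and the sign manipulation of \eqref{eq:left-threshold-ineq} are routine by comparison.
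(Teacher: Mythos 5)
Your proposal is correct and follows essentially the same route as the paper's proof: split the mixture loss, bound the capacity allocation $m_1+m_2\le M$ via independence of $\theta_1,\theta_2$ plus data processing, lower-bound the total loss by $rF_{\cP_1}(m_1)+(1-r)F_{\cP_2}(M-m_1)$, use convexity and the marginal-value inequality \eqref{eq:left-threshold-ineq} to force $m_1=0$, and match with the same routing construction achieving $rF_{\cP_1}(0)+(1-r)F_{\cP_2}(M)$. The only differences are presentational—you work with $I(\cA(\cD_{\theta});\cD^{(i)}_{\theta_i})$ and an explicit simulation argument where the paper restricts $h$ to $\cX_i$, and you spell out the degenerate case $\Dr F_{\cP_1}(0)=0$ and the overlapping-support issue, which the paper leaves implicit.
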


\paragraph{Intuitive Explanation.} We define $\bar{\cL}_2(\cA) := \E_{\theta \sim \cP_2}[\cL(\cA(\cD_{\theta}); \cD_{\theta_1}^{(2)})]$, similar to $\bar{\cL}_1(\cA)$. The overall test loss is given by $\bar{\cL}_{\cP}(\cA) = r \bar{\cL}_1(\cA) + (1-r) \bar{\cL}_2(\cA)$. Since $\Dr F_{\cP_1(0)} < 0$, we can rearrange \eqref{eq:left-threshold-ineq} to obtain $r \Dr F_{\cP_1}(0) - (1-r) \Dl F_{\cP_2}(M) > 0$. Intuitively, this means that increasing the capacity assigned to learn $\cU_1$ by one unit and reducing the capacity for $\cU_2$ by one unit will increase the overall test loss, compared to fully assigning capacity to $\cU_2$ and none to $\cU_1$. Alternatively, we can view $ \frac{r\Dr F_{\cP_1}(0)}{(1-r)  \Dl F_{\cP_2}(M)}$ as the ratio of cost-effectiveness of the knowledge-dense dataset relative to web data. Hence, the model should prioritize web data and not learn from the knowledge-dense dataset when this ratio is below 1.
\begin{proof}
    Let $h$ be the predictor picked by $\cA$ on $\cD_{\theta}$.
    Let $\cX_1$ and $\cX_2$ be the supports of $x$ in $\cD_{\theta_1}^{(1)}$ and $\cD_{\theta_2}^{(2)}$, respectively.
    Let $m_1 := I(\left.h\right|_{\cX_1}; \cD_{\theta_1})$ and $m_2 := I(\left.h\right|_{\cX_2}; \cD_{\theta_2})$.
    By data processing inequality, we have
    \begin{align*}
        m_1 &= I(\left.h\right|_{\cX_1}; \cD_{\theta_1}) \le I(h; \cD_{\theta_1}), \\
        m_2 &= I(\left.h\right|_{\cX_2}; \cD_{\theta_2}) \le I(h; \cD_{\theta_2}),
    \end{align*}
    Further noticing that $I(h;\cD_{\theta})=I(h;\cD_{\theta_1};\cD_{\theta_2}) \ge I(h; \cD_{\theta_1})+I(h; \cD_{\theta_2})$, we have
    \begin{align*}
        m_1 + m_2 \le I(h; \cD_{\theta}) \le M.
    \end{align*}
    Since $\left.h\right|_{\cX_1}$ and $\left.h\right|_{\cX_2}$ are valid predictors on $\cD_{\theta_1}$ and $\cD_{\theta_2}$, respectively, we have
    \begin{align*}
        \E[\cL(h; \cD_{\theta_1})]
        &= \E[\cL(\left.h\right|_{\cX_1}; \cD_{\theta_1})]
        \ge F_{\cP_1}(m_1), \\
        \E[\cL(h; \cD_{\theta_2})]
        &=
        \E[\cL(\left.h\right|_{\cX_2}; \cD_{\theta_2})]
        \ge F_{\cP_2}(m_2) \ge F_{\cP_2}(M - m_1).
    \end{align*}
    Adding the two inequalities with weights $r$ and $1-r$, we have
    \begin{align*}
        \bar{\cL}_{\cP}(\cA) = \E[\cL(h; \cD_{\theta})] \ge rF_{\cP_1}(m_1) + (1-r)F_{\cP_2}(M - m_1).
    \end{align*}
    By convexity (\Cref{lm:convex}), we have
    \begin{align*}
        F_{\cP_1}(m_1) \ge F_{\cP_1}(0) + \Dr F_{\cP_1}(0) m_1,
        \qquad F_{\cP_2}(M - m_1) \ge F_{\cP_2}(M) - \Dl F_{\cP_2}(M) m_1.
    \end{align*}
    Plugging these into the previous inequality, we have
    \begin{align*}
        \E[\cL(h; \cD_{\theta})] \ge rF_{\cP_1}(0) + (1-r)F_{\cP_2}(M) + \left( r\Dr F_{\cP_1}(0) - (1-r) \Dl F_{\cP_2}(M)\right) m_1.
    \end{align*}
    By~\eqref{eq:left-threshold-ineq} and the fact that $\Dr F_{\cP_1}(0) \le 0$,
    we have $r\Dr F'_{\cP_1}(0) > (1-r) \Dl F'_{\cP_2}(M)$.
    So the right-hand side is strictly increasing in $m_1$.

    Now we claim that $m_1 = 0$.
    If not, then the following learning algorithm $\cA'$ is better than $\cA$. 
    Let $\cA_1$ be an optimal $0$-bounded-capacity learner on $\cU_1$ and $\cA_2$ be an optimal $M$-bounded-capacity learner on $\cU_2$. 
    Run the algorithms to obtain $h_1 \sim \cA_1(\left.\cD_{\theta}\right|_{\cX_1})$
    and $h_2 \sim \cA_2(\left.\cD_{\theta}\right|_{\cX_2})$.
    Then, whenever seeing an input $x$ from $\cX_1$, output $h_1(x)$; otherwise output $h_2(x)$. This algorithm achieves the expected loss $rF_{\cP_1}(0) + (1-r)F_{\cP_2}(M)$, which is strictly less than $\bar{\cL}_{\cP}(\cA)$ and contradicts the optimality of $\cA$.

    Therefore, for the optimal algorithm $\cA$, $\bar{\cL}_1(\cA) = F_{\cP_1}(0)$.
\end{proof}
\begin{lemma}\label{lm:right-threshold}
    Let $\cU = (\cP, \cD_{\theta})$ be an orthogonal mixture of $\cU_1 = (\cP_1, \cD_{\theta_1})$ and $\cU_2 = (\cP_2, \cD_{\theta_2})$ with mixing ratio $r$.
    For all $r \in (0, 1)$, $M \ge 0$ and $\beta \ge 0$, if the following inequality holds,
    \begin{equation}\label{eq:right-threshold-ineq}
        \frac{r}{1-r} > \frac{\Dr F_{\cP_2}(M-\beta)}{\Dl F_{\cP_1}(\beta)},
    \end{equation}
    then for any optimal $M$-bounded-capacity learner $\cA$ on $\cU$,
    $\E_{\theta \sim \cP} \E_{h \sim \cA(\cD_{\theta})}[\cL(h; \cD_{\theta_1})] \le F_{\cP_1}(\beta)$.
\end{lemma}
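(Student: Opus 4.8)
The plan is to show that an optimal $M$-bounded-capacity learner on the orthogonal mixture spends \emph{at least} $\beta$ units of capacity on the first domain; since its loss on that domain can then be no larger than $F_{\cP_1}(\beta)$ by monotonicity of $F_{\cP_1}$, the lemma follows. This is the exact mirror image of the argument for \Cref{lm:left-threshold} (which forced that capacity to be zero). The central object is the function $g(a):=rF_{\cP_1}(a)+(1-r)F_{\cP_2}(M-a)$ on $[0,M]$, which is convex: it is a nonnegative combination of $F_{\cP_1}$, convex by \Cref{lm:convex}, and $F_{\cP_2}$ composed with the affine map $a\mapsto M-a$. (Here I use implicitly that $\beta\le M$, which is needed for $F_{\cP_2}(M-\beta)$ to be defined.)

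First I would reuse, verbatim, the chain of inequalities from the proof of \Cref{lm:left-threshold}. Writing $h\sim\cA(\cD_\theta)$, letting $\cX_1,\cX_2$ be the supports of $x$ under $\cD_{\theta_1}^{(1)},\cD_{\theta_2}^{(2)}$, and setting $m_1:=I(\left.h\right|_{\cX_1};\cD_{\theta_1})$ and $m_2:=I(\left.h\right|_{\cX_2};\cD_{\theta_2})$, the data-processing inequality together with the independence of $\theta_1,\theta_2$ gives $m_1+m_2\le M$ and hence
\[
    \bar{\cL}_{\cP}(\cA)\;=\;r\bar{\cL}_1(\cA)+(1-r)\bar{\cL}_2(\cA)\;\ge\; rF_{\cP_1}(m_1)+(1-r)F_{\cP_2}(M-m_1)\;=\;g(m_1).
\]
Conversely, the same ``split-the-capacity'' competitor used there — run an approximately optimal $a$-bounded learner on $\cU_1$ and an approximately optimal $(M-a)$-bounded learner on $\cU_2$, and route each input to the matching sub-predictor — shows $F_{\cP}(M)\le g(a)$ for every $a\in[0,M]$. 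Taking $a=m_1$ and using optimality of $\cA$, i.e. $\bar{\cL}_{\cP}(\cA)=F_{\cP}(M)$, I get $g(m_1)\le F_{\cP}(M)\le g(m_1)$, so $g(m_1)=F_{\cP}(M)$; since $g(a)\ge F_{\cP}(M)$ for all $a$, this says $m_1$ is a global minimizer of $g$ on $[0,M]$. Tracing the displayed chain back (both $\bar{\cL}_1(\cA)\ge F_{\cP_1}(m_1)$ and $\bar{\cL}_2(\cA)\ge F_{\cP_2}(M-m_1)$ must then hold with equality, as their $r$/$(1-r)$-weighted sum of gaps is zero) also yields $\bar{\cL}_1(\cA)=F_{\cP_1}(m_1)$.

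Next I would show the hypothesis \eqref{eq:right-threshold-ineq} is exactly the one-line statement $\Dl g(\beta)<0$. The left derivative of $a\mapsto F_{\cP_2}(M-a)$ at $\beta$ equals $-\Dr F_{\cP_2}(M-\beta)$, so $\Dl g(\beta)=r\,\Dl F_{\cP_1}(\beta)-(1-r)\,\Dr F_{\cP_2}(M-\beta)$. Since $F_{\cP_1}$ is non-increasing, $\Dl F_{\cP_1}(\beta)\le 0$; assuming $\Dl F_{\cP_1}(\beta)<0$ (if it is $0$ the right-hand side of \eqref{eq:right-threshold-ineq} is a quotient by zero and the hypothesis is vacuous), multiplying \eqref{eq:right-threshold-ineq} through by $(1-r)\,\Dl F_{\cP_1}(\beta)<0$ reverses the inequality and gives $r\,\Dl F_{\cP_1}(\beta)<(1-r)\,\Dr F_{\cP_2}(M-\beta)$, i.e. $\Dl g(\beta)<0$. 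For a convex function on $[0,M]$ this means no minimizer lies strictly to the left of $\beta$: if $a^*<\beta$ minimized $g$, then convexity gives $\Dr g(a^*)\le \Dl g(\beta)<0$, contradicting the first-order condition $\Dr g(a^*)\ge 0$ at an interior-or-left-boundary minimizer. Hence $m_1\ge\beta$, and therefore $\bar{\cL}_1(\cA)=F_{\cP_1}(m_1)\le F_{\cP_1}(\beta)$, which is the claim.

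I do not expect a genuine obstacle. The two ingredients that need any care — the ``split-the-capacity'' competitor giving $F_{\cP}(M)\le g(a)$, and the lower bound $\bar{\cL}_{\cP}(\cA)\ge g(m_1)$ via data processing — are word-for-word generalizations (capacity split $a,\,M-a$ in place of $0,\,M$) of what is already carried out in the proof of \Cref{lm:left-threshold}, so I would cite that argument rather than reproduce it. The remaining points are bookkeeping: the endpoint cases are harmless ($\beta=0$ is trivial because $F_{\cP_1}(0)$ is the largest possible loss, and at $\beta=M$ the left derivative of $g$ still exists by convexity), and no continuity of $F_{\cP_1},F_{\cP_2}$ is needed because the optimal $\cA$ itself witnesses that the minimum of $g$ is attained at $m_1$. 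The real content of the lemma is precisely the translation of the threshold inequality \eqref{eq:right-threshold-ineq} into ``$\Dl g(\beta)<0$,'' after which convexity locates the minimizer.
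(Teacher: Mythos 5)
Your proposal is correct and follows essentially the same route as the paper's proof: the same mutual-information decomposition $m_1+m_2\le M$, the same convexity of $F_{\cP_1},F_{\cP_2}$ (\Cref{lm:convex}), the same split-capacity competitor, and the same rearrangement of \eqref{eq:right-threshold-ineq} using $\Dl F_{\cP_1}(\beta)\le 0$. The only difference is cosmetic packaging: you phrase the argument as ``$m_1$ is a global minimizer of the convex function $g(a)=rF_{\cP_1}(a)+(1-r)F_{\cP_2}(M-a)$ and $\Dl g(\beta)<0$,'' which also yields $\bar{\cL}_1(\cA)=F_{\cP_1}(m_1)$ directly, whereas the paper runs two separate contradiction arguments (first to get $m_1\ge\beta$ via subgradient inequalities at $\beta$, then a second competitor $\cA''$ to bound the loss).
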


\paragraph{Intuitive Explanation.} Similar to the explanation for~\Cref{lm:left-threshold}, we can rearrange~\Cref{eq:right-threshold-ineq} into $-r \Dl F_{\cP_1}(\beta) + (1-r)\Dr F_{\cP_1}(M-\beta)<0$. Intuitively, this means that reducing the capacity assigned to learn $\cU_1$ by one unit and increasing the capacity for $\cU_2$ by one unit will increase the overall test loss, compared to assigning capacity $\beta$ to $\cU_1$ and $M-\beta$ to $\cU_2$. Therefore, the optimal $M$-bounded-capacity learner $\cA$ will assign at least capacity $\beta$ to learn $\cU_1$, resulting in a test loss on $\cU_1$ that is lower than $F_{\cP_1}(\beta)$. Alternatively, we can view $ \frac{r\Dl F_{\cP_1}(\beta)}{(1-r)  \Dr F_{\cP_2}(M-\beta)}$ as the ratio of cost-effectiveness of the knowledge-dense dataset relative to web data. Hence, the model should do its best to learn the knowledge-dense dataset when this ratio is above 1.
\begin{proof}
    Similar to the previous proof, letting $m_1 := I(\left.h\right|_{\cX_1}; \cD_{\theta_1})$ and $m_2 := I(\left.h\right|_{\cX_2}; \cD_{\theta_2})$, we have
    \begin{align*}
        m_1 + m_2 &\le I(h; \cD_{\theta}) \le M, \\
        \E[\cL(h; \cD_{\theta_1})]
        &= \E[\cL(\left.h\right|_{\cX_1}; \cD_{\theta_1})]
        \ge F_{\cP_1}(m_1), \\
        \E[\cL(h; \cD_{\theta_2})]
        &=
        \E[\cL(\left.h\right|_{\cX_2}; \cD_{\theta_2})]
        \ge F_{\cP_2}(m_2) \ge F_{\cP_2}(M - m_1), \\
        \E[\cL(h; \cD_{\theta})] &\ge rF_{\cP_1}(m_1) + (1-r)F_{\cP_2}(M - m_1).
    \end{align*}
    First, we show that $m_1 \ge \beta$.
    If not, then by convexity (\Cref{lm:convex}), we have
    \begin{align*}
        F_{\cP_1}(m_1) &\ge F_{\cP_1}(\beta) - \Dl F_{\cP_1}(\beta) \cdot (\beta - m_1), \\
        F_{\cP_2}(M - m_1) &\ge F_{\cP_2}(M - \beta) + \Dr F_{\cP_2}(M - \beta) \cdot (\beta - m_1).
    \end{align*}
    Plugging these into the previous inequality, we have
    \begin{align*}
        \E[\cL(h; \cD_{\theta})] \ge rF_{\cP_1}(\beta) + (1-r)F_{\cP_2}(M-\beta) + \left( -r\Dl F_{\cP_1}(\beta) + (1-r) \Dr F_{\cP_2}(M-\beta)\right) (\beta - m_1).
    \end{align*}
    By~\eqref{eq:right-threshold-ineq} and the fact that $\Dl F_{\cP_1}(\beta) \le 0$,
    we have $r\Dl F_{\cP_1}(\beta) < (1-r) \Dr F_{\cP_2}(M-\beta)$.
    So the right-hand side is strictly decreasing in $m_1$.

    Next, we prove by contradiction that $m_1\geq \beta$. If $m_1<\beta$, the following learning algorithm $\cA'$ is better than $\cA$.
    Let $\cA_1$ be an optimal $\beta$-bounded-capacity learner on $\cU_1$ and $\cA_2$ be an optimal $(M - \beta)$-bounded-capacity learner on $\cU_2$. 
    Run the algorithms to obtain $h_1 \sim \cA_1(\left.\cD_{\theta}\right|_{\cX_1})$
    and $h_2 \sim \cA_2(\left.\cD_{\theta}\right|_{\cX_2})$.
    Then, whenever seeing an input $x$ from $\cX_1$, output $h_1(x)$; otherwise output $h_2(x)$. This algorithm achieves the expected loss $rF_{\cP_1}(\beta) + (1-r)F_{\cP_2}(M-\beta)$, which is strictly less than $\bar{\cL}_{\cP}(\cA)$.

    Therefore, we have $m_1 \ge \beta$ for the algorithm $\cA$. Now we prove that $\E[\cL(h; \cD_{\theta_1})] \le F_{\cP_1}(\beta)$. If not, then the following learning algorithm $\cA''$ is better than $\cA$. Construct $\cA''$ similarly as $\cA'$, but with $\cA_1$ and $\cA_2$ replaced by the optimal $m_1$-bounded-capacity learner on $\cU_1$ and the optimal $m_2$-bounded-capacity learner on $\cU_2$, respectively.
    If $\E[\cL(h; \cD_{\theta_1})] > F_{\cP_1}(\beta)$, then $\cA''$ achieves a lower expected loss than $\cA$, which contradicts the optimality of $\cA$.
\end{proof}

Now we consider the case where $\cU_1$ is a factual data universe, and $\cU_2$ is an arbitrary data universe.

\begin{theorem}\label{thm:mixing-main}
    Let $\cU_1$ be a factual data universe with $K$ random facts, each with the same exposure frequency $p$, and the entropies of their target tokens sum to
    $\Htot := \sum_{i=1}^{K} H(\cY_i)$.
    Let $\cU_2$ be an arbitrary data universe.
    Let $\cU = (\cP, \cD_{\theta})$ be an orthogonal mixture of $\cU_1$ and $\cU_2$ with mixing ratio $r$.
    For all $r \in (0, 1)$ and $M \ge 0$, 
    \begin{enumerate}
        \item if $\frac{r}{1-r} \cdot p < -\Dl F_{\cP_2}(M)$, then $\bar{\cL}_1(\cA) = F_{\cP_1}(0)$;
        \item if $\frac{r}{1-r} \cdot p > -\Dr F_{\cP_2}(M - \Htot)$, then $\bar{\cL}_1(\cA) = F_{\cP_1}(\infty)$.
    \end{enumerate}
\end{theorem}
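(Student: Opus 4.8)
The plan is to obtain \Cref{thm:mixing-main} as an essentially immediate corollary of the two general lemmas \Cref{lm:left-threshold} and \Cref{lm:right-threshold}, once they are specialized to a factual $\cU_1$ by feeding in the closed form of $F_{\cP_1}$ from the warmup analysis. First I would apply \Cref{thm:warmup} to $\cU_1$: since all $K$ facts share exposure frequency $p$ inside $\cD^{(1)}_{\theta_1}$, it yields $F_{\cP_1}(m) = C + p\max\{\Htot - m,\,0\}$ with $C = F_{\cP_1}(\infty)$. This function is piecewise linear, so its one-sided derivatives are read off immediately; the two values the lemmas need are $\Dr F_{\cP_1}(0) = -p$ (the slope just to the right of the origin) and $\Dl F_{\cP_1}(\Htot) = -p$ (the slope just to the left of the kink at $m = \Htot$). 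Before the main argument I would dispose of the degenerate cases: if $p = 0$ the facts never appear in $\cD_\theta$, and if $\Htot = 0$ then $F_{\cP_1} \equiv C$; in both cases $F_{\cP_1}(0) = F_{\cP_1}(\infty) = C$ and the two conclusions coincide and hold trivially, so henceforth I assume $p > 0$ and $\Htot > 0$.

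For part 1, I would invoke \Cref{lm:left-threshold} at the given $M$. Its hypothesis is $\frac{r}{1-r} < \Dl F_{\cP_2}(M)\,/\,\Dr F_{\cP_1}(0)$; substituting $\Dr F_{\cP_1}(0) = -p$ and multiplying through by $-p < 0$ (which reverses the inequality) turns this into exactly $\frac{r}{1-r}\,p < -\Dl F_{\cP_2}(M)$, the assumed condition. The lemma then gives $\bar{\cL}_1(\cA) = F_{\cP_1}(0)$, as claimed.

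For part 2, I would invoke \Cref{lm:right-threshold} with the choice $\beta = \Htot$ — the unique value of $\beta$ for which $\Dl F_{\cP_1}(\beta)$ is still nonzero while $F_{\cP_1}(\beta) = C$. Its hypothesis $\frac{r}{1-r} > \Dr F_{\cP_2}(M-\beta)\,/\,\Dl F_{\cP_1}(\beta)$ becomes, after plugging in $\Dl F_{\cP_1}(\Htot) = -p$ and clearing the negative denominator, $\frac{r}{1-r}\,p > -\Dr F_{\cP_2}(M - \Htot)$, which is the assumed condition. Here I would flag the implicit domain requirement: $F_{\cP_2}$ and its right derivative at $M - \Htot$ are only meaningful for $M \ge \Htot$, and when $M < \Htot$ we read $-\Dr F_{\cP_2}(M-\Htot) = +\infty$, so the hypothesis is unsatisfiable and there is nothing to prove. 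When $M \ge \Htot$, the lemma yields $\bar{\cL}_1(\cA) \le F_{\cP_1}(\Htot) = C = F_{\cP_1}(\infty)$. The matching lower bound is automatic: writing $h = \cA(\cD_\theta)$, the restriction $\left.h\right|_{\cX_1}$ is a valid predictor on $\cD^{(1)}_{\theta_1}$, so $\bar{\cL}_1(\cA) = \E[\cL(\left.h\right|_{\cX_1}; \cD^{(1)}_{\theta_1})] \ge F_{\cP_1}(\infty)$ by definition of the unconstrained optimum. Combining the two bounds gives $\bar{\cL}_1(\cA) = F_{\cP_1}(\infty)$.

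I do not anticipate a substantive obstacle: the mathematical content is carried entirely by \Cref{lm:left-threshold}, \Cref{lm:right-threshold}, and \Cref{thm:warmup}, and what remains is substitution. The only points needing real care are (i) correctly tracking the sign reversals when dividing by the negative quantities $\Dr F_{\cP_1}(0) = -p$ and $\Dl F_{\cP_1}(\Htot) = -p$, and (ii) handling the boundary cases $p = 0$, $\Htot = 0$, and $M < \Htot$ so that the ``for all $r \in (0,1)$ and $M \ge 0$'' quantification in the statement is literally correct.
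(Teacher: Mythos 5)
Your proposal matches the paper's own proof: it derives $\Dr F_{\cP_1}(0)$ and $\Dl F_{\cP_1}(\Htot)$ from \Cref{thm:warmup} and then plugs them into \Cref{lm:left-threshold} and \Cref{lm:right-threshold} with $\beta = \Htot$, exactly as the paper does (your careful sign bookkeeping even fixes the paper's sloppy ``$=p$'' where $-p$ is meant). The extra touches—making the trivial lower bound $\bar{\cL}_1(\cA) \ge F_{\cP_1}(\infty)$ explicit and disposing of the degenerate cases $p=0$, $\Htot=0$, $M<\Htot$—are correct refinements of the same argument, not a different route.
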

\begin{proof}
    By~\Cref{thm:data-universe-F}, $\Dr F_{\cP_1}(0) = \Dl F_{\cP_1}(\Htot) = p$.
    Plugging this into~\Cref{lm:left-threshold} and~\Cref{lm:right-threshold} with $\beta = \Htot$ finishes the proof.
\end{proof}

Now we are ready to prove the main theorem we stated in~\Cref{sec:theory-mixing}. Recall that
\begin{align*}
    M^-_0(t) &:= \sup\{ M \ge 0 : -F'_{\cP_2}(M) > t \}, \\
    M^+_0(t) &:= \inf\{ M \ge 0 : -F'_{\cP_2}(M) < t \},
\end{align*}
\begin{theorem}[\Cref{thm:model-pt}, restated]
    For any optimal $M$-bounded-capacity learner $\cA$,
    \begin{enumerate}
        \item if $M \le M_0^{-}(\frac{r}{1-r} \cdot p)$, then $\bar{\cL}_1(\cA) = F_{\cP_1}(0)$;
        \item
        if $M \ge M_0^{+}(\frac{r}{1-r} \cdot p) + \Htot$,
        then $\bar{\cL}_1(\cA) = F_{\cP_1}(\infty)$.
    \end{enumerate}
\end{theorem}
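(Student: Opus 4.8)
The plan is to obtain this statement as an essentially immediate corollary of \Cref{thm:mixing-main}, which already delivers $\bar{\cL}_1(\cA) = F_{\cP_1}(0)$ whenever $\frac{r}{1-r}p < -\Dl F_{\cP_2}(M)$ and $\bar{\cL}_1(\cA) = F_{\cP_1}(\infty)$ whenever $\frac{r}{1-r}p > -\Dr F_{\cP_2}(M-\Htot)$. All that remains is to translate the two hypotheses $M \le M_0^{-}(t)$ and $M \ge M_0^{+}(t) + \Htot$ (with $t := \frac{r}{1-r}p > 0$) into those two derivative inequalities, using only convexity of $F_{\cP_2}$ from \Cref{lm:convex}. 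So the first step I would record is the standard convex-analysis toolkit for $F_{\cP_2}$: the one-sided derivatives $\Dl F_{\cP_2}(M) \le \Dr F_{\cP_2}(M)$ exist everywhere and are non-decreasing; $\Dr F_{\cP_2}(M) \le \Dl F_{\cP_2}(M')$ for $M < M'$; $\Dl F_{\cP_2}$ is left-continuous; and at every point of differentiability $F'_{\cP_2}$ coincides with both one-sided derivatives. Hence $-\Dl F_{\cP_2}$ and $-\Dr F_{\cP_2}$ are non-increasing, so (up to the null set of non-differentiable points) $\{M : -F'_{\cP_2}(M) > t\}$ is a left ray $[0, M_0^-(t))$ and $\{M : -F'_{\cP_2}(M) < t\}$ is a right ray $(M_0^+(t), \infty)$.

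\textbf{Part 1.} I would show that $M \le M_0^-(t)$ forces $-\Dl F_{\cP_2}(M) > t$, and then quote \Cref{thm:mixing-main}(1). If $M < M_0^-(t)$, the definition of the supremum gives an $M'$ with $M < M' \le M_0^-(t)$ at which $F'_{\cP_2}$ exists and $-F'_{\cP_2}(M') > t$; chaining the monotonicity facts yields $-\Dl F_{\cP_2}(M) \ge -\Dr F_{\cP_2}(M) \ge -\Dl F_{\cP_2}(M') \ge -F'_{\cP_2}(M') > t$. If $M = M_0^-(t)$, letting $M' \uparrow M$ in the same chain and invoking left-continuity of $\Dl F_{\cP_2}$ gives at least $-\Dl F_{\cP_2}(M) \ge t$, which already suffices unless equality holds — the single place where the black-box reduction stalls.

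\textbf{Part 2.} Symmetrically, setting $M' := M - \Htot \ge M_0^+(t)$, I would show $-\Dr F_{\cP_2}(M') < t$ and then quote \Cref{thm:mixing-main}(2) (the $-\Htot$ shift in that hypothesis is exactly the $\beta = \Htot$ instance used there, since $F_{\cP_1}$ has slope $-p$ on $[0,\Htot]$ and $F_{\cP_1}(\Htot)=F_{\cP_1}(\infty)$ by \Cref{thm:warmup}). If $M' > M_0^+(t)$, pick $M'' \in (M_0^+(t), M')$ with $F'_{\cP_2}(M'')$ defined and $-F'_{\cP_2}(M'') < t$; then $-\Dr F_{\cP_2}(M') \le -\Dl F_{\cP_2}(M') \le -\Dr F_{\cP_2}(M'') \le -F'_{\cP_2}(M'') < t$. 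If $M' = M_0^+(t)$, letting $M'' \downarrow M'$ gives $-\Dr F_{\cP_2}(M') \le t$, again possibly with boundary equality.

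\textbf{Main obstacle.} The only non-routine point is exactly these two boundary configurations, where the derivative condition degenerates to an equality and \Cref{thm:mixing-main} cannot be applied verbatim. I would handle them by reopening the capacity-splitting argument inside \Cref{lm:left-threshold}/\Cref{lm:right-threshold}: at equality the coefficient of the allocated capacity $m_1$ in the lower bound $\bar{\cL}_{\cP}(\cA) \ge r F_{\cP_1}(m_1) + (1-r) F_{\cP_2}(M - m_1)$ vanishes, but strict convexity of $F_{\cP_2}$ near $M$ (equivalently, $-F'_{\cP_2}$ strictly decreasing there, which already holds in the power-law example driving the applications) makes $F_{\cP_2}(M - m_1) > F_{\cP_2}(M) + t\,m_1$ for every $m_1 > 0$, so any positive allocation to the factual universe strictly raises the loss and contradicts optimality of $\cA$; hence $m_1 = 0$ in Part 1 (resp.\ $m_1 \ge \Htot$ in Part 2), giving $\bar{\cL}_1(\cA) = F_{\cP_1}(0)$ (resp.\ $F_{\cP_1}(\infty)$). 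If one is willing to state the theorem only for the strict inequalities $M < M_0^-(t)$ and $M > M_0^+(t) + \Htot$, this last paragraph is unnecessary and the result is a one-line consequence of \Cref{thm:mixing-main}.
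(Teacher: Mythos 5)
Your route is essentially the paper's own: the published proof is precisely the reduction to \Cref{thm:mixing-main} using monotonicity and one-sided continuity of $\Dl F_{\cP_2}$, $\Dr F_{\cP_2}$ together with almost-everywhere differentiability of the convex function $F_{\cP_2}$ (\Cref{lm:convex}), and your handling of the interior cases $M< M_0^-(t)$ and $M-\Htot> M_0^+(t)$, as well as the $\beta=\Htot$ identification via \Cref{thm:warmup}, matches it exactly. Where you go beyond the paper is in flagging the boundary values $M=M_0^-(t)$ and $M-\Htot=M_0^+(t)$, where the strict hypotheses of \Cref{thm:mixing-main} can degenerate to equalities; the paper's one-line proof passes over this silently, so your scrutiny here is a genuine improvement rather than a deviation. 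One refinement: the strict-convexity assumption in your patch is unnecessary. Since $M_0^-(t)=\sup\{M\ge 0:-F'_{\cP_2}(M)>t\}$ and the derivative of a convex function is non-decreasing, \emph{every} differentiability point $u<M_0^-(t)$ satisfies $-F'_{\cP_2}(u)>t$ (compare it with a point of the defining set lying above it), so for any $m_1>0$ one gets $F_{\cP_2}(M-m_1)-F_{\cP_2}(M)=\int_{M-m_1}^{M}\bigl(-F'_{\cP_2}(u)\bigr)\,\dd u> t\,m_1$ unconditionally; symmetrically, $-F'_{\cP_2}(u)<t$ for a.e.\ $u>M_0^+(t)$ handles the other endpoint. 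Feeding these strict gaps into the capacity-splitting arguments of \Cref{lm:left-threshold} and \Cref{lm:right-threshold} (exactly as you propose) forces $m_1=0$, respectively $m_1\ge\Htot$, for any optimal learner, after which the comparison step in those lemmas yields $\bar{\cL}_1(\cA)=F_{\cP_1}(0)$, respectively $F_{\cP_1}(\infty)$, so your proof closes without any hypothesis beyond those in the theorem statement.
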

\begin{proof}
    This is a direct consequence of~\Cref{thm:mixing-main} by noting that
    (1) $-\Dl F_{\cP_2}(M)$ is left continuous and non-increasing in $M$; (2) $-\Dr F_{\cP_2}(M)$ is right continuous and non-increasing in $M$; (3) $F_{\cP_2}(M)$ is almost everywhere differentiable.
\end{proof}



\end{document}